%% file: example_paper.tex
\documentclass{article}

\usepackage{microtype}
\usepackage{graphicx}
\usepackage{subcaption}
\usepackage{booktabs} % for professional tables

\usepackage{hyperref}

\usepackage[accepted]{icml2026}

\usepackage[utf8]{inputenc} % allow utf-8 input
\usepackage[T1]{fontenc}    % use 8-bit T1 fonts
\usepackage{url}            % simple URL typesetting
\usepackage{booktabs}       % professional-quality tables
\usepackage{amsfonts}       % blackboard math symbols
\usepackage{nicefrac}       % compact symbols for 1/2, etc.
\usepackage{microtype}      % microtypography
\usepackage{xcolor}         % colors
\usepackage{amsthm, amssymb, amsfonts, latexsym, mathtools}
\usepackage{comment}
\usepackage{here}
\usepackage{caption}
\usepackage{subcaption}
\usepackage{wrapfig}
\usepackage{multirow}
\usepackage{color}
\usepackage{natbib}
\usepackage{nicefrac}
\usepackage{thmtools}
\usepackage{thm-restate}

\input{math_commands.tex}

\usepackage[capitalize,noabbrev]{cleveref}

\theoremstyle{plain}
\newtheorem{theorem}{Theorem}
\newtheorem{proposition}{Proposition}
\newtheorem{lemma}{Lemma}

\theoremstyle{definition}

\newtheorem{assumption}{Assumption}
\theoremstyle{remark}

\crefname{theorem}{Theorem}{Theorems}
\crefname{lemma}{Lemma}{Lemmas}
\crefname{assumption}{Assumption}{Assumptions}
\crefname{remark}{Remark}{Remarks}
\crefname{equation}{Eq.}{Eqs.}

\usepackage[textsize=tiny]{todonotes}

\icmltitlerunning{Improved Convergence Analysis of Topology Dependence in Decentralized SGD}

\begin{document}

\twocolumn[
  \icmltitle{Improved Convergence Analysis of Topology Dependence \\ in Decentralized SGD}

  % It is OKAY to include author information, even for blind submissions: the
  % style file will automatically remove it for you unless you've provided
  % the [accepted] option to the icml2026 package.

  % List of affiliations: The first argument should be a (short) identifier you
  % will use later to specify author affiliations Academic affiliations
  % should list Department, University, City, Region, Country Industry
  % affiliations should list Company, City, Region, Country

  % You can specify symbols, otherwise they are numbered in order. Ideally, you
  % should not use this facility. Affiliations will be numbered in order of
  % appearance and this is the preferred way.
  \icmlsetsymbol{equal}{*}

  \begin{icmlauthorlist}
    \icmlauthor{Yuki Takezawa}{toyota}
    \icmlauthor{Anastasia Koloskova}{zurich}
    \icmlauthor{Sebastian U.~Stich}{cispa}
  \end{icmlauthorlist}

  \icmlaffiliation{toyota}{Toyota Motor Corporation}
  \icmlaffiliation{zurich}{University of Zurich}
  \icmlaffiliation{cispa}{CISPA Helmholtz Center for Information Security}

  \icmlcorrespondingauthor{Yuki Takezawa}{yuki\_takezawa@mail.toyota.co.jp}

  % You may provide any keywords that you find helpful for describing your
  % paper; these are used to populate the "keywords" metadata in the PDF but
  % will not be shown in the document
  \icmlkeywords{Machine Learning, ICML}

  \vskip 0.3in
]

% this must go after the closing bracket ] following \twocolumn[ ...

% This command actually creates the footnote in the first column listing the
% affiliations and the copyright notice. The command takes one argument, which
% is text to display at the start of the footnote. The \icmlEqualContribution
% command is standard text for equal contribution. Remove it (just {}) if you
% do not need this facility.

% Use ONE of the following lines. DO NOT remove the command.
% If you have no special notice, KEEP empty braces:
\printAffiliationsAndNotice{}  % no special notice (required even if empty)
% Or, if applicable, use the standard equal contribution text:
% \printAffiliationsAndNotice{\icmlEqualContribution}

\begin{abstract}
Decentralized SGD is a fundamental algorithm in decentralized learning, although the influence of an underlying network topology on its convergence behavior is not yet fully understood.
Existing convergence analyses have shown that topologies with a small spectral gap significantly deteriorate the convergence rate of Decentralized SGD in both homogeneous and heterogeneous cases. 
However, many prior papers have reported that indeed the choice of the topology has a significant experimental impact in the heterogeneous case, but has little experimental impact on training behavior in the homogeneous case.
In this paper, we present a tighter convergence analysis of Decentralized SGD, offering a more precise understanding of how topologies affect the convergence rate than the prior analysis.
Specifically, unlike existing convergence analyses that used only the spectral gap as a property of the topology, our novel analysis shows that all eigenvalues of the mixing matrix affect the convergence rate.
Throughout the experiments, we carefully evaluated the convergence behavior of Decentralized SGD and demonstrated that our novel convergence analysis can more accurately describe the effect of topology on the convergence rate.

\end{abstract}

\section{Introduction}
\label{sec:introduction}
Training machine learning models in a distributed manner has been emerging due to the large-scale training and privacy preservation \citep{mcmahan2017communication,lian2017can,kairouz2021advances}.
There are mainly two approaches for distributed learning: federated learning and decentralized learning.
Federated learning assumes that there is a central server and that all nodes are connected to the central server. However, as the number of nodes increases, federated learning suffers from a large communication cost between the central server and a large number of nodes, which becomes a bottleneck in running time.
To reduce the communication costs, decentralized learning has gained significant attention.
In decentralized learning, nodes are connected over an underlying network topology, and each node communicates with its neighbors. Thus, decentralized learning, such as Decentralized SGD \citep{lian2017can}, is more communication efficient since each node only needs to exchange parameters with a few number of nodes.

Although Decentralized SGD can reduce the communication costs for each round, its sparse communication characteristic may deteriorate the convergence rate.
Specifically, in Decentralized SGD, each node performs gossip averaging to approximate the average of all nodes' parameters.
It is important to understand how this approximation affects the convergence behavior, and there is a long line of research studying the convergence rate of Decentralized SGD \citep{lian2017can,yu2019on,koloskova2020unified,wang2021cooperative}. 
The existing convergence analysis showed that sparse topologies significantly degrade convergence rates in both homogeneous and heterogeneous cases.

However, many prior papers have reported experimental results that differ from the predictions of existing convergence analyses of Decentralized SGD~\citep{lian2017can,luo2019hop,bellet2022compensating,takezawa2023beyond,takezawa2025scalable}. See Fig.~1 in \citet{bellet2022compensating} and Fig.~7 in \citet{takezawa2023beyond} for the comprehensive experiments on the effect of topology.
The existing convergence analyses typically rely solely on the \emph{spectral gap}---defined via the second-largest eigenvalue in absolute value of the mixing matrix---as the key graph-theoretic quantity controlling convergence~\citep{lian2017can,koloskova2020unified}.
They suggest that sparse topologies (with small spectral gap) should result in significantly slower convergence, in both homogeneous and heterogeneous settings.
While this aligns with experimental observations in the heterogeneous setting where each node holds a distinct dataset, empirical results consistently show that in the homogeneous setting where all nodes have similar data, the convergence behavior of Decentralized SGD is surprisingly robust to the choice of topology.
Thus, the existing analysis failed to precisely describe the effect of topologies, especially in the homogeneous regime.

Importantly, several recent empirical studies indicate that data heterogeneity in real-world datasets is often moderate rather than extreme, and that simple federated learning methods, e.g., FedAvg \citep{mcmahan2017communication}, work pretty well on real heterogeneous datasets \citep{wang2024on}.
In this context, while homogeneous settings are often regarded as simpler cases, it remains important to accurately characterize the convergence behavior of Decentralized SGD in near-homogeneous settings.
Several follow-up studies~\citep{neglia2020decentralized,vogels2022beyond,zhu2023decentralized} have investigated the mismatch between theoretical predictions and experimental observations on the role of topology in homogeneous regimes, but none of them have provided a convergence analysis that accurately accounts for the observed insensitivity to topology.
In particular, existing analyses fail to capture the role of the full eigenvalue spectrum of the mixing matrix beyond the spectral gap.

In this paper, we develop a novel convergence analysis of Decentralized SGD that overcomes this limitation by incorporating the influence of \emph{all} eigenvalues of the mixing matrix into the convergence rate, rather than relying solely on the spectral gap.
Our new analysis applies to both (strongly-)convex and non-convex objective functions and leads to significantly improved convergence rates, especially in the near-homogeneous regime. % especially in the homogeneous regime.
We show that the conventional spectral-gap-based convergence rates are overly pessimistic for many commonly-used topologies, such as the ring and torus, and we provide theoretical and numerical evidence that our refined analysis better aligns with observed training behavior.
Importantly, our work also has practical implications: prior studies on communication-efficient topology design have focused on improving the spectral gap to accelerate convergence~\citep{chow2016expander,wang2019matcha,ying2021exponential}.
Our results suggest that the full eigenvalue spectrum plays a more nuanced role in determining convergence and may offer a more effective target for guiding topology design than the spectral gap alone.
This insight opens new future directions for developing topologies that can reconcile the convergence rate and communication efficiency.
Furthermore, since Decentralized SGD is the most fundamental method in decentralized learning, improving its analysis has the potential to improve the analysis of various other decentralized methods.

%\subsection{Contributions}
Our contributions are summarized as follows:
\begin{itemize}
    \item We develop a novel proof technique and provide a better convergence rate for Decentralized SGD than those shown in previous studies.
    \item Our analysis reveals that the convergence rate is governed by the full eigenvalue spectrum of the mixing matrix, rather than solely by the spectral gap. This leads to a more accurate characterization of the impact of topology.
    % Our novel convergence results can describe how the topology affects the convergence rate of Decentralized SGD more accurately than the prior analysis. Consequently, our novel convergence results can explain why the topologies have less experimental impact than expected from the existing convergence analysis when all nodes have similar training datasets.
    \item We provide experimental evidence showing that our theoretical predictions closely match observed behavior, explaining why sparse topologies with small spectral gaps often perform well in practice despite prior pessimistic analyses.
    % \item Throughout the experiments, we analyzed the behavior of Decentralized SGD carefully and demonstrated that the novel convergence rates we derived can more precisely capture the effect of the topology on the convergence behavior.

\end{itemize}

\paragraph{Notation:}
We write $\| \vx \|$ for Euclidean norm of vector $\vx$, $\| \mX \|_F$ for Frobenius norm of matrix $\mX$, and $\| \mX \|_\text{op}$ for operator norm. $\mathbf{1}$ is a vector with all ones, and $\mathbf{0}$ is a vector with all zeros.

\section{Problem Setup}

We consider the following problem, where the loss functions are distributed
among $n$ nodes:
\begin{align*}
    \min_{\vx \in \mathbb{R}^d} \left[ f (\vx) \coloneqq \frac{1}{n} \sum_{i=1}^n f_i (\vx) \right],
    \;
    f_i (\vx) \coloneqq \mathbb{E}_{\xi_i \sim \mathcal{D}_i} \bigl[ F_i (\vx ; \xi_i) \bigr],
\end{align*}
where $\vx$ denotes the model parameter, $f_i : \mathbb{R}^d \rightarrow \mathbb{R}$ denotes the loss function of node $i$, and $\mathcal{D}_i$ denotes the training dataset of node $i$.
Let $G = (V, E)$ be the underlying network topology, where $V$ is a set of nodes and $E$ is a set of edges.
Let $\mW \in [0, 1]^{n \times n}$ be a mixing matrix associated with $G$.
$W_{ij}$ denotes the weight of the edge $(i, j)$, and $W_{ij} > 0$ if and only if $(i, j) \in E$.

\subsection{Assumptions}
\label{sec:assumption}

In this paper, we assume that the following assumptions hold, which are commonly used for analyzing decentralized learning methods \citep{lian2017can,koloskova2020unified,yuan2022revisiting}.
We first describe the assumption on the underlying network topology.

%\begin{definition}
%Let $\lambda_i$ be the $i$-th largest eigenvalues of $\mW$.
%\end{definition}

\begin{assumption}
\label{assumption:graph}
The underlying topology is connected, and its mixing matrix $\mW \in [0, 1]^{n \times n}$ is doubly stochastic (i.e., $\mW \mathbf{1} = \mathbf{1}$ and $\mW^\top \mathbf{1} = \mathbf{1}$) and symmetric.
Then, let $\lambda_i$ denote the $i$-th largest eigenvalues of $\mW$, and $\mW$ satisfies that $1 = \lambda_1 > \lambda_2 \geq \dots \geq \lambda_n > -1$.
\end{assumption}

When \cref{assumption:graph} holds, the following inequality is satisfied.

\begin{restatable}{lemma}{spectralGapLemma}
\label{lemma:spectral_gap}
Suppose that \cref{assumption:graph} holds.
For any $\mX \in \mathbb{R}^{d \times n}$, it holds
\begin{align}
\label{eq:spectral_gap}
    \frac{1}{n} \left\| \mX \mW - \bar{\mX} \right\|^2_F
    \leq \underbrace{\max_{i \geq 2} (\lambda_i^2)}_{=: 1 - p} \frac{1}{n} \left\| \mX - \bar{\mX} \right\|^2_F,
\end{align}
where $\bar{\mX} = \tfrac{1}{n} \mX \mathbf{1}\mathbf{1}^\top$ and $\max_{i \geq 2} (\lambda_i^2) < 1$, i.e., $p \in (0, 1]$.
\end{restatable}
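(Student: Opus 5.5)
We will prove \cref{lemma:spectral_gap} by diagonalizing $\mW$ and working row by row. Since $\mW$ is symmetric, the spectral theorem gives an orthonormal eigenbasis $\vu_1, \dots, \vu_n$ of $\mathbb{R}^n$ with $\mW \vu_i = \lambda_i \vu_i$. Double stochasticity gives $\mW \mathbf{1} = \mathbf{1}$, and because $\lambda_1 = 1 > \lambda_2$ the eigenvalue $1$ is simple. We may therefore take $\vu_1 = \mathbf{1}/\sqrt{n}$. With this choice, $\mJ \coloneqq \tfrac{1}{n}\mathbf{1}\mathbf{1}^\top = \vu_1\vu_1^\top$ is the orthogonal projector onto $\mathrm{span}(\mathbf{1})$, and $\bar{\mX} = \mX \mJ$.

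First we rewrite the left-hand side. Using $\mW \mathbf{1} = \mathbf{1}$ and symmetry, $\mJ \mW = \mW \mJ = \mJ$. Hence
\begin{align*}
\mX\mW - \bar{\mX} = \mX\mW - \mX\mJ\mW = (\mX - \bar{\mX})\mW = (\mX-\bar{\mX})(\mW - \mJ),
\end{align*}
where the last equality holds because $(\mX - \bar{\mX})\mJ = \mX\mJ - \mX\mJ\mJ = 0$. Every row $\vy^\top$ of $\mY \coloneqq \mX - \bar{\mX}$ satisfies $\vy^\top \mathbf{1} = 0$, so $\vy \perp \vu_1$. Expanding $\vy = \sum_{i\geq 2} c_i \vu_i$ gives
\begin{align*}
\|\mW\vy\|^2 = \sum_{i\ge2}\lambda_i^2 c_i^2 \le \max_{i\ge2}(\lambda_i^2)\,\|\vy\|^2 .
\end{align*}
Here we use that $\mW$ is symmetric, so $\vy^\top \mW$ is the transpose of $\mW \vy$. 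Summing over the $d$ rows gives $\|\mY\mW\|_F^2 \le (1-p)\|\mY\|_F^2$. Dividing by $n$ yields \eqref{eq:spectral_gap}.

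Finally, we show $p \in (0,1]$. By \cref{assumption:graph}, every $\lambda_i$ with $i\ge2$ lies in $(-1,1)$, since $\lambda_2 < 1$ and $\lambda_n > -1$. The maximum is taken over finitely many values, so $\max_{i\ge2}\lambda_i^2 < 1$ and $p>0$. Since squares are nonnegative, $p \le 1$.

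There is no real obstacle in this argument. The only point needing care is that $\vu_1$ can be chosen proportional to $\mathbf{1}$, which relies on the simplicity of the eigenvalue $1$. Simplicity is given explicitly by $\lambda_1 > \lambda_2$ and is in any case guaranteed by connectivity.
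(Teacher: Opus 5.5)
Your proof is correct and follows essentially the same route as the paper. Both proofs rewrite $\mX\mW - \bar{\mX}$ as $(\mX - \bar{\mX})(\mW - \tfrac{1}{n}\mathbf{1}\mathbf{1}^\top)$ and bound it by $\max_{i\geq 2}\lambda_i^2$; the paper does this through $\|\mW - \tfrac{1}{n}\mathbf{1}\mathbf{1}^\top\|_\text{op}^2$ and its auxiliary eigenvalue lemma, while you write out the same fact by expanding each row of $\mX - \bar{\mX}$ in an orthonormal eigenbasis orthogonal to $\mathbf{1}$.
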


%\begin{remark}
%Note that some existing studies assume that there exists $p %\in (0, 1]$ that satisfies \cref{eq:spectral_gap}, e.g., \citep{koloskova2020unified,yuan2022revisiting}, but the existence of such $p$ follows from \cref{assumption:graph}.
%\end{remark}

The proof of \cref{lemma:spectral_gap} is deferred to \cref{sec:useful_lemmas}.
The above inequality is tight, and the equality holds when $\mX = ( \vv, \dots, \vv )^\top$ where $\vv$ is the eigenvector that corresponds to the second-largest eigenvalue in the absolute value.
$p$ is often called the spectral gap, and many prior papers analyzed the convergence rate of decentralized optimization methods by using $p$ to measure how well the underlying topology is connected \citep{lian2017can,koloskova2020unified,koloskova2021an,lin2021quasi,yuan2022revisiting,zhao2022beer,di2024double}.
Generally, in dense topologies, $p$ is large, while in sparse topologies, $p$ approaches zero.

For the loss function, we assume that the following assumptions are satisfied.
\begin{assumption}
\label{assumption:smoothness_of_full_grad}
$f_i (\cdot)$ is $L$-smooth for all $i \in \{1, 2, \dots, n\}$.
\end{assumption}

\begin{assumption}
\label{assumption:stochastic_noise_non_convex}
It holds that $\mathbb{E} [\nabla F_i (\vx ; \xi_i)] = \nabla f_i (\vx)$ for all $\vx \in \mathbb{R}^d$, and there exists $\sigma \geq 0$ that satisfies $\mathbb{E} \left\| \nabla F_i (\vx ; \xi_i) - \nabla f_i (\vx) \right\|^2 \leq \sigma^2$ for all $\vx \in \mathbb{R}^d$ and $i \in \{1, 2, \dots, n\}$.
\end{assumption}
\begin{assumption}
\label{assumption:heterogeneity_non_convex}
There exists $\zeta \geq 0$ that satisfies $\frac{1}{n} \sum_{i=1}^n \left\| \nabla f_i (\vx) - \nabla f (\vx) \right\|^2 \leq \zeta^2$ for all $\vx \in \mathbb{R}^d$,
\end{assumption}
The heterogeneity $\zeta$ measures how distinct the local loss functions $\{ f_i \}_{i=1}^n$ are and increases when each node has a different dataset.

In this paper, we analyze Decentralized SGD in both non-convex and convex cases.
For the convex case, it suffices to use \cref{assumption:stochastic_noise,assumption:heterogeneity_convex} instead of \cref{assumption:stochastic_noise_non_convex,assumption:heterogeneity_non_convex}, as shown in \citet{koloskova2020unified}.
\begin{assumption}
\label{assumption:convex}
For all $i \in \{1, 2, \dots, n\}$, $f_i$ is $\mu$-strongly convex with $\mu \geq 0$.
\end{assumption}

\begin{assumption}
\label{assumption:smoothness}
$F_i (\cdot, \xi_i)$ is $L$-smooth for all $i \in \{1, 2, \dots, n\}$.
\end{assumption}

\begin{assumption}
\label{assumption:stochastic_noise}
Let $\vx^\star \in \argmin_{\vx\in\mathbb{R}^d} f (\vx)$. It holds that $\mathbb{E} [\nabla F_i (\vx ; \xi_i)] = \nabla f_i (\vx)$ for all $\vx \in \mathbb{R}^d$, and there exists $\sigma_\star \geq 0$ that satisfies $\mathbb{E} \left\| \nabla F_i (\vx^\star ; \xi_i) - \nabla f_i (\vx^\star)\right\|^2 \leq \sigma^2_\star$ for all $i \in \{1, 2, \dots, n\}$.
\end{assumption}

\begin{assumption}
\label{assumption:heterogeneity_convex}
Let $\vx^\star \in \argmin_{\vx\in\mathbb{R}^d} f (\vx)$. Then, there exists $\zeta_\star \geq 0$ that satisfies $\frac{1}{n} \sum_{i=1}^n \left\| \nabla f_i (\vx^\star) \right\|^2 \leq \zeta^2_\star$.
\end{assumption}
\Cref{assumption:stochastic_noise_non_convex,assumption:heterogeneity_non_convex} suppose that the heterogeneity and stochastic gradient noise are bounded at all parameters, whereas \cref{assumption:heterogeneity_convex,assumption:stochastic_noise} consider only the bound at an optimal parameter, ensuring that $\zeta_\star \leq \zeta$ and $\sigma^\star \leq \sigma$.
Note that unlike $\zeta$, $\zeta_\star = 0$ does not necessarily mean that $f_i = f$ for all $i$.

\subsection{Decentralized SGD}
\label{sec:decentralized_sgd}

One of the most basic decentralized optimization methods is Decentralized SGD \citep{lian2017can}.
In Decentralized SGD, node $i$ updates its parameter $\vx_i \in \mathbb{R}^d$ as follows:
\begin{align}
\label{eq:decentralized_sgd}
    \vx_i^{(r+1)} = \sum_{j=1}^n W_{ij} \left( \vx_j^{(r)} - \eta \nabla F_j (\vx_j^{(r)} ; \xi_j^{(r)}) \right),
\end{align}
where $\eta > 0$ is the stepsize, and $\xi_j^{(r)}$ corresponds to the minibatch sampled from $\mathcal{D}_j$ in node $j$.
Many prior papers have analyzed the convergence rate of Decentralized SGD \citep{yu2019on,koloskova2020unified,le2023refined}. 
Under the assumptions we discussed in \cref{sec:assumption}, the following results are the best convergence rates among the previous studies.

\begin{proposition}[\citet{koloskova2020unified}]
\label{proposition:prior}
Consider the algorithm shown in \cref{eq:decentralized_sgd}.
%Suppose that $\vx_i^{(0)} = \bar{\vx}^{(0)}$ for all $i$.
Suppose that $\{ \vx_i^{(0)} \}_{i=1}^n$ are initialized to the same value.

\textbf{Strongly-convex Case:} Suppose that \cref{assumption:convex,assumption:stochastic_noise,assumption:smoothness,,assumption:graph,assumption:heterogeneity_convex} hold with $\mu > 0$. Then, there exists $\eta$ such that
\begin{align*}
    \!\!\! &\frac{1}{W_R} \sum_{r=0}^{R-1} w_r ( \mathbb{E} f(\bar{\vx}^{(r)}) - f (\vx^\star)) \\
    \!\!\! &\leq \tilde{\mathcal{O}} \Biggl( 
    \frac{\sigma^2_\star}{n \mu R} 
    \! + \! \left( \frac{\sigma_\star^2}{p} \! + \! \frac{\zeta^2_\star}{p^2} \right) \frac{L (1 \! - \! p)}{\mu^2 R^2} 
    \! + \! \frac{L r_0}{p} \exp \left[ - \frac{\mu R p}{L}\right] \Biggr),
\end{align*}
where $w_r \coloneqq (1 - \tfrac{\mu \eta}{2})^{-(r+1)}$, $W_R \coloneqq \sum_{r=0}^{R-1} w_r$, $\bar{\vx}^{(r)} \coloneqq \tfrac{1}{n} \sum_{i=1}^n \vx_i^{(r)}$ and $r_0 \coloneqq \| \bar{\vx}^{(0)} - \vx^\star \|^2$.

\textbf{Convex Case:} Suppose that \cref{assumption:convex,assumption:stochastic_noise,assumption:smoothness,,assumption:graph,assumption:heterogeneity_convex} hold. Then, there exists $\eta$ such that
\begin{align*}
    &\frac{1}{R+1} \sum_{r=0}^{R} ( \mathbb{E} f(\bar{\vx}^{(r)}) - f (\vx^\star)) \\
    &\leq \mathcal{O}\left( 
    \sqrt{\frac{r_0 \sigma^2_\star}{n R}} 
    \! + \! \left( \left( \frac{\sigma^2_\star}{p} + \frac{\zeta^2_\star}{p^2} \right) \frac{(1 - p) L r_0^2 }{R^2} \right)^\frac{1}{3} 
    \!\!\!+ \! \frac{L r_0}{R p}\right),
\end{align*}
where $\bar{\vx}^{(r)} \coloneqq \tfrac{1}{n} \sum_{i=1}^n \vx_i^{(r)}$ and $r_0 \coloneqq \| \bar{\vx}^{(0)} - \vx^\star \|^2$.

\textbf{Non-convex Case:} Suppose that \cref{assumption:stochastic_noise_non_convex,assumption:smoothness_of_full_grad,,assumption:graph,assumption:heterogeneity_non_convex} hold. Then, there exists $\eta$ such that
\begin{align*}
    \!\!\! &\frac{1}{R+1} \sum_{r=0}^R \mathbb{E} \left\| \nabla f(\bar{\vx}^{(r)}) \right\|^2 \\
    \!\!\! &\leq \mathcal{O} \left( \sqrt{\frac{L \sigma^2 F_0}{n R}}
    \! + \! \left( \left( \frac{\sigma^2}{p} \! + \! \frac{\zeta^2}{p^2} \right) \frac{(1 - p) L^2 F_0^2}{R^2} \right)^\frac{1}{3}
    \!\!\! + \! \frac{L F_0}{Rp}\right) \!,
\end{align*}
where $\bar{\vx}^{(r)} \coloneqq \tfrac{1}{n} \sum_{i=1}^n \vx_i^{(r)}$ and $F_0 \coloneqq f (\bar{\vx}^{(0)}) - \min_{\vx \in \mathbb{R}^d} f (\vx)$.
\end{proposition}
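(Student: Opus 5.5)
This is the result of \citet{koloskova2020unified}, and I would reproduce their argument in the special case of a fixed mixing matrix, no local steps, and $\tau=1$. Throughout I write the iterates as $\mX^{(r)}\in\mathbb{R}^{d\times n}$, so the update is $\mX^{(r+1)} = (\mX^{(r)} - \eta \nabla \mF(\mX^{(r)};\xi^{(r)}))\mW$. Because $\mW$ is doubly stochastic, the average evolves as $\bar{\vx}^{(r+1)} = \bar{\vx}^{(r)} - \frac{\eta}{n}\sum_i \nabla F_i(\vx_i^{(r)};\xi_i^{(r)})$. This is SGD on $f$, except that gradients are evaluated at the perturbed points $\vx_i^{(r)}$. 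The whole proof therefore consists of two coupled recursions.

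\textbf{Descent recursion.} In the convex case I expand $\|\bar{\vx}^{(r+1)}-\vx^\star\|^2$ and use $L$-smoothness of each $F_i$ (\cref{assumption:smoothness}) together with $\mu$-convexity. This gives, for $\eta\le \frac{1}{cL}$,
\begin{align*}
\mathbb{E}\|\bar{\vx}^{(r+1)}-\vx^\star\|^2 \le (1-\tfrac{\mu\eta}{2})\mathbb{E}\|\bar{\vx}^{(r)}-\vx^\star\|^2 - \eta\, e_r + \tfrac{\eta^2\sigma_\star^2}{n} + c\eta L \Xi_r,
\end{align*}
where $e_r=\mathbb{E}f(\bar{\vx}^{(r)})-f^\star$ and $\Xi_r = \frac1n\mathbb{E}\|\mX^{(r)}-\bar{\mX}^{(r)}\|_F^2$ is the consensus distance. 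In the non-convex case I instead apply the smoothness descent lemma to $f(\bar{\vx}^{(r)})$. This gives $\mathbb{E}f(\bar{\vx}^{(r+1)}) \le \mathbb{E}f(\bar{\vx}^{(r)}) - \frac{\eta}{4}\mathbb{E}\|\nabla f(\bar{\vx}^{(r)})\|^2 + \frac{L\eta^2\sigma^2}{n} + c\eta L^2\Xi_r$.

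\textbf{Consensus recursion.} I apply \cref{lemma:spectral_gap} to $\mX^{(r)}-\eta\nabla\mF$, and then use Young's inequality with parameter $p/2$ to get
\begin{align*}
\Xi_{r+1}\le (1-\tfrac p2)\Xi_r + \tfrac{c(1-p)\eta^2}{p}\tfrac1n\mathbb{E}\|\nabla \mF(\mX^{(r)})-\overline{\nabla\mF}\|_F^2 + (1-p)\eta^2\sigma^2 .
\end{align*}
I bound the gradient-deviation term by $L^2\Xi_r + \zeta^2$ in the non-convex case. In the convex case I bound it by $L^2\Xi_r + L e_r + \zeta_\star^2$ plus noise at $\vx^\star$, using the standard $\|\nabla F_i(\vx)-\nabla F_i(\vx^\star)\|^2\le 2L(F_i(\vx)-F_i(\vx^\star)-\langle\nabla F_i(\vx^\star),\vx-\vx^\star\rangle)$. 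Taking $\eta\lesssim p/L$ absorbs the $L^2\Xi_r$ term into the contraction. Unrolling from $\Xi_0=0$, which holds by the identical initialization, gives
\begin{align*}
\Xi_r \lesssim (1-p)\eta^2\Bigl(\tfrac{\sigma^2}{p}+\tfrac{\zeta^2}{p^2}\Bigr)+\tfrac{(1-p)\eta^2 L}{p}\sum_{s<r}(1-\tfrac p4)^{r-s}e_s .
\end{align*}
The factor $(1-p)$ should be tracked explicitly throughout, so that it appears in the final rate.

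\textbf{Combining and tuning.} I sum the descent recursion with weights $w_r$ (or uniform weights when $\mu=0$) and insert the bound on $\Xi_r$. The main obstacle is the feedback term involving $e_s$ inside $\Xi_r$. I would handle it as Koloskova et al.\ do: choose $\eta \le p/(cL)$ so that its weighted sum is at most $\frac12\sum_r w_r e_r$. This step needs $w_r\le (1+\frac p8)w_{r-s}$-type control on the weights, and that is why the weights $(1-\mu\eta/2)^{-(r+1)}$ with $\mu\eta\le p/8$ are used. The result has the form
\begin{align*}
\Psi_R \le \frac{r_0}{\eta W_R}+\frac{\eta\sigma^2}{n}+\eta^2 L(1-p)\Bigl(\tfrac{\sigma^2}{p}+\tfrac{\zeta^2}{p^2}\Bigr),
\end{align*}
with $\eta\le p/(cL)$. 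Finally I apply the standard stepsize-tuning lemmas: the exponential-weights lemma in the strongly convex case, which produces the $\tilde{\mathcal{O}}$ and $\exp[-\mu Rp/L]$ terms, and the $\min$ over $\eta$ of $a/(\eta R)+b\eta+c\eta^2$ in the other two cases. These yield exactly the three stated rates.
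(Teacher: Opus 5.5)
Your proposal is correct. It is essentially the argument of \citet{koloskova2020unified} specialized to a fixed $\mW$ with no local steps. The paper does not reprove this proposition but imports it, and its proof of \cref{theorem:ours} reuses exactly your skeleton, only replacing your one-level consensus recursion with the hierarchy $\Xi^{(r,k)}$. That skeleton is the cited descent lemmas, Young's inequality at parameter $p/2$ in the consensus step, absorption of the $\mathbb{E}f(\bar{\vx}^{(s)})-f^\star$ feedback via $\eta\lesssim p/L$ and slowly growing weights, and the cited stepsize-tuning lemmas. One thing to tighten: the weight condition you need is $w_{r+1}\le(1+\tfrac{p}{8})w_r$, i.e.\ $w_r\le(1+\tfrac{p}{8})^{s}w_{r-s}$ rather than $(1+\tfrac p8)w_{r-s}$, so that $(1+\tfrac p8)^s(1-\tfrac p4)^s\le(1-\tfrac p8)^s$ sums to $\mathcal{O}(1/p)$.
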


In both convex and non-convex cases, the spectral gap $p$ appears in the denominator in the second and third terms.
For instance, when the topology is a complete graph, torus, and ring, $p$ are $1$, $\Omega(n^{-1})$, $\Omega(n^{-2})$, respectively \citep{nedic2018network}.
Since $p$ approaches zero as the topology becomes sparse, \cref{proposition:prior} shows that it requires more iterations as the topology becomes sparse in both homogeneous and heterogeneous cases.
However, the empirical results reported by many prior papers do not align with this prediction.
Many previous studies have experimentally shown that when nodes have similar training datasets (i.e., $\zeta \approx 0$), the training performance is not significantly affected by the topology, even when using a sparse topology such as a ring \citep{lian2017can,bellet2022compensating,takezawa2023beyond,takezawa2025scalable}.
In contrast, when each node has different datasets (i.e., $\zeta \gg 0$), the choice of topologies significantly affects the training behavior.
These observations suggest that the existing convergence analysis does not fully capture the practical training dynamics, particularly when $\zeta \approx 0$ and $\zeta_\star \approx 0$.

\section{Related Work}
\label{sec:related_work_about_effect_of_topology}

\paragraph{Decentralized Optimization:}
One of the most basic decentralized learning methods is Decentralized SGD \citep{lian2017can}, and there is a long line of research attempting to improve this method.
\citet{lian2018asynchrnous,even2024asynchronous} studied asynchronous decentralized learning methods.
\citet{nedic2016stochastic,assran2019stochastic} studied decentralized learning methods where the underlying network is a directed graph.
\citet{lorenzo2016next,nedic2017achieving,tang2018d2,vogels2021relaysum,lu2021optimal,di2024double} proposed decentralized learning methods that are robust to data heterogeneity.
\citet{li2020communication,sun2022distributed,tian2022acceleration} tried to reduce the communication complexity by utilizing the similarity of loss functions that each node has.
To further reduce the communication costs, \citet{tang2018communication,koloskova2020decentralized,lu2020moniqua,zhao2022beer,islamov2025towards} developed communication compression methods.
\citet{chow2016expander,wang2019matcha,ying2021exponential,song2022communicationefficient,ding2023decentralized,takezawa2023beyond,you2024bary} developed topologies that reconcile the communication-efficiency and spectral gap to improve the convergence rate of Decentralized SGD.

\paragraph{Effect of Topology on Decentralized SGD:}
One of the most unique procedures of Decentralized SGD is gossip averaging, and it is important to understand how topologies affect the convergence rate.
%The previous papers used the spectral gap $p$ to measure how well the topology is connected, showing that the choice of the topology has a substantial impact in both heterogeneous and homogeneous settings.
%However, many prior papers have shown experimentally that the topology significantly affects the convergence behavior in the heterogeneous case, but has little effect in the homogeneous case.
%Thus, a significant gap exists between theory and practice, and many prior studies have tried to alleviate this issue \citep{neglia2020decentralized,vogels2022beyond,zhu2023decentralized}.
However, as we discussed in \cref{sec:decentralized_sgd}, a significant gap exists between theory and practice, and many prior studies have tried to alleviate this issue \citep{neglia2020decentralized,vogels2022beyond,zhu2023decentralized}.
\citet{neglia2020decentralized} is the most relevant to our study. 
They analyzed the convergence rate of Decentralized SGD and showed the effect of eigenvalues other than the spectral gap in the convex case. 
However, their convergence rates are suboptimal, especially in terms of the number of nodes.
\citet{vogels2022beyond} introduced the novel notion called the effective number of neighbors instead of the spectral gap, analyzing the convergence rate by using the effective number of neighbors. 
However, they focused on studying the stepsize condition for Decentralized SGD, and the final convergence rate they showed cannot alleviate the gap between theory and experiments that we discussed in \cref{sec:decentralized_sgd}. 
%\citet{zhu2023decentralized} discovered the relationship between Decentralized SGD and smoothness-aware optimization methods, showing that sparse topologies may improve the generalization performance.
\citet{le2023refined,dandi2022data} introduced a different heterogeneity assumption that depends on the topology, but they cannot improve the convergence rate in the homogeneous case and do not explain why Decentralized SGD is robust to the choice of topology.

\section{Improved Convergence Results of Decentralized SGD}
We now present our novel convergence results of Decentralized SGD and compare them with the existing convergence results.
The proofs are deferred to \cref{sec:proof,sec:proof_iid}, and the proof sketch is provided in the next \cref{sec:proof_sketch}.

\subsection{Main Theorems}

\begin{theorem}
\label{theorem:ours}
Consider the algorithm shown in \cref{eq:decentralized_sgd}.
%Suppose that $\vx_i^{(0)} = \bar{\vx}^{(0)}$ for all $i$.
Suppose that $\{ \vx_i^{(0)} \}_{i=1}^n$ are initialized to the same value.

\textbf{Strongly-convex Case:}  Suppose that \cref{assumption:convex,assumption:stochastic_noise,assumption:smoothness,,assumption:graph,assumption:heterogeneity_convex} hold. Then, there exists $\eta$ such that $\tfrac{1}{W_R} \sum_{r=0}^{R-1} w_r ( \mathbb{E} f(\bar{\vx}^{(r)}) - f (\vx^\star))$ is bounded from above by
\begin{align*}
    \tilde{\mathcal{O}} \Biggl(  
    &\frac{\sigma^2_\star}{n \mu R} 
    + \left( \frac{\sigma_\star^2}{n} \sum_{i=2}^n \frac{\lambda_i^2}{1 - \lambda_i^2} + \frac{(1 - p) \zeta_\star^2}{p^2} \right) \frac{L}{\mu^2 R^2} \\ 
    &\quad + \frac{L r_0}{p} \exp \left[ - \frac{\mu p R}{L} \right] \Biggr),
\end{align*}
where $w_r \coloneqq (1 - \tfrac{\mu \eta}{2})^{-(r+1)}$, $W_R \coloneqq \sum_{r=0}^{R-1} w_r$, $\bar{\vx}^{(r)} \coloneqq \tfrac{1}{n} \sum_{i=1}^n \vx_i^{(r)}$ and $r_0 \coloneqq \| \bar{\vx}^{(0)} - \vx^\star \|^2$.

\textbf{Convex Case:} Suppose that \cref{assumption:convex,assumption:stochastic_noise,assumption:smoothness,,assumption:graph,assumption:heterogeneity_convex} hold. Then, there exists $\eta$ such that $\tfrac{1}{R+1} \sum_{r=0}^{R} ( \mathbb{E} f(\bar{\vx}^{(r)}) - f (\vx^\star))$ is bounded from above by
\begin{align*}
    \mathcal{O}\Biggl( 
    &\sqrt{\frac{r_0 \sigma^2_\star}{n R}} 
    + \left( \left( \frac{\sigma^2_\star}{n} \sum_{i=2}^n \frac{\lambda_i^2}{1 - \lambda_i^2} + \frac{(1 - p) \zeta^2_\star}{p^2} \right) \frac{L r_0^2}{R^2} \right)^\frac{1}{3}  \\
    &\quad + \frac{L r_0}{R p}\Biggr),
\end{align*}
where $\bar{\vx}^{(r)} \coloneqq \tfrac{1}{n} \sum_{i=1}^n \vx_i^{(r)}$ and $r_0 \coloneqq \| \bar{\vx}^{(0)} - \vx^\star \|^2$.

\textbf{Non-convex Case:} Suppose that \cref{assumption:stochastic_noise_non_convex,assumption:smoothness_of_full_grad,assumption:graph,assumption:heterogeneity_non_convex} holds. Then, there exists $\eta$ such that $\frac{1}{R+1} \sum_{r=0}^R \mathbb{E} \left\| \nabla f(\bar{\vx}^{(r)}) \right\|^2$ is bounded from above by
\begin{align*}
    \mathcal{O} \Biggl( &\sqrt{\frac{L \sigma^2 F_0}{n R}}
    + \left( \! \left( \frac{\sigma^2}{n} \sum_{i=2}^n \frac{\lambda_i^2}{1 - \lambda_i^2} + \frac{(1-p) \zeta^2}{p^2} \right) \frac{L^2 F_0^2}{R^2} \right)^\frac{1}{3} \\
    &\quad + \frac{L F_0}{Rp}\Biggr),
\end{align*}
where $\bar{\vx}^{(r)} \coloneqq \tfrac{1}{n} \sum_{i=1}^n \vx_i^{(r)}$ and $F_0 \coloneqq f (\bar{\vx}^{(0)}) - \min_{\vx \in \mathbb{R}^d} f (\vx)$.
\end{theorem}

In the homogeneous case, we can further improve the convergence rate as follows:

\begin{proposition}
\label{proposition:ours_iid}
Consider the algorithm shown in \cref{eq:decentralized_sgd}.
Suppose that $\{ \vx_i^{(0)} \}_{i=1}^n$ are initialized to the same value.

\textbf{Convex Case:} Suppose that \cref{assumption:convex,assumption:stochastic_noise,assumption:smoothness,,assumption:graph} hold and $f_i = f_j$ for all $i$ and $j$. Then, there exists $\eta$ such that $\tfrac{1}{R+1} \sum_{r=0}^{R} ( \mathbb{E} f(\bar{\vx}^{(r)}) - f (\vx^\star))$ is bounded from above by
\begin{align*}
    \mathcal{O}\Biggl( 
    \min\Biggl\{ &\sqrt{\frac{r_0 \sigma^2_\star}{n R}} 
    + \left( \left( \frac{1}{n} \sum_{i=2}^n \frac{\lambda_i^2}{1 - \lambda_i^2} \right) \frac{L r_0^2 \sigma^2_\star}{R^2} \right)^\frac{1}{3} 
    \!\!\! + \frac{L r_0}{R p},  \\
   &\sqrt{\frac{r_0 \sigma^2_\star}{R}}
    + \frac{L r_0}{R} \Biggr\} \Biggr),
\end{align*}
where $\bar{\vx}^{(r)} \coloneqq \tfrac{1}{n} \sum_{i=1}^n \vx_i^{(r)}$ and $r_0 \coloneqq \| \bar{\vx}^{(0)} - \vx^\star \|^2$.
\end{proposition}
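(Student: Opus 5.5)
The bound is a minimum of two rates, so I will prove each separately and take the smaller. The first rate is the convex case of \cref{theorem:ours} with $\zeta_\star = 0$. If $f_i = f$ for all $i$, then $\nabla f_i(\vx^\star) = \nabla f(\vx^\star) = \mathbf{0}$, so \cref{assumption:heterogeneity_convex} holds with $\zeta_\star = 0$. Substituting this into the convex bound of \cref{theorem:ours} removes the $(1-p)\zeta_\star^2/p^2$ term and gives exactly the first expression. This part needs no new work.

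The second rate must hold without any dependence on $p$, so I need a separate analysis that never controls consensus through \cref{lemma:spectral_gap}. The plan is to compare Decentralized SGD with independent local SGD. I track the averaged distance $\frac{1}{n}\sum_i \|\vx_i^{(r)} - \vx^\star\|^2$ rather than $\|\bar{\vx}^{(r)} - \vx^\star\|^2$. Write $\vy_j^{(r)} = \vx_j^{(r)} - \eta \nabla F_j(\vx_j^{(r)};\xi_j^{(r)})$. Then $\vx_i^{(r+1)} = \sum_j W_{ij}\vy_j^{(r)}$ is a convex combination, and $\mW$ is doubly stochastic, so Jensen gives
\begin{align*}
\frac{1}{n}\sum_i \|\vx_i^{(r+1)} - \vx^\star\|^2 \le \frac{1}{n}\sum_j \|\vy_j^{(r)} - \vx^\star\|^2 .
\end{align*}
Gossip averaging therefore never increases the potential. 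Each $\|\vy_j^{(r)} - \vx^\star\|^2$ is a single SGD step on $f$, since $f_j = f$ and $\nabla f(\vx^\star) = \mathbf{0}$. The standard one-step SGD bound uses convexity, $L$-smoothness of $F_j(\cdot,\xi)$ from \cref{assumption:smoothness}, and the decomposition $\mathbb{E}\|\nabla F_j(\vx;\xi)\|^2 \le 4L(f(\vx)-f(\vx^\star)) + 2\sigma_\star^2$ from \cref{assumption:stochastic_noise}. For $\eta \le 1/(4L)$ it yields
\begin{align*}
\mathbb{E}\|\vy_j^{(r)} - \vx^\star\|^2 \le \|\vx_j^{(r)} - \vx^\star\|^2 - \eta\,(f(\vx_j^{(r)}) - f(\vx^\star)) + 2\eta^2\sigma_\star^2 .
\end{align*}

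Next I average over $j$, sum over $r$, and telescope. Since the nodes start at a common point, the initial potential equals $r_0$. This gives $\frac{1}{R+1}\sum_r \frac{1}{n}\sum_j (\mathbb{E} f(\vx_j^{(r)}) - f(\vx^\star)) \le \frac{r_0}{\eta(R+1)} + 2\eta\sigma_\star^2$. Because $f$ is convex, $f(\bar{\vx}^{(r)}) \le \frac{1}{n}\sum_j f(\vx_j^{(r)})$, so the same bound holds for the quantity in the statement. Finally I choose $\eta = \min\{1/(4L), \sqrt{r_0/(\sigma_\star^2 R)}\}$, which gives $\mathcal{O}(\sqrt{r_0\sigma_\star^2/R} + L r_0/R)$.

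The main obstacle is the second branch, because the usual route through consensus distance would bring $p$ back into the bound. The Jensen contraction of the per-node potential is the key step that avoids this, and I expect the rest to be routine. The only delicate point is the gradient-variance decomposition at a point other than $\vx^\star$. That decomposition needs smoothness of $F_j(\cdot,\xi)$ itself, not just of $f_j$, which is why \cref{assumption:smoothness} appears in the hypotheses.
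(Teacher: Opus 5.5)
Your proposal is correct and follows essentially the same route as the paper. The first branch is \cref{theorem:ours} with $\zeta_\star = 0$. The second branch matches \cref{lemma:no_linear_speedup_rate}: it tracks $\frac{1}{n}\|\mX^{(r)} - \mX^\star\|_F^2$, uses that gossip does not increase it (the paper via $\mX^\star \mW = \mX^\star$ and $\|\mW\|_\text{op} = 1$, you via Jensen and double stochasticity, giving the same inequality), then runs a one-step SGD bound, telescopes, tunes $\eta$, and finishes with $f(\bar{\vx}) \le \frac{1}{n}\sum_i f(\vx_i)$. One small caveat: the bound $\mathbb{E}\|\nabla F_j(\vx;\xi) - \nabla F_j(\vx^\star;\xi)\|^2 \le 2L(f(\vx) - f(\vx^\star))$ needs convexity of each $F_j(\cdot;\xi)$, not only smoothness, but the paper's own $T_2$ bound relies on the same implicit assumption.
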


\subsection{Discussion}

\paragraph{$\tfrac{1}{n} \sum_{i=2}^n \tfrac{\lambda_i^2}{1 - \lambda_i^2}$ Term:}
By comparing \cref{theorem:ours} with \cref{proposition:prior}, the second terms are different.
In \cref{proposition:prior}, only the spectral gap $p (\coloneqq 1 - \max_{i \geq 2} (\lambda_i^2))$ is used to measure the property of the topology. In contrast, \cref{theorem:ours} shows that eigenvalues $\lambda_2, \lambda_3, \dots, \lambda_n$ play an important role in the convergence rate and can more precisely describe how the topology affects the convergence rate of Decentralized SGD.
From the definition of $p$ in \cref{lemma:spectral_gap}, we have
\begin{align}
    \frac{1 - p}{p} = \max_{i \geq 2} \left( \frac{\lambda_i^2}{1 - \lambda_i^2} \right) 
    \geq \frac{1}{n} \sum_{i=2}^n \frac{\lambda_i^2}{1 - \lambda_i^2}.
\end{align}
Thus, the convergence rates shown in \cref{theorem:ours,proposition:ours_iid} are better than in \cref{proposition:prior}.
In the above inequality, the equality holds only when $\lambda_2 = \dots = \lambda_n = 0$, i.e., complete graph, and as will be shown in \cref{sec:numerical_comparison}, in commonly-used topologies, such as a ring and torus, $\tfrac{1}{n} \sum_{i=2}^n (\nicefrac{\lambda_i^2}{1 - \lambda_i^2})$ is considerably smaller than $\nicefrac{(1 - p)}{p}$.
Thus, in the near-homogeneous case, i.e., $\zeta \approx 0$ or $\zeta_\star \approx 0$, the convergence rate does not deteriorate much even if a sparse topology such as a ring is used as an underlying network.
This is also reflected in the comparison shown in \cref{table:transient_iteration}, where \cref{theorem:ours} yields a smaller transient iteration compared to \cref{proposition:prior}. In particular, the improvements are significant for sparse topologies such as the ring.
This can provide insight into the experimental results observed in \citet{lian2017can,bellet2022compensating,takezawa2023beyond}, where the impact of topology on convergence rate was less significant than predicted by existing analyses when nodes have similar datasets.

In contrast, the coefficients of the heterogeneity $\zeta$ and $\zeta_\star$ still depend on $p$.
As will be explained in \cref{sec:proof_sketch} as a proof sketch, reducing $\nicefrac{(1-p)}{p}$ to $\tfrac{1}{n} \sum_{i=2}^n (\nicefrac{\lambda_i^2}{1 - \lambda_i^2})$ is only possible with respect to $\sigma$. 
While the coefficient of $\zeta$ and $\zeta_\star$, $\nicefrac{(1 - p)}{p^2}$, may be marginally improved under alternative heterogeneity assumptions considered by \citet{le2023refined,dandi2022data}, the dependence on the spectral gap appears to be inevitable.
Thus, this reflects a fundamental limitation inherent to Decentralized SGD, not a weakness of our analysis.
This aligns well with the experimental observation shown in previous studies \citep{bellet2022compensating,takezawa2023beyond}, which indicates that the choice of topologies significantly affects the training performance when data distributions vary across nodes.

While the improvement offered by our analysis is modest when $\zeta \gg 0$, we wish to emphasize that understanding the convergence behavior in the near-homogeneous regime remains valuable.
For instance, \citet{wang2024on} showed that real-world datasets are often not as heterogeneous as theoretically assumed, implying that understanding the convergence rate in the near-homogeneous regime is practically important.
Moreover, in overparameterized settings where models can perfectly interpolate their data \citep{ma2018power,loizou2021stochastic}, $\zeta_\star$ can be zero even if each node has different datasets, underscoring the significance of analyzing convergence in the near-homogeneous case.

\begin{table*}[t]
\caption{Comparison of transient iterations in the non-convex and homogeneous case. Transient iteration is defined as the number of rounds $R$ required so that $\mathcal{O} (\sqrt{\nicefrac{L \sigma^2 F_0}{n R}})$ becomes dominant in the convergence rate \citep{ying2021exponential}. The value of $\tfrac{1}{n} \sum_{i=2}^n (\nicefrac{\lambda_i^2}{1 - \lambda_i^2})$ is numerically estimated from \cref{fig:comparision_between_spectral_gap_and_average_spectral_gap}. It can be observed that our \cref{theorem:ours} improves transient iteration compared to the previous analysis. See \cref{sec:derivation_of_transient_iterations} for the derivation.}
\label{table:transient_iteration}
\centering
\begin{tabular}{lccc}
\toprule
      & Ring & Torus & Hypercube \\
\midrule
\Cref{proposition:prior} \citep{koloskova2020unified}  & $\mathcal{O} (n^7)$ & $\mathcal{O} (n^5)$ & 
$\mathcal{O} (n^3 \log^2 (n))$ \\
\Cref{theorem:ours} (\textbf{ours})      & $\mathcal{O} (n^5)$ & $\mathcal{O} (n^3 \log^2 (n))$ & $\mathcal{O} (n^3)$ \\
\bottomrule
\end{tabular}
\end{table*}

\paragraph{$\sqrt{\tfrac{r_0 \sigma^2}{R}} + \tfrac{L r_0}{R}$ Term:}
We next focus on the homogeneous case and compare \cref{proposition:ours_iid} with \cref{proposition:prior}.
When the underlying topology is disconnected, we have $p=0$ and $\lambda_2 = 1$.
\Cref{proposition:prior} showed that Decentralized SGD does not converge in both convex and non-convex cases, whereas \cref{proposition:ours_iid} showed that Decentralized SGD converges to the optimal solution in the convex and homogeneous case.
When $f$ is convex, we have $f (\bar{\vx}) \leq \tfrac{1}{n} \sum_{i=1}^n f (\vx_i)$.
Thus, Decentralized SGD cannot achieve the linear speedup, but it can converge to the optimal solution with the same convergence rate as SGD.
A similar discussion can be found in the studies on Local SGD \citep{woodworth2020is,glasgow2022sharp,patel2024the}.
Note that this property is specific to convex functions; in the non-convex case, such convergence is not guaranteed, and Decentralized SGD may fail to converge since different nodes can converge to different stationary points, and the average parameter is not necessarily a stationary point.

\paragraph{Comparison with Other Analyses of Decentralized SGD:}
%As we discussed in \cref{sec:related_work_about_effect_of_topology}, 
Several prior papers have attempted to capture the influence of topologies using quantities other than the spectral gap \citep{neglia2020decentralized,vogels2022beyond}.
In the following, we compare our results with theirs.
\cref{sec:detailed_comparison} shows their convergence rates and offers a more detailed discussion.
The paper most closely related to ours is \citet{neglia2020decentralized}, which showed the rate depending on all eigenvalues of $\mW$.
However, they used the strong assumption that the stochastic gradient is bounded instead of assuming that $F_i$ is smooth, and their results failed to improve the convergence rate shown in \cref{proposition:prior}.
\citet{vogels2022beyond} used the novel notion called the effective number of neighbors instead of the spectral gap and analyzed the convergence rate.
However, the effective number of neighbors depends on a hyperparameter, and it is unclear whether their rate is better than the rate shown in \cref{proposition:prior}.
Therefore, \cref{theorem:ours} is the first result that improves the well-known convergence rate shown in \cref{proposition:prior} without using the additional assumptions, thereby capturing more precisely how topologies influence the convergence rate.

\section{Proof Sketch}
\label{sec:proof_sketch}
In this section, we provide the proof sketch for the non-convex case of \cref{theorem:ours}.
In the following, we use the notations $\mX \in \mathbb{R}^{d \times n}$, $\nabla F (\mX; \xi) \in \mathbb{R}^{d \times n}$, and $\nabla F (\mX) \in \mathbb{R}^{d \times n}$ defined as follows:
\begin{align*}
    \mX^{(r)} &\coloneqq \! \left( \vx_1^{(r)}, \dots, \vx_n^{(r)} \right), \\
    \!\! \nabla F (\mX^{(r)} ; \xi^{(r)}) &\coloneqq \! \left( \nabla F_1 (\vx_1^{(r)} ; \xi_1^{(r)}), \dots, \nabla F_n (\vx_n^{(r)} ; \xi_n^{(r)}) \right), \\
    \nabla F (\mX^{(r)}) &\coloneqq \! \left( \nabla f_1 (\vx_1^{(r)}), \dots, \nabla f_n (\vx_n^{(r)}) \right).
\end{align*}

\paragraph{Key Lemma:}
The most important lemma for improving the convergence rate is the following.
\begin{restatable}{lemma}{averageSpectralGapLemma}
\label{lemma:average_spectral_gap_non_convex_main}
Suppose that \cref{assumption:stochastic_noise_non_convex,assumption:graph} hold. It holds that
\begin{align}
\label{eq:average_spectral_gap_non_convex_main}
    &\frac{1}{n} \mathbb{E} \left\| \left( \nabla F (\mX^{(r)} ; \xi^{(r)}) - \nabla F (\mX^{(r)}) \right) \left( \mW - \frac{1}{n} \mathbf{1}\mathbf{1}^\top \right) \right\|^2_F \nonumber \\
    &\leq \frac{\sigma^2}{n} \sum_{i=2}^n \lambda_i^{2}.
\end{align}
Note that $\tfrac{1}{n} \sum_{i=2}^n \lambda_i^2 \leq \max_{i \geq 2} (\lambda_i^2) < 1$.
\end{restatable}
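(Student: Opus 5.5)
The plan is to expand the Frobenius norm row by row over the $d$ coordinates, use independence of the noise across nodes to kill cross terms, and then read off the eigenvalues of $(\mW - \tfrac{1}{n}\mathbf{1}\mathbf{1}^\top)^2$.

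Write $\mathbf{N} \coloneqq \nabla F (\mX^{(r)} ; \xi^{(r)}) - \nabla F (\mX^{(r)})$, whose $j$-th column is $\vn_j \coloneqq \nabla F_j (\vx_j^{(r)} ; \xi_j^{(r)}) - \nabla f_j (\vx_j^{(r)})$. Set $\mathbf{M} \coloneqq \mW - \tfrac{1}{n}\mathbf{1}\mathbf{1}^\top$. By \cref{assumption:graph}, $\mathbf{M}$ is symmetric, and it has the same eigenvectors as $\mW$: it sends $\mathbf{1}$ to $\mathbf{0}$, and it has eigenvalues $\lambda_2, \dots, \lambda_n$ on the orthogonal complement of $\mathbf{1}$. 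Hence $\mathbf{M}^2 = \mathbf{M}\mathbf{M}^\top$ has eigenvalues $0, \lambda_2^2, \dots, \lambda_n^2$, and $\mathrm{tr}(\mathbf{M}\mathbf{M}^\top) = \sum_{i=2}^n \lambda_i^2$.

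Expanding the squared norm gives
\begin{align*}
\left\| \mathbf{N}\mathbf{M} \right\|_F^2 = \mathrm{tr}\left( \mathbf{M}^\top \mathbf{N}^\top \mathbf{N} \mathbf{M} \right) = \sum_{j,k} (\mathbf{M}\mathbf{M}^\top)_{kj} \langle \vn_j, \vn_k \rangle .
\end{align*}
Next, take the expectation conditional on $\mX^{(r)}$, i.e.\ on the history up to round $r$. The minibatches $\xi_j^{(r)}$ are sampled independently across nodes, and each $\vn_j$ has conditional mean zero by \cref{assumption:stochastic_noise_non_convex}. Therefore $\mathbb{E}\langle \vn_j, \vn_k\rangle = 0$ for $j \neq k$, and $\mathbb{E}\|\vn_j\|^2 \leq \sigma^2$.

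Only the diagonal survives:
\begin{align*}
\mathbb{E}\left\| \mathbf{N}\mathbf{M} \right\|_F^2 = \sum_j (\mathbf{M}\mathbf{M}^\top)_{jj}\, \mathbb{E}\|\vn_j\|^2 \leq \sigma^2 \sum_j (\mathbf{M}\mathbf{M}^\top)_{jj},
\end{align*}
where we used $(\mathbf{M}\mathbf{M}^\top)_{jj} \geq 0$. The last sum is $\mathrm{tr}(\mathbf{M}\mathbf{M}^\top) = \sum_{i=2}^n \lambda_i^2$. Dividing by $n$ and applying the tower property yields \cref{eq:average_spectral_gap_non_convex_main}. The closing remark follows because each of the $n-1$ terms is at most $\max_{i\ge2}\lambda_i^2$.

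The main obstacle is the cross-term cancellation. It needs independence of the noise across nodes given $\mX^{(r)}$. This independence is implicit in the algorithm, since each node samples its own minibatch, but it is not stated in \cref{assumption:stochastic_noise_non_convex}, so I will state it explicitly. This cancellation is exactly what replaces the operator-norm bound $\max_i \lambda_i^2$ of \cref{lemma:spectral_gap} by the averaged trace.
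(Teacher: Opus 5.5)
Your proposal is correct and is essentially the paper's argument, which goes through \cref{lemma:general_form_of_average_spectral_gap}: write the squared Frobenius norm as a trace, use independence and zero mean to make the expected Gram matrix of the noise diagonal, and bound the result by $\sigma^2 \| \mW - \tfrac{1}{n}\mathbf{1}\mathbf{1}^\top \|_F^2 = \sigma^2 \sum_{i \geq 2} \lambda_i^2$ (the paper does the last step via $\|\mathrm{diag}(\sigma_j)\,\mathbf{M}\|_F^2 \leq \max_j \sigma_j^2 \, \|\mathbf{M}\|_F^2$, you via the diagonal entries of $\mathbf{M}\mathbf{M}^\top$, which is the same computation). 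Your explicit conditioning on the history followed by the tower property is slightly more careful than the paper, which invokes independence of the noise columns without noting that it holds only conditionally on $\mX^{(r)}$.
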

As shown in \cref{lemma:spectral_gap}, gossip averaging can reduce the consensus error $\| \mX - \bar{\mX}\|^2_F$ by only a factor of $1 - p$ if we consider the worst $\mX$ in \cref{eq:spectral_gap}.
However, in \cref{eq:average_spectral_gap_non_convex_main}, $\nabla F_1 (\vx_1, \xi_1) - \nabla f_1 (\vx_1), \nabla F_2 (\vx_2, \xi_2) - \nabla f_2 (\vx_2), \dots, \nabla F_n (\vx_n, \xi_n) - \nabla f_n (\vx_n)$ are independent and have the same mean of zero.
%In \cref{eq:spectral_gap}, the equality holds when $\mX = ( \vv, \dots, \vv )^\top$ where $\vv$ is the eigenvector that corresponds to the second-largest eigenvalue in the absolute value, and $\nabla F (\mX ; \xi) - \nabla F (\mX)$ does not always align with this worst case.
Focusing on this property, we can reduce to $\max_{i \geq 2} (\lambda_i^2)$ to $\tfrac{1}{n} \sum_{i=2}^n \lambda_i^2$, as shown in \cref{lemma:average_spectral_gap_non_convex_main}.

\begin{figure*}[t!]
\centering
\begin{subfigure}{0.48\linewidth}
    \centering
    \includegraphics[height=4.3cm]{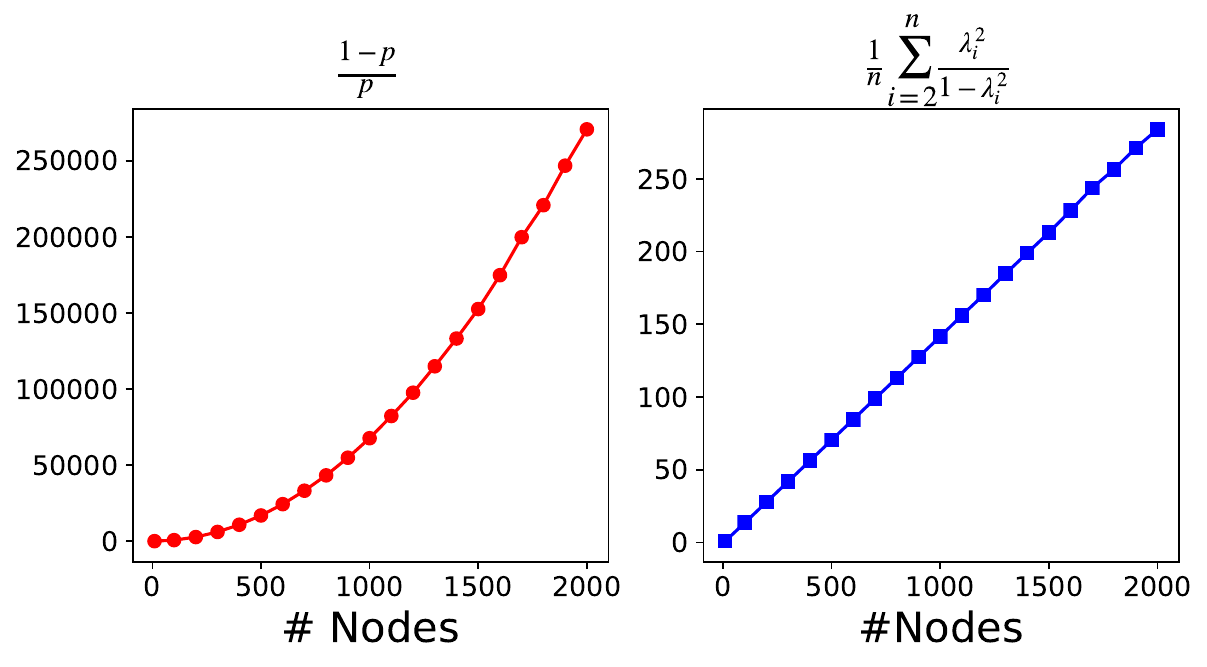}
    \vskip -0.1 in
    \caption{Line graph} 
\end{subfigure}
\hfill
\begin{subfigure}{0.48\linewidth}
    \centering
    \includegraphics[height=4.3cm]{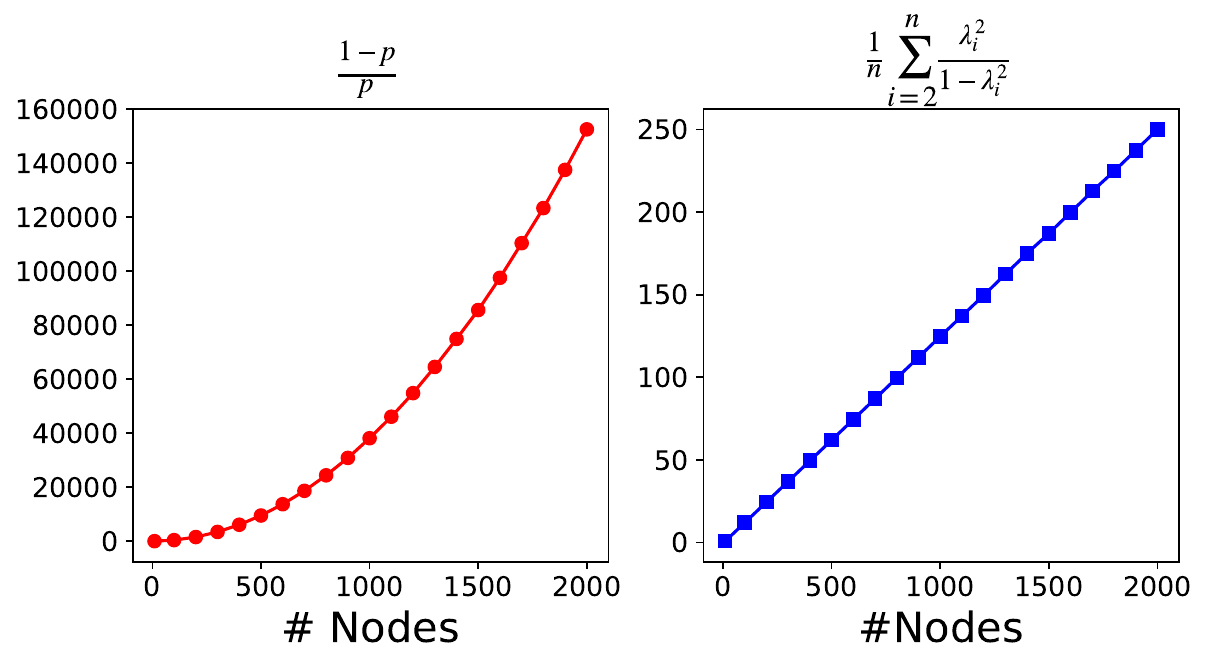}
    \vskip -0.1 in
    \caption{Ring} 
\end{subfigure}
\begin{subfigure}{0.48\linewidth}
    \centering
    \vskip -0.3 in
    \includegraphics[height=4.3cm]{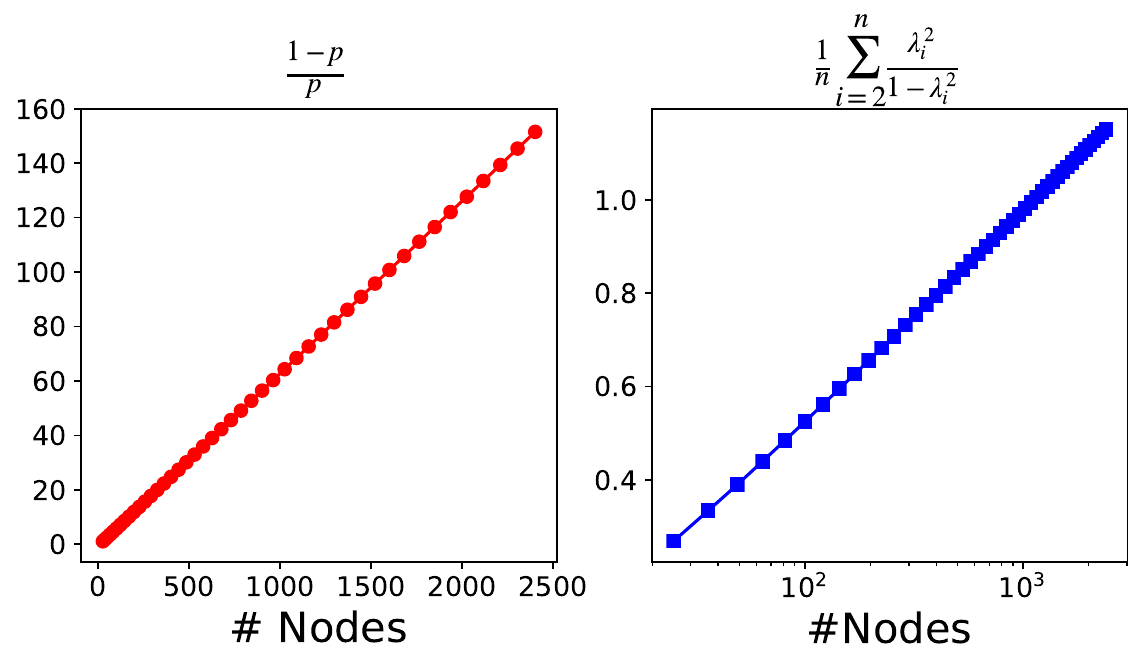}
    \vskip -0.1 in
    \caption{Torus} 
\end{subfigure}
\hfill
\begin{subfigure}{0.48\linewidth}
    \centering
    \vskip -0.03 in
    \includegraphics[height=4.3cm]{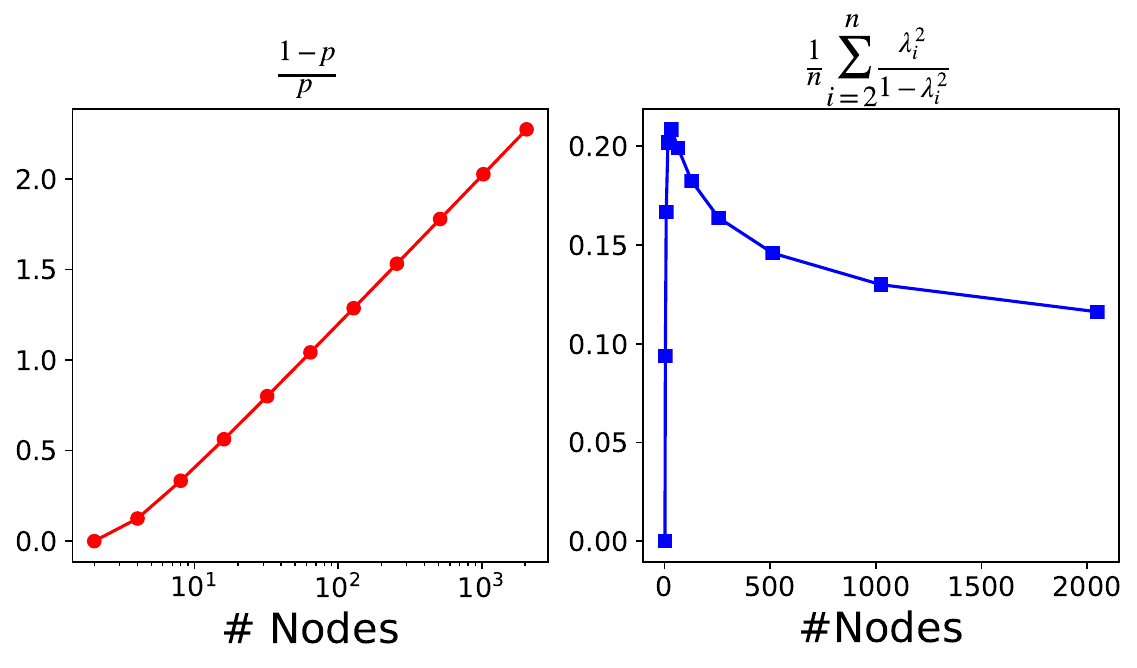}
    \vskip -0.1 in
    \caption{Hypercube} 
\end{subfigure}
\vskip -0.1 in
\caption{Comparision between $\nicefrac{(1 - p)}{p}$ and $\tfrac{1}{n} \sum_{i=2}^n (\nicefrac{\lambda_i^2}{1 - \lambda_i^2})$ for commonly used topologies. We used the Metropolis weights \citep{xiao2005scheme} for the mixing matrix $\mW$. Note that we use a logarithmic scale on the x-axis for the torus and hypercube. It can be observed that $\tfrac{1}{n} \sum_{i=2}^n (\nicefrac{\lambda_i^2}{1 - \lambda_i^2})$ is considerably smaller than $\nicefrac{(1-p)}{p}$.}
\label{fig:comparision_between_spectral_gap_and_average_spectral_gap}
\vskip -0.1 in
\end{figure*}

\paragraph{Proof Sketch:}
When \cref{assumption:smoothness_of_full_grad,assumption:stochastic_noise_non_convex,assumption:graph,assumption:heterogeneity_non_convex} hold and $\eta \leq \tfrac{1}{4 L}$, Decentralized SGD satisfies the following inequality (see the proof in \cref{lemma:final_rate_non_convex}):
\begin{align}
\label{eq:descent_lemma_non_convex}
    &\frac{1}{4 (R+1)} \sum_{r=0}^R \mathbb{E} \left\| \nabla f(\bar{\vx}^{(r)}) \right\|^2 \\
    &\leq \frac{f (\bar{\vx}^{(0)}) - f^\star}{\eta (R+1)}
    + \frac{L \sigma^2}{n} \eta
    +  \frac{L^2}{R+1}  \sum_{r=0}^R \Xi^{(r)}, \nonumber 
\end{align}
where $\Xi^{(r)} \coloneqq \frac{1}{n} \mathbb{E} \| \mX^{(r)} - \bar{\mX}^{(r)} \|^2$ and $\bar{\mX} \coloneqq \tfrac{1}{n} \mX \mathbf{1}\mathbf{1}^\top$.
The above inequality is almost identical to the inequality that often appears in the convergence analysis of SGD, except for the consensus error $\Xi$.
%The consensus error represents how much the parameters that each node has $\{ \vx_i \}_{i=1}^n$ differ from the average $\tfrac{1}{n} \sum_{i=1}^n \vx_i$.
Thus, the convergence rate deteriorates if the parameters held by each node are far from the average.
By carefully analyzing $\Xi$, we can obtain the following inequality:
\begin{align}
\label{eq:recursion_consensus_0}
    \!\!\!&\Xi^{(r+1)} \nonumber \\
    \!\!\!&\leq \left( 1 + \frac{p}{2} \right) \frac{1}{n} \mathbb{E} \left\| \mX^{(r)} \mW - \bar{\mX}^{(r)} \right\|^2_F 
    + \frac{p (1 - p)}{4} \Xi^{(r)} \nonumber \\
    \!\!\!&\; + \frac{6 \zeta^2}{p} (1 - p) \eta^2 \\
    \!\!\!&\;  + \frac{\eta^2}{n} \mathbb{E} \left\| \left( F (\mX^{(r)} ; \xi^{(r)}) \! - \! \nabla F (\mX^{(r)}) \right) \left( \mW \! - \! \frac{1}{n} \mathbf{1}\mathbf{1}^\top \right) \right\|^2_F \!\!\!, \nonumber
\end{align}
when $\eta \leq \tfrac{p}{5L}$ (See the proof of \cref{lemma:consensus_non_convex} with $k=0$).
The first and second terms are the remaining consensus error in the previous round.
Then, if the third and fourth terms are zero, we can show that $\Xi$ decreases consistently since $\tfrac{1}{n} \mathbb{E} \| \mX^{(r)} \mW - \bar{\mX}^{(r)} \|^2_F \leq (1 - p) \Xi^{(r)}$.
Thus, only the third and fourth terms attempt to make the parameters held by nodes drift away in each round, and as we discussed above, the fourth term can be bounded by \cref{eq:average_spectral_gap_non_convex_main}.

Using \cref{eq:recursion_consensus_0} and \cref{lemma:average_spectral_gap_non_convex_main}, we would like to derive the upper bound of $\Xi^{(r)}$ to obtain the convergence rate.
However, \cref{eq:recursion_consensus_0} is not a simple recursive inequality since it contains $\mathbb{E} \| \mX \mW - \bar{\mX} \|^2_F$ in the right-hand side, which makes it difficult to obtain the upper bound of $\Xi$.
To alleviate this issue, we derive the following lemma.

\begin{restatable}{lemma}{ConsensusErrorNonConvex}
\label{lemma:consensus_non_convex}
Suppose that \cref{assumption:smoothness_of_full_grad,assumption:stochastic_noise_non_convex,assumption:graph,assumption:heterogeneity_non_convex} hold. Then, when $\eta \leq \tfrac{p}{5 L}$, it holds that
\begin{align}
\label{eq:recursion_of_consensus_error}
    \Xi^{(r+1,k)}
    &\leq \left( 1 + \frac{p}{2} \right) \Xi^{(r, k+1)} \! + \frac{p}{4} (1 - p)^{k+1} \Xi^{(r,0)} \\
    &\quad  + \frac{6 \zeta^2 \eta^2}{p} (1 - p)^{k+1}
    + \frac{\sigma^2 \eta^2}{n} \sum_{i=2}^n \lambda_i^{2 (k+1)}, \nonumber
\end{align}
where $\Xi^{(r,k)} \coloneqq \frac{1}{n} \mathbb{E} \left\| \mX^{(r)} \mW^{k} - \bar{\mX}^{(r)} \right\|^2_F$.
\end{restatable}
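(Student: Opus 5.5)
We expand the update in matrix form, $\mX^{(r+1)} = (\mX^{(r)} - \eta \nabla F(\mX^{(r)};\xi^{(r)}))\mW$, and multiply on the right by $\mW^k$. Since $\mW$ is symmetric and doubly stochastic, $\bar{\mX}^{(r+1)} = \bar{\mX}^{(r)} - \eta \overline{\nabla F(\mX^{(r)};\xi^{(r)})}$, and for any $\mY$ we have $\mY\mW^{k+1} - \bar{\mY} = \mY(\mW^{k+1} - \tfrac1n \mathbf{1}\mathbf{1}^\top)$. This gives
\begin{align*}
\mX^{(r+1)}\mW^k - \bar{\mX}^{(r+1)} = \bigl(\mX^{(r)}\mW^{k+1} - \bar{\mX}^{(r)}\bigr) - \eta \nabla F(\mX^{(r)};\xi^{(r)}) \bigl(\mW^{k+1} - \tfrac1n \mathbf{1}\mathbf{1}^\top\bigr).
\end{align*}
We then split the stochastic gradient into its noise part $\nabla F(\mX;\xi) - \nabla F(\mX)$ and its mean part $\nabla F(\mX)$. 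Conditioned on $\mX^{(r)}$, the noise has zero mean, so its cross term with everything else vanishes. The squared Frobenius norm therefore splits into two pieces. The first is the deterministic piece $\frac1n\mathbb{E}\|\mX^{(r)}\mW^{k+1} - \bar{\mX}^{(r)} - \eta\nabla F(\mX^{(r)})(\mW^{k+1} - \tfrac1n\mathbf{1}\mathbf{1}^\top)\|_F^2$. The second is the noise piece.

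For the noise piece, we generalize \cref{lemma:average_spectral_gap_non_convex_main} from $\mW$ to $\mW^{k+1}$. Let $\mW = \sum_i \lambda_i \vu_i\vu_i^\top$ with $\vu_1 = \mathbf{1}/\sqrt n$. The columns of the noise matrix $\mE$ are independent, have zero mean, and have second moments at most $\sigma^2$. Hence
\begin{align*}
\mathbb{E}\|\mE \mM\|_F^2 = \sum_j \mathbb{E}\|\ve_j\|^2 \|\mM_{j,:}\|^2 \le \sigma^2 \|\mM\|_F^2, \qquad \mM = \mW^{k+1} - \tfrac1n\mathbf{1}\mathbf{1}^\top,
\end{align*}
and $\|\mM\|_F^2 = \sum_{i\ge2}\lambda_i^{2(k+1)}$. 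After multiplying by $\eta^2/n$, this yields exactly the last term.

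For the deterministic piece, we apply Young's inequality with parameter $p/2$:
\begin{align*}
\|a-b\|^2 \le \bigl(1+\tfrac p2\bigr)\|a\|^2 + \bigl(1+\tfrac2p\bigr)\|b\|^2.
\end{align*}
The first part gives $(1+\frac p2)\Xi^{(r,k+1)}$. For the second part, we use $\nabla F(\mX)\mM = (\nabla F(\mX) - \overline{\nabla F(\mX)})\mM$ and $\|\mM\|_{\mathrm{op}}^2 \le (1-p)^{k+1}$, which follows from \cref{lemma:spectral_gap} applied repeatedly. We then decompose
\begin{align*}
\nabla F(\mX) - \overline{\nabla F(\mX)} = \bigl[\nabla F(\mX) - \nabla F(\bar{\mX})\bigr] + \bigl[\nabla F(\bar{\mX}) - \nabla f(\bar\vx)\mathbf{1}^\top\bigr] - \bigl[\text{average of the first bracket}\bigr].
\end{align*}
By $L$-smoothness (\cref{assumption:smoothness_of_full_grad}), the first bracket contributes at most $2L^2\|\mX - \bar{\mX}\|_F^2$, i.e.\ $2L^2 n\,\Xi^{(r,0)}$ after the factor; centering does not increase the norm. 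By \cref{assumption:heterogeneity_non_convex}, the second bracket contributes at most $2n\zeta^2$. So the second part of the Young split is at most
\begin{align*}
\eta^2\bigl(1+\tfrac2p\bigr)(1-p)^{k+1}\bigl(2L^2\Xi^{(r,0)} + 2\zeta^2\bigr).
\end{align*}
Using $1 + 2/p \le 3/p$ together with $\eta \le p/(5L)$ gives $\frac{6L^2\eta^2}{p} \le \frac{6p}{25} \le \frac p4$, which produces the $\frac p4(1-p)^{k+1}\Xi^{(r,0)}$ and $\frac{6\zeta^2\eta^2}{p}(1-p)^{k+1}$ terms.

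The main obstacle is keeping the factor $(1-p)^{k+1}$ on the gradient terms while not losing the sharp spectral sum on the noise. This is why the noise is separated out before Young's inequality: its cross term is zero, so it needs no $1+2/p$ inflation. The constants then only need checking: $2\cdot 3/p \cdot L^2\eta^2 \le p/4$ and $2\cdot 3/p = 6/p$ both hold.
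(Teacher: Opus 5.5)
Your proposal is correct and follows essentially the same route as the paper. It uses the same split into a deterministic part and a zero-mean noise part with vanishing cross term, bounds the noise via independence across nodes to get $\sigma^2\sum_{i\ge2}\lambda_i^{2(k+1)}$, applies Young's inequality with parameter $p/2$, extracts the factor $(1-p)^{k+1}$ from $\|\mW^{k+1}-\tfrac1n\mathbf{1}\mathbf{1}^\top\|_\text{op}^2$, and splits into smoothness and heterogeneity terms with identical constants. The only cosmetic difference is that the paper subtracts $\nabla f(\bar{\mX}^{(r)})$ rather than the column average; its columns are constant, so $\mW^{k+1}-\tfrac1n\mathbf{1}\mathbf{1}^\top$ annihilates it, and this spares the paper your extra ``centering does not increase the norm'' step.
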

Note that \cref{eq:recursion_of_consensus_error} contains almost the same inequality as \cref{eq:recursion_consensus_0} when $k=0$.
In \cref{eq:recursion_of_consensus_error}, $\Xi^{(r, k+1)}$ and $\Xi^{(r, 0)}$ appear in the right-hand side, while both terms can be bounded from above by using the same inequality, \cref{eq:recursion_of_consensus_error}.
Thus, recursively applying \cref{eq:recursion_of_consensus_error} yields the following lemma.

\begin{restatable}{lemma}{SimpleBoundOfConsensusErrorNonConvexTwo}
\label{lemma:simple_bound_of_consensus_non_convex_2}
Suppose that \cref{assumption:graph,assumption:smoothness_of_full_grad,assumption:stochastic_noise_non_convex,assumption:heterogeneity_non_convex} hold, and $\{ \vx_i^{(0)} \}_{i=1}^n$ are initialized to the same value. When $\eta \leq \tfrac{p}{5 L}$, it holds that
\begin{align}
    \Xi^{(r)} 
    &= \Xi^{(r, 0)} \\
    &\leq \frac{24 \zeta^2 (1 - p)}{p^2} \eta^2 + 3 \sigma^2 \left( \frac{1}{n} \sum_{i=2}^n \frac{\lambda_i^{2}}{1 - \lambda_i^2} \right) \eta^2. \nonumber
\end{align}
\end{restatable}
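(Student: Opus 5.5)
We prove a bound uniform in $k\ge 0$ by induction on $r$, using \cref{lemma:consensus_non_convex} as the recursion. The hypothesis, for every $k \geq 0$, is
\begin{align*}
    \Xi^{(r,k)} \leq A (1-p)^{k} + B_k,
    \qquad
    B_k \coloneqq 3 \sigma^2 \eta^2 \frac{1}{n} \sum_{i=2}^n \frac{\lambda_i^{2(k+1)}}{1 - \lambda_i^2},
    \qquad
    A \coloneqq \frac{24 \zeta^2 (1-p)}{p^2} \eta^2 .
\end{align*}
The statement of \cref{lemma:simple_bound_of_consensus_non_convex_2} is the case $k=0$. The base case $r=0$ is trivial: identical initialization gives $\mX^{(0)} = \bar{\mX}^{(0)}$, and since $\bar{\mX}\mW^k = \bar{\mX}$ we get $\Xi^{(0,k)} = 0$ for all $k$.

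For the inductive step, plug the hypothesis for $\Xi^{(r,k+1)}$ and $\Xi^{(r,0)}$ into \cref{eq:recursion_of_consensus_error}.

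\emph{Heterogeneity part.} It becomes
\begin{align*}
    \Bigl(1+\tfrac{p}{2}\Bigr)A(1-p)^{k+1} + \tfrac{p}{4}(1-p)^{k+1}A + \tfrac{6\zeta^2\eta^2}{p}(1-p)^{k+1},
\end{align*}
which must be at most $A(1-p)^k$. Dividing by $(1-p)^k$, we need
\begin{align*}
    (1-p)\Bigl(1+\tfrac{3p}{4}\Bigr)A + \tfrac{6\zeta^2\eta^2(1-p)}{p} \leq A .
\end{align*}
Since $(1-p)(1+\tfrac{3p}{4}) \le 1-\tfrac{p}{4}$, it suffices that $\tfrac{6\zeta^2\eta^2(1-p)}{p} \le \tfrac{p}{4}A$. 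This holds with equality for $A = 24\zeta^2(1-p)\eta^2/p^2$, which explains the constant.

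\emph{Noise part.} It becomes
\begin{align*}
    \Bigl(1+\tfrac{p}{2}\Bigr)B_{k+1} + \tfrac{p}{4}(1-p)^{k+1}B_0 + \tfrac{\sigma^2\eta^2}{n}\sum_{i\ge 2}\lambda_i^{2(k+1)},
\end{align*}
which must be at most $B_k$. Work eigenvalue by eigenvalue. Write $c \coloneqq 3\sigma^2\eta^2/n$ and $\ell \coloneqq \lambda_i^2 \le 1-p$, so that $B_k = c\sum_{i} \ell^{k+1}/(1-\ell)$. The first and third terms combine per eigenvalue to
\begin{align*}
    \frac{c\,\ell^{k+1}}{1-\ell}\Bigl[\bigl(1+\tfrac{p}{2}\bigr)\ell + \tfrac{1-\ell}{3}\Bigr].
\end{align*}
The middle term $\tfrac{p}{4}(1-p)^{k+1}B_0$ is the awkward one, because it mixes all eigenvalues with the worst-case factor $(1-p)^{k+1}$ instead of $\ell^{k+1}$. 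I expect this to be the main obstacle.

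To handle it, I first try to use the slack left in the bracket: since $\ell \le 1-p$, we have $1 - \bigl[(1+\tfrac p2)\ell + \tfrac{1-\ell}{3}\bigr] \geq \tfrac{2}{3}(1-\ell) - \tfrac{p}{2}\ell$. If this is insufficient, I would restructure the induction. One option is to prove the stronger statement that $\Xi^{(r,k)} - \bigl(A(1-p)^k + B_k\bigr)$ stays controlled. Another is to put the $B_0$ contribution into the $(1-p)^k$-type term, i.e.\ use the hypothesis $\Xi^{(r,k)} \le (A + \alpha B_0)(1-p)^k + B_k$. The $(1-p)^k$ component then satisfies the same contraction as the $A$ part, with an extra forcing $\tfrac p4 B_0$ that is absorbed exactly as the $\zeta$ term was, at the cost of $\alpha = O(1)$. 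At $k=0$ this gives $(1+\alpha)B_0 + A$. Choosing the constants so that $(1+\alpha)\cdot 3\sigma^2 \le$ (the stated constant) recovers the claim, possibly after adjusting $B_k$'s constant from $3$ to a smaller one such as $\tfrac32$ so that the total at $k=0$ is $3$.

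Finally, set $k=0$ and use $\Xi^{(r)} = \Xi^{(r,0)}$ to conclude.
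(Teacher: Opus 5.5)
\textbf{The constant 3 cannot be reached this way, so the last step of your proposal fails; your route (and the paper's) gives 4 instead.}

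\textbf{Your first ansatz.} You are right that it cannot be closed, and the slack in the bracket does not rescue it. The cross term $\frac{p}{4}(1-p)^{k+1}B_0$ decays only like $(1-p)^{k+1}$. Every eigenvalue with $\lambda_i^2<1-p$ contributes to $B_k$ a term that decays strictly faster. So for large $k$, all of $B_0$ would have to be absorbed by the slack of the top eigenvalue(s). That fails in general, for instance when many eigenvalues are moderately large but below $1-p$.

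\textbf{Your second ansatz.} $\Xi^{(r,k)}\le (A+\alpha B_0)(1-p)^k+B_k$ is the correct repair. It has exactly the structure of the paper's unrolled bound in \cref{lemma:simple_bound_of_consensus_non_convex}. There, $C(r,k)$ decays eigenvalue by eigenvalue. The cross contribution $\frac{p}{4}(1-p)^{k+1}\sum_{r'}q^{r-r'-1}C(r',0)$, with $q=(1-p)(1+\frac{3p}{4})$, rides on the $(1-p)^{k+1}$ factor just like the $\zeta^2$ term. The paper writes out these sums and bounds the geometric series after swapping the double sum. You would instead close an invariant directly, which is a cleaner presentation of the same argument.

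Carrying it out with $B_k=\beta\frac{\sigma^2\eta^2}{n}\sum_{i\ge2}\frac{\lambda_i^{2(k+1)}}{1-\lambda_i^2}$:
\begin{itemize}
\item The $(1-p)^k$ component closes once $\alpha\ge\frac{1-p}{1+3p}$, because $1-q=\frac{p}{4}(1+3p)$.
\item The $B_k$ component closes eigenvalue by eigenvalue once $\beta(1-\lambda_i^2-\frac{p}{2}\lambda_i^2)\ge 1-\lambda_i^2$. The worst case is $\lambda_i^2=1-p$, giving $\beta\ge\frac{2}{1+p}$.
\end{itemize}
At $k=0$ this yields the lemma with $(1+\alpha)\beta=\frac{4}{1+3p}$ in place of $3$. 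That is $\le 3$ only when $p\ge\frac19$.

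\textbf{Where the proposal fails.} The gap is your final step. Shrinking $\beta$ to $\frac32$ breaks the $B_k$ induction at $\lambda_i^2=1-p$ whenever $p<\frac13$.

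No other choice of constants can give $3$ when $p<\frac19$. Take $\mW=\lambda\mI+(1-\lambda)\frac{1}{n}\mathbf{1}\mathbf{1}^\top$ with $\lambda^2=1-p$ and $\zeta=0$. Set $s=\frac{(n-1)\sigma^2\eta^2}{n}$, $a_0=0$ and $a_{r+1}=(1-p)\bigl((1+\frac{3p}{4})a_r+s\bigr)$. Then $\Xi^{(r,k)}=a_r(1-p)^k$ satisfies \cref{eq:recursion_of_consensus_error} with equality. Its limit is $\frac{4}{1+3p}\cdot\frac{s(1-p)}{p}$, whereas the claimed bound is $3\cdot\frac{s(1-p)}{p}$. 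So in the sparse regime the constant $3$ is not derivable from the recursion at all.

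\textbf{The paper's own proof has the same issue.} It reaches $3$ only by dropping a factor $2$ in the last line of its bound on $T_2$. There one needs $\sum_{r''}(1-\frac{1-\lambda_i^2}{2})^{r''}\le\frac{2}{1-\lambda_i^2}$, exactly as the paper uses in its bound on $C(r,0)$. With that factor restored, its argument gives $2+2(1-p)\le 4$. The honest conclusion of your route and of the paper's is the lemma with $4$ in place of $3$. This is harmless for \cref{theorem:ours}, which is stated up to constants.
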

Combining \cref{eq:descent_lemma_non_convex,lemma:simple_bound_of_consensus_non_convex_2} and tuning the stepsize $\eta$, we can obtain the convergence rate shown in \cref{theorem:ours}.

In summary, the convergence rate of Decentralized SGD worsens as the parameters each node has drift away, and the consensus error $\Xi$ increases by the stochastic gradient noise, heterogeneity, and the inexact averaging of the gossip averaging.
The existing analysis used the spectral gap $p$ to measure the inexactness of gossip averaging, but the stochastic gradient noises of each node are independent of each node.
Indeed, in the worst-case scenario, the spectral gap $p$ is used to bound the error of gossip averaging, as shown in \cref{lemma:spectral_gap}.
However, by leveraging the independence of stochastic gradient noise across nodes, we can refine this bound and reduce the dependence from $\nicefrac{(1-p)}{p}$ to $\tfrac{1}{n} \sum_{i=2}^n (\nicefrac{\lambda_i^2}{1 - \lambda_i^2})$, which captures the contribution of all eigenvalues rather than relying solely on the spectral gap, i.e., the second-largest eigenvalue in the absolute value.

\section{Numerical Evaluation}
\label{sec:numerical_evaluation}

\begin{figure*}[t]
\begin{subfigure}{\linewidth}
    \vskip - 0.1 in
    \centering
    \includegraphics[height=4.3cm]{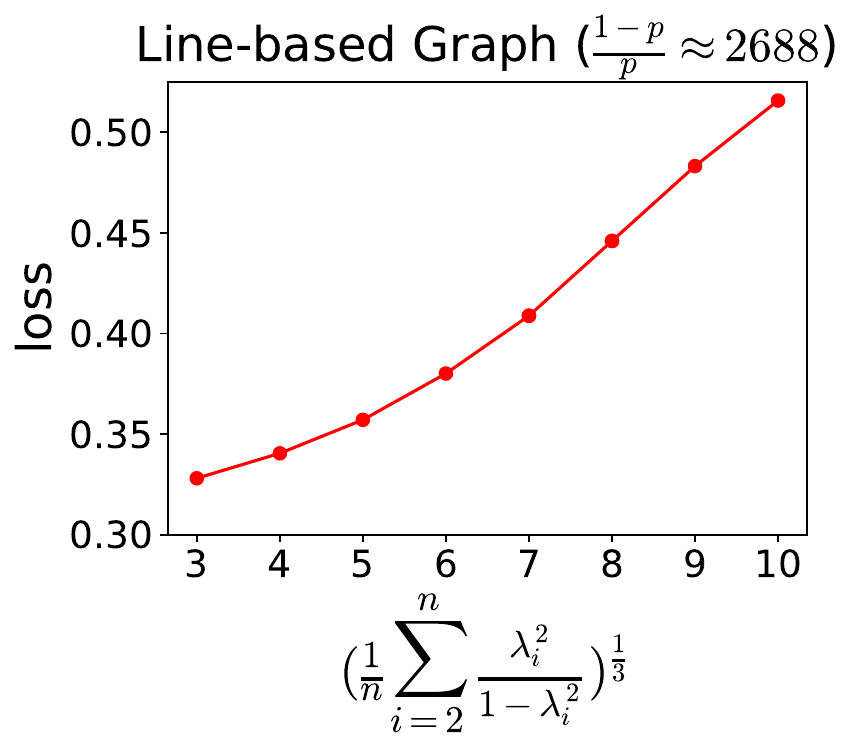}
    \hfill
    \includegraphics[height=4.3cm]{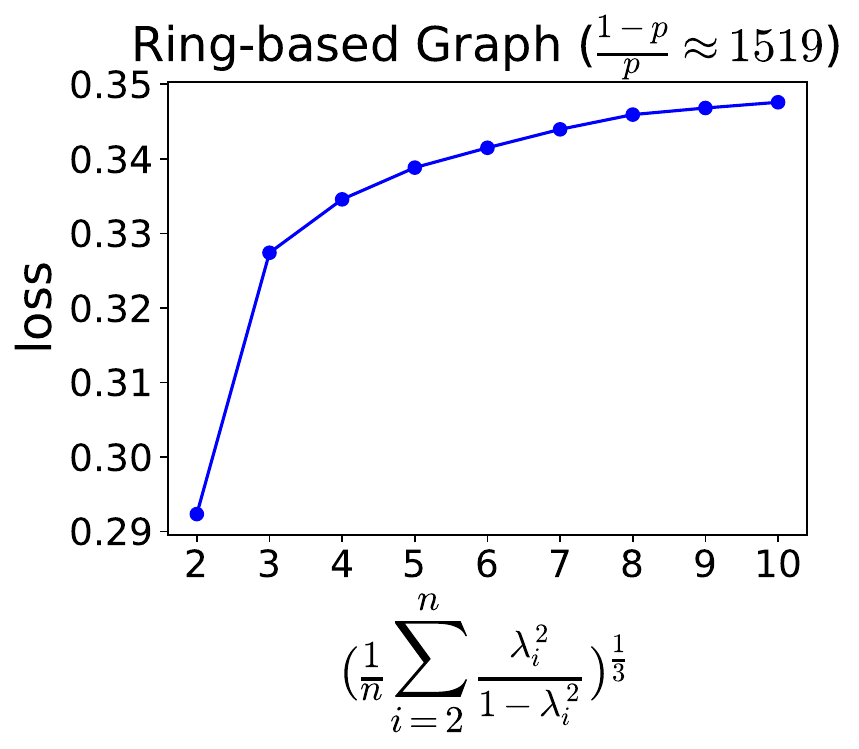}
    \hfill
    \includegraphics[height=4.3cm]{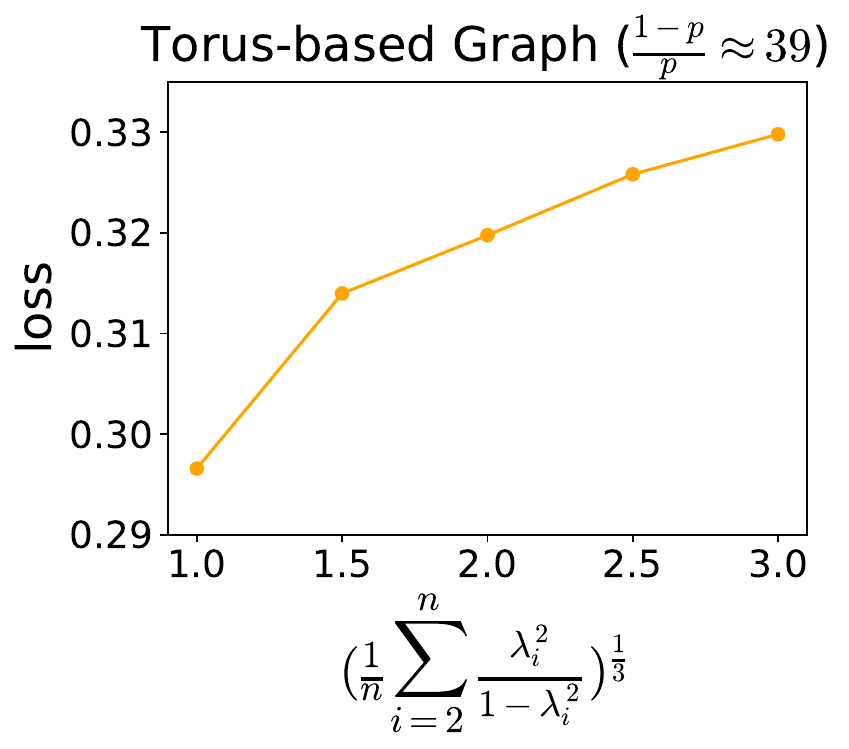}
    \caption{Logistic Regression}
\end{subfigure}
%\vskip -0.1 in
\begin{subfigure}{\linewidth}
    \centering
    \includegraphics[height=4.3cm]{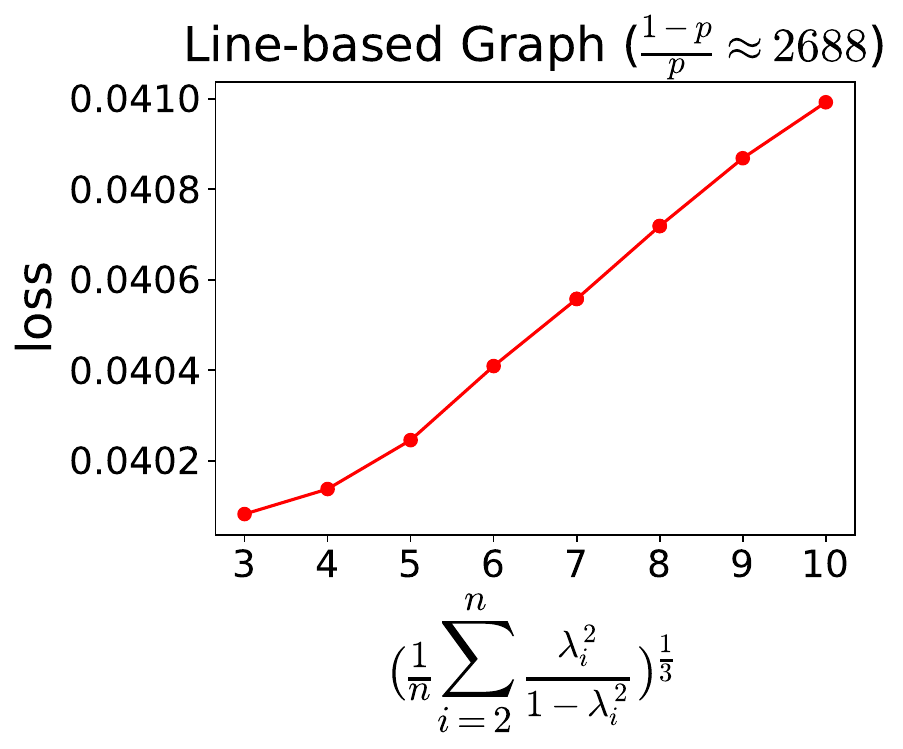}
    \hfill
    \includegraphics[height=4.3cm]{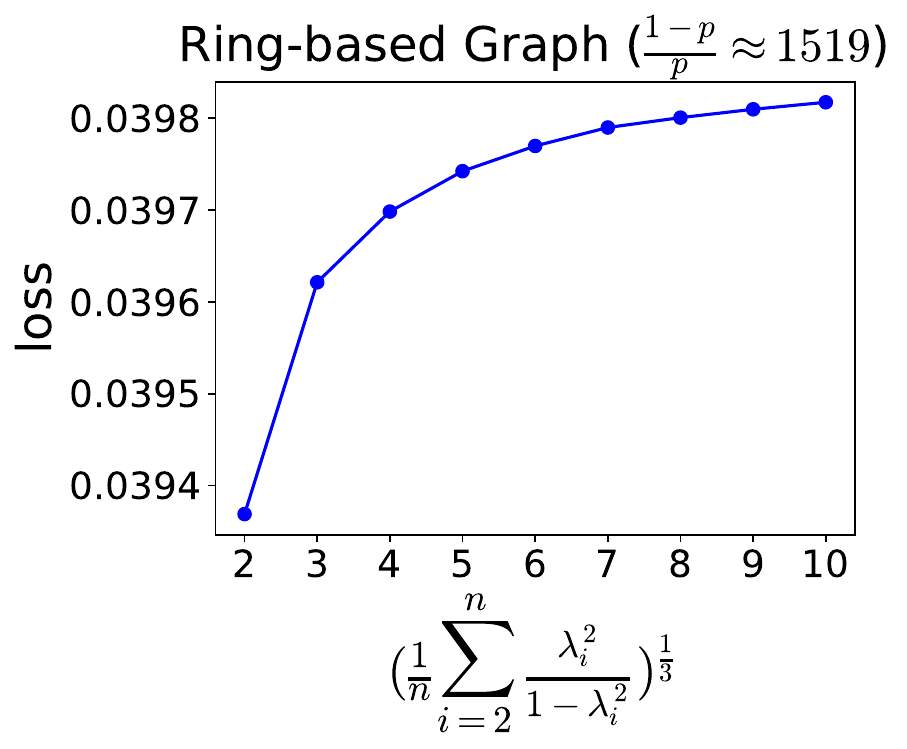}
    \hfill
    \includegraphics[height=4.3cm]{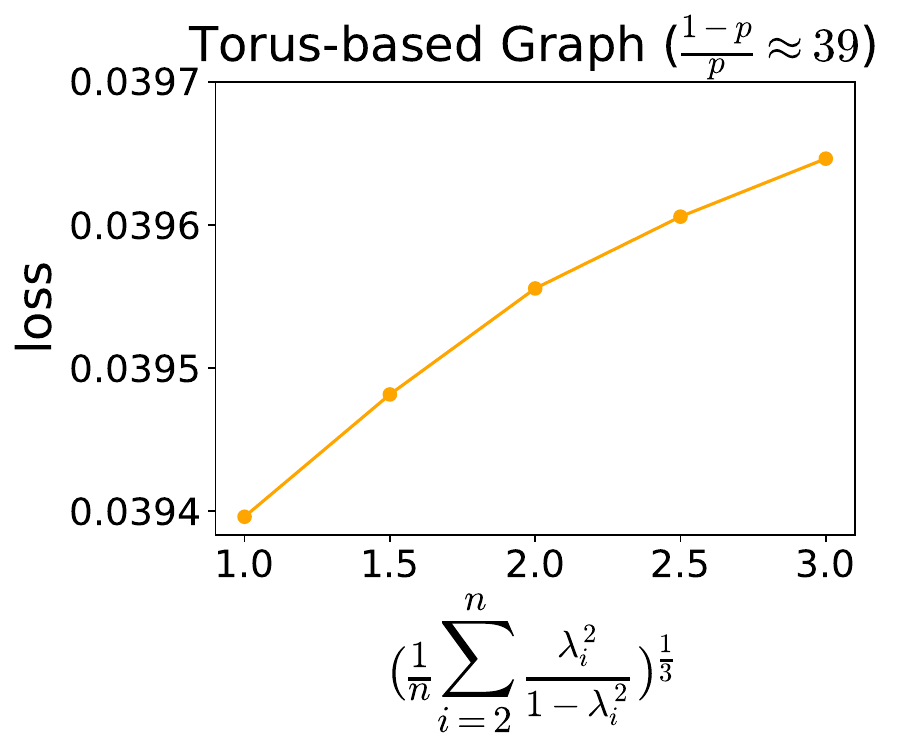}
    \caption{Ridge Regression}
\end{subfigure}
\caption{The impact of $\tfrac{1}{n} \sum_{i=2}^n (\nicefrac{\lambda_i^2}{1 - \lambda_i^2})$ on the loss value at the final parameter. In each figure, we vary the eigenvalues to construct different topologies while keeping $p$ constant. We observe that the loss value increases as $\tfrac{1}{n} \sum_{i=2}^n (\nicefrac{\lambda_i^2}{1 - \lambda_i^2})$ increases, which is consistent with \cref{theorem:ours,proposition:ours_iid}. Error bars are omitted since all standard errors were smaller than $1.0 \times 10^{-5}$ and visually indistinguishable.}
\label{fig:impact_of_average_spectral_gap}
\vskip -0.1 in
\end{figure*}

\subsection{Comparison of $\frac{1 - p}{p}$ and $\frac{1}{n} \sum_{i=2}^n \frac{\lambda_i^2}{1 - \lambda_i^2}$}
\label{sec:numerical_comparison}

In this section, we numerically compared $\nicefrac{(1-p)}{p}$ and $\tfrac{1}{n} \sum_{i=2}^n (\nicefrac{\lambda_i^2}{1 - \lambda_i^2})$ of commonly used topologies. 
We depicted the results in \cref{fig:comparision_between_spectral_gap_and_average_spectral_gap}.
The results indicate that $\tfrac{1}{n} \sum_{i=2}^n (\nicefrac{\lambda_i^2}{1 - \lambda_i^2})$ is significantly smaller than $\nicefrac{(1 - p)}{p}$.
For instance, $\nicefrac{(1 - p)}{p}$ increases quadratically and linearly, whereas $\tfrac{1}{n} \sum_{i=2}^n (\nicefrac{\lambda_i^2}{1 - \lambda_i^2})$ increases only linearly and logarithmically, for the ring and torus, respectively.
This can explain why the topologies have less experimental impact than what was anticipated by the existing convergence analysis in the homogeneous case.

\subsection{Effect of $\frac{1}{n} \sum_{i=2}^n \frac{\lambda_i^2}{1 - \lambda_i^2}$ on Decentralized SGD}
\label{sec:experiments_effect_of_average_spectral_gap}

In this section, we evaluated the convergence behavior of Decentralized SGD by varying topologies and demonstrated that all eigenvalues of the mixing matrix affect the convergence rate.

\paragraph{Experimental Setup:}
We set the number of nodes $n$ to $200$, used MNIST \citep{lecun1998gradient} as a training dataset, and set the minibatch size to one. To evaluate the behavior of Decentralized SGD in the homogeneous setting, we distributed the training dataset to nodes so that all nodes share the same training dataset.
We constructed an underlying graph, such as a ring, line, and torus, and calculated its eigenvalues and eigenvectors. We then altered the eigenvalues, except for the first and second largest eigenvalues in absolute value, and reconstructed various graphs with different $\tfrac{1}{n} \sum_{i=2}^n (\nicefrac{\lambda_i^2}{1 - \lambda_i^2})$ but the same $p$. See \cref{sec:detailed_setting} for more details. Using these graphs, we evaluated how the convergence behavior of Decentralized SGD is affected by graphs.
We tuned the stepsize by grid search over $\{ 0.04, 0.05, \dots, 0.1, 0.2 \}$ so that the final loss value was minimized in each experiment.
All experiments were repeated with different seed values three times, and we reported the average.
We conducted all experiments on the server with Intel Xeon CPU E7-8890 v4.

\paragraph{Results:}
We showed the results in \cref{fig:impact_of_average_spectral_gap}.
The results indicate that even if $p$ is the same, the loss values decrease as $\tfrac{1}{n} \sum_{i=2}^n (\nicefrac{\lambda_i^2}{1 - \lambda_i^2})$ decreases.
This observation is consistent with our statement in \cref{theorem:ours,proposition:ours_iid} that not only the second largest eigenvalue in absolute value, but also all eigenvalues, affect the convergence rate.

\subsection{Experiments with Neural Network:}

\paragraph{Experimental Setup:}
We next evaluated Decentralized SGD with neural networks.
We used LeNet and Fashion MNIST and distributed the training dataset to nodes by using Dirichlet distributions with hyperparameter $\alpha$, conducting experiments in both homogeneous and heterogeneous settings.
We used the ring-based topology used in \cref{fig:comparision_between_spectral_gap_and_average_spectral_gap} and set the number of nodes $n$ to $25$.
The stepsize was selected by grid search from ${0.1, 0.01, 0.001}$.

\paragraph{Results:}
We showed the results in \cref{fig:neural_network}.
The results indicate that in both homogeneous and heterogeneous cases, the accuracy decreases as $\tfrac{1}{n} \sum_{i=2}^n \tfrac{\lambda_i^2}{1 - \lambda_i^2}$ increases, which is consistent with our new theorems.
Although the coefficient in front of $\zeta$ depends on the spectral gap $p$ in our theorems, the convergence rate remains improved as $\tfrac{1}{n} \sum_{i=2}^n \tfrac{\lambda_i^2}{1 - \lambda_i^2}$ decreases. 
Therefore, \cref{fig:neural_network} implies that the quantity $\tfrac{1}{n} \sum_{i=2}^n \tfrac{\lambda_i^2}{1 - \lambda_i^2}$ is still important to achieve high accuracy in both homogeneous and heterogeneous settings.

\begin{figure}[!t]
    \includegraphics[height=3.3cm]{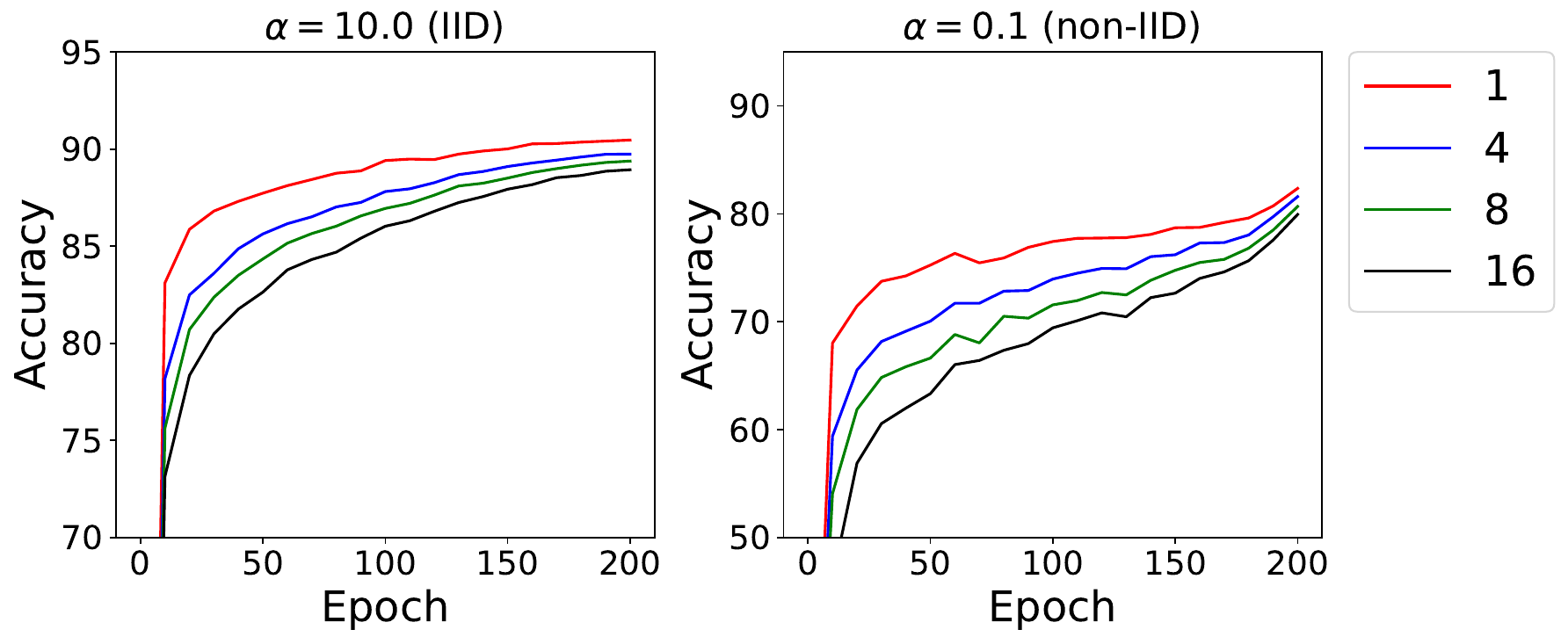}
\caption{Test accuracy ($\%$) of Decentralized SGD on various topologies with $n = 25$. Labels indicate $\tfrac{1}{n} \sum_{i=2}^n \tfrac{\lambda_i^2}{1 - \lambda_i^2}$.}
\label{fig:neural_network}
\vskip - 0.2 in
\end{figure}

\section{Conclusion}
\label{sec:application}

In this paper, we develop a novel proof technique and provide a better convergence rate for Decentralized SGD than that shown in the prior papers in both convex and non-convex settings.
Our novel convergence rates can describe how topologies affect the convergence rate of Decentralized SGD more accurately than the existing rates.
Specifically, previous analyses relied solely on the spectral gap to assess the topology effect, suggesting that convergence rates depend only on the spectral gap in both homogeneous and heterogeneous cases.
In contrast, our novel analysis shows that all eigenvalues of the mixing matrix play an important role in the convergence rate.
Then, we show that for commonly-used topologies, such as a ring and torus, our convergence rates are significantly better than those shown in the existing papers and show that topologies are provably less impactful than predicted by them, especially when nodes have similar datasets.
%Thanks to our novel analysis, we can provide an explanation for the question of why topologies do not experimentally affect convergence rates as much as the existing convergence analyses anticipated, especially when nodes have similar datasets.

\section*{Impact Statement}
This paper improved the convergence analysis of Decentralized SGD under standard assumptions.
Our work mainly focuses on theoretical aspects of decentralized optimization, and there are no specific societal consequences that must be highlighted here.

% In the unusual situation where you want a paper to appear in the
% references without citing it in the main text, use \nocite
\nocite{langley00}

\bibliography{ref}
\bibliographystyle{icml2026}

%%%%%%%%%%%%%%%%%%%%%%%%%%%%%%%%%%%%%%%%%%%%%%%%%%%%%%%%%%%%%%%%%%%%%%%%%%%%%%%
%%%%%%%%%%%%%%%%%%%%%%%%%%%%%%%%%%%%%%%%%%%%%%%%%%%%%%%%%%%%%%%%%%%%%%%%%%%%%%%
% APPENDIX
%%%%%%%%%%%%%%%%%%%%%%%%%%%%%%%%%%%%%%%%%%%%%%%%%%%%%%%%%%%%%%%%%%%%%%%%%%%%%%%
%%%%%%%%%%%%%%%%%%%%%%%%%%%%%%%%%%%%%%%%%%%%%%%%%%%%%%%%%%%%%%%%%%%%%%%%%%%%%%%
\newpage
\appendix
\onecolumn

\section{Detailed Comparison with Prior Convergence Analysis}
\label{sec:detailed_comparison}

In this section, we show the convergence rates derived in \citet{neglia2020decentralized} and \citet{vogels2022beyond} and compare them with \cref{theorem:ours}.

\begin{proposition}[Proposition 3.1 in \citet{neglia2020decentralized}]
Suppose that $f_i$ is convex, $\vx_i^{(0)} = \vx^{(0)}$ for all $i$, and \cref{assumption:graph} holds.
Then, using orthogonal projection matrix $P_i$, we rewrite the miximg matrix $\mW$ as follows:
\begin{align*}
    \mW = \sum_{i=1}^n \lambda_i \mP_i.
\end{align*}
Under these assumptions. there exists a stepsize $\eta$ that satisfies
\begin{align*}
    \mathbb{E} [f (\tilde{\vx}^{(R})] - f (\vx^\star)
    \leq \mathcal{O} \left( \frac{n \| \vx^{(0)} - \vx^\star\|^2}{\eta R}
    + \eta E
    + \eta H \sqrt{E_\text{sp}} \left( (1 - \alpha) \frac{R - 1}{R} + \frac{\alpha}{1 - \max_{i \geq 2} | \lambda_i |}  \right)
    \right),
\end{align*}
where
\begin{align*}
    \tilde{\vx}^{(R)} &\coloneqq \frac{1}{n (R+1)} \sum_{r=0}^R \sum_{i=1}^n \vx_i^{(r)}, \\
    E &\coloneqq \sup_{\{ \vx_i \}_i} \sum_{i=1}^n \mathbb{E} \left\| \nabla F_i  (\vx_i ; \xi_i) \right\|^2, \\
    H &\coloneqq \sup_{\{ \vx_i \}_i} \sum_{i=1}^n \mathbb{E} \left\| \nabla F_i  (\vx_i ; \xi_i) \right\|, \\
    E_\text{sp} &\coloneqq \sup_{\{ \vx_i \}_i} \sum_{i=1}^n \mathbb{E} \left\| \nabla F_i (\vx_i ; \xi_i) - \frac{1}{n} \sum_{j=1}^n \nabla F_j (\vx_j ; \xi_j) \right\|^2, \\
    \alpha &\coloneqq \begin{cases}
        1 & \text{if $\max_{i\geq 2} |\lambda_i| = 0$} \\
        \sqrt{\sum_{i=2}^n e_i |\frac{\lambda_i}{\max_{i \geq 2} |\lambda_i| }|^2} & \text{otherwise}
    \end{cases},
\end{align*}
and $e_i$ is an upper bound for the normalized fraction of $\sum_{i=1}^n \mathbb{E} \left\| \nabla F_i (\vx_i ; \xi_i) - \frac{1}{n} \sum_{j=1}^n \nabla F_j (\vx_j ; \xi_j) \right\|^2$ in the subspace defined by $\mP_i$
\end{proposition}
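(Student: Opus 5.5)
The plan is the classical bounded-gradient analysis of averaged iterates, with the consensus error handled spectrally instead of through the spectral gap alone. Write $\mX^{(r)}$ for the stacked parameters, $\mG^{(r)} \coloneqq \nabla F(\mX^{(r)};\xi^{(r)})$, and $\mJ \coloneqq \tfrac1n\mathbf{1}\mathbf{1}^\top$. Since $\mW\mathbf{1}=\mathbf{1}$, the average obeys $\bar{\vx}^{(r+1)} = \bar{\vx}^{(r)} - \tfrac{\eta}{n}\sum_i \nabla F_i(\vx_i^{(r)};\xi_i^{(r)})$.

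\textbf{Step 1 (descent on the average).} I would expand $\|\bar{\vx}^{(r+1)}-\vx^\star\|^2$. The second-order term is bounded by $\tfrac{\eta^2}{n}\sum_i\|\nabla F_i\|^2$, whose expectation is at most $\tfrac{\eta^2}{n}E$. For the cross term I would write $\langle \nabla f_i(\vx_i), \bar{\vx}-\vx^\star\rangle = \langle \nabla f_i(\vx_i), \vx_i-\vx^\star\rangle + \langle \nabla f_i(\vx_i), \bar{\vx}-\vx_i\rangle$. Convexity gives $\langle \nabla f_i(\vx_i), \vx_i-\vx^\star\rangle \ge f_i(\vx_i)-f_i(\vx^\star)$. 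Summing over $i$ and using convexity once more to pass from $\tfrac1n\sum_i f_i(\vx_i)$ to $f$ evaluated at averages, then telescoping over $r$ and applying Jensen to $\tilde{\vx}^{(R)}$, yields the first two terms $\tfrac{n\|\vx^{(0)}-\vx^\star\|^2}{\eta R}+\eta E$, up to the normalization by $n$. What remains is the coupling term $\tfrac{1}{R}\sum_r\sum_i \mathbb{E}\langle \nabla F_i, \vx_i^{(r)}-\bar{\vx}^{(r)}\rangle$. By Cauchy--Schwarz, this is at most $H\cdot \max_r\sqrt{\mathbb{E}\|\mX^{(r)}-\bar{\mX}^{(r)}\|_F^2}$ (up to constants and the way $H$ absorbs the per-node norms).

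\textbf{Step 2 (consensus error via the spectral decomposition).} This is the key step. Because the initialization is identical, unrolling the recursion gives $\mX^{(r)}-\bar{\mX}^{(r)} = -\eta\sum_{s=0}^{r-1}\mG^{(s)}\mW^{r-s}(\mI-\mJ)$. Write $\mW^k(\mI-\mJ) = \sum_{i\ge2}\lambda_i^k\mP_i$. For each summand, orthogonality of the $\mP_i$ gives
\[
\|\mG^{(s)}\mW^{k}(\mI-\mJ)\|_F^2=\sum_{i\ge2}\lambda_i^{2k}\|\mG^{(s)}\mP_i\|_F^2 .
\]
By the definition of $e_i$, this is at most $E_{\text{sp}}\sum_{i\ge2}e_i\lambda_i^{2k} = E_{\text{sp}}\,\alpha^2\lambda_{\max}^{2k}$, where $\lambda_{\max}\coloneqq\max_{i\ge2}|\lambda_i|$.

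\textbf{Step 3 (summing the geometric tail).} Applying the triangle inequality in $L^2$ over $s$ yields $\sqrt{\mathbb{E}\|\mX^{(r)}-\bar{\mX}^{(r)}\|_F^2}\le\eta\sqrt{E_{\text{sp}}}\,\sum_{k}\alpha\lambda_{\max}^k$. Treating the $k=0$ contribution (the part not damped by mixing) separately and bounding the damped part by $\alpha/(1-\lambda_{\max})$, I expect to recover a factor of the form $(1-\alpha)\tfrac{R-1}{R}+\tfrac{\alpha}{1-\lambda_{\max}}$. The $(1-\alpha)$ piece comes from the fraction of the energy not captured by the weighted sum, and the case $\lambda_{\max}=0$ (so $\alpha=1$) is covered by the convention in the statement. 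Averaging over $r$ and multiplying by $H$ then gives the third term.

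\textbf{Main obstacle.} I expect the hard part to be Step 3: producing exactly the convex combination $(1-\alpha)\tfrac{R-1}{R}+\tfrac{\alpha}{1-\lambda_{\max}}$ rather than a cruder bound. This requires interpreting the $e_i$ as bounds on fractions that hold uniformly over iterates, and splitting the $\mG^{(s)}$ energy into a component that decays geometrically and a residual whose sum over $s<r$ grows like $r$ and, after averaging, gives $(R-1)/R$. I would first try a Minkowski-type split of $\mG^{(s)}$ into these two orthogonal parts and check that the constants close up.
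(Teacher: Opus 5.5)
The paper restates this result from \citet{neglia2020decentralized} for comparison only and gives no proof, so your proposal has to stand on its own. Its skeleton is right: descent on $\bar{\vx}^{(r)}$, a coupling term bounded by $H$ times the consensus distance, and a spectral bound on that distance through the $\mP_i$. Two steps, however, do not work as written.

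\textbf{Step 1.} You cannot ``use convexity once more'' to pass from $\tfrac1n\sum_i f_i(\vx_i^{(r)})$ to $f(\bar{\vx}^{(r)})$. Jensen applies to a single convex function, and with heterogeneous $f_i$ the inequality $\tfrac1n\sum_i f_i(\vx_i)\ge\tfrac1n\sum_i f_i(\bar{\vx})$ is false: take $f_{1,2}(x)=(x\mp1)^2$ and $x_{1,2}=\pm1$, which gives $0<1$. You need a second coupling term. Convexity at $\bar{\vx}$ gives $f_i(\bar{\vx})-f_i(\vx_i)\le\langle\nabla f_i(\bar{\vx}),\bar{\vx}-\vx_i\rangle$. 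Since $\|\nabla f_i(\bar{\vx})\|\le\mathbb{E}\|\nabla F_i(\bar{\vx};\xi_i)\|$, evaluating the supremum defining $H$ at $\vx_i\equiv\bar{\vx}$ gives $\sum_i\|\nabla f_i(\bar{\vx})\|\le H$. So this term is also at most $H\|\mX^{(r)}-\bar{\mX}^{(r)}\|_F$, and it only doubles the constant in the third term. For the original coupling term, the factorization into $H$ times the consensus distance does not come from Cauchy--Schwarz. It comes from conditioning on the past, so that $\sum_i\mathbb{E}_r\|\nabla F_i(\vx_i^{(r)};\xi_i^{(r)})\|\le H$, and from $\max_i\|\vx_i-\bar{\vx}\|\le\|\mX-\bar{\mX}\|_F$.

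\textbf{Step 3.} Your ``main obstacle'' is not one, and the mechanism you propose for it is wrong.
\begin{itemize}
\item With the update \cref{eq:decentralized_sgd}, your own unrolling contains only $\mW^{k}(\mI-J)$ with $k=r-s\ge1$, where $J=\tfrac1n\mathbf{1}\mathbf{1}^\top$. There is no $k=0$ term.
\item For $k\ge1$, $|\lambda_i/\lambda_{\max}|^{2k}\le|\lambda_i/\lambda_{\max}|^{2}$ gives $\sum_{i\ge2}e_i\lambda_i^{2k}\le\alpha^2\lambda_{\max}^{2k}$. This is an inequality, not the equality you wrote, and it is false at $k=0$.
\item Minkowski then yields $\sqrt{\mathbb{E}\|\mX^{(r)}-\bar{\mX}^{(r)}\|_F^2}\le\eta\sqrt{E_{\mathrm{sp}}}\,\tfrac{\alpha\lambda_{\max}}{1-\lambda_{\max}}=\eta\sqrt{E_{\mathrm{sp}}}\bigl(\tfrac{\alpha}{1-\lambda_{\max}}-\alpha\bigr)$.
\item This already lies below the target factor, because $(1-\alpha)\tfrac{R-1}{R}+\tfrac{\alpha}{1-\lambda_{\max}}-\bigl(\tfrac{\alpha}{1-\lambda_{\max}}-\alpha\bigr)=\tfrac{R-1}{R}+\tfrac{\alpha}{R}\ge0$. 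So you are done.
\end{itemize}

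The shape $(1-\alpha)\tfrac{R-1}{R}$ is what the combine-then-adapt recursion $\vx_i^{(r+1)}=\sum_jW_{ij}\vx_j^{(r)}-\eta\nabla F_i(\vx_i^{(r)};\xi_i^{(r)})$ produces. There $\mX^{(r)}(\mI-J)=-\eta\sum_{s<r}G^{(s)}(\mI-J)\mW^{r-1-s}$, so the newest gradient enters unmixed and contributes $\sqrt{E_{\mathrm{sp}}}$ rather than $\alpha\sqrt{E_{\mathrm{sp}}}$. This gives a factor at most $1+\alpha\sum_{k\ge1}\lambda_{\max}^k=(1-\alpha)+\tfrac{\alpha}{1-\lambda_{\max}}$ for each $r\ge1$, and $0$ at $r=0$. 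Averaging over rounds then produces the factor $\tfrac{R-1}{R}$. Your instinct to isolate a $k=0$ term belongs to that setting, not to yours.

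There is no energy left ``uncaptured'' by the $e_i$, and no undamped residual. Every component of $G^{(s)}(\mI-J)$ lies in the range of $\mP_2,\dots,\mP_n$ and is contracted by $\lambda_{\max}<1$. A residual whose contribution grew like $r$ would give a factor of order $R$, not $\tfrac{R-1}{R}\le1$, so the split you planned would not close.
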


\begin{proposition}[Theorem 1 in \citet{vogels2022beyond}]
Suppose that \cref{assumption:convex,assumption:stochastic_noise,assumption:smoothness} hold, $f_i = f$, and $\vx_i^{(0)} = \vx^{(0)}$ for all $i$.
Then, there exists a stepsize $\eta$ that satisfies
\begin{align*}
    \left\| \mX^{(r)} - \mX^\star \right\|^2_\mM
    \leq (1 - \frac{\mu \eta}{2})^r \left\| \mX^{(0)} - \mX^\star\ \right\|^2_\mM
    + \frac{8 \eta \sigma^2}{n_{\mW} (\gamma)},
\end{align*}
where
\begin{align*}
    \mX^{(r)} &\coloneqq \left( \vx_1^{(1)}, \cdots, \vx_n^{(r)} \right), \\
    \mM &\coloneqq (1 - \gamma) \mW^2 (\mI - \gamma \mW^2)^{-1}, \\
    n_{\mW} (\gamma) &\coloneqq \frac{\frac{1}{1 - \gamma}}{\frac{1}{n} \sum_{i=1}^n \frac{\lambda_i^2}{1 - \gamma \lambda_i^2}}.
\end{align*}
\end{proposition}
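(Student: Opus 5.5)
The plan is to run a one-step contraction argument for the potential $\Phi^{(r)} \coloneqq \| \mX^{(r)} - \mX^\star \|_\mM^2 = \operatorname{tr}\bigl((\mX^{(r)} - \mX^\star) \mM (\mX^{(r)} - \mX^\star)^\top\bigr)$, where $\mX^\star \coloneqq \vx^\star \mathbf{1}^\top$. The geometric term then comes from unrolling, and the noise term from summing the per-step noise injections. Three structural facts come first.

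\begin{itemize}
\item Since $f_i = f$ and $\nabla f(\vx^\star) = \mathbf{0}$, the point $\mX^\star$ is a fixed point of the noiseless dynamics, because $\mX^\star \mW = \mX^\star$.
\item $\mM$ is a spectral function of $\mW$, so it commutes with $\mW$. Its eigenvalue on the eigenvector of $\lambda_i$ is $m_i = (1-\gamma)\lambda_i^2/(1-\gamma\lambda_i^2) \in [0,1]$, with $m_1 = 1$ on $\mathbf{1}$.
\item $\mM$ satisfies the fixed-point identity $\mM = \gamma \mW\mM\mW + (1-\gamma)\mW^2$, which one checks eigenvalue-wise. This identity is what ties the $\mM$-norm to one gossip step.
\end{itemize}

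Writing the update in matrix form, $\mX^{(r+1)} = (\mX^{(r)} - \eta \nabla F(\mX^{(r)};\xi^{(r)}))\mW$. I split the stochastic gradient into $\nabla F(\mX^{(r)})$ plus the noise matrix $\mathbf{E}^{(r)}$, whose columns are independent, zero-mean and have second moment controlled by \cref{assumption:stochastic_noise} (at $\vx^\star$) together with smoothness (\cref{assumption:smoothness}) of $F_i$. By independence across nodes, the cross terms vanish and the noise contributes $\eta^2 \mathbb{E}\|\mathbf{E}^{(r)}\mW\|_\mM^2 = \eta^2\sum_j \mathbb{E}\|\mathbf{e}_j\|^2 (\mW\mM\mW)_{jj}$. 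Because $\mW\mM\mW \preceq \mM$, this is at most $\eta^2 \cdot 2\sigma_\star^2 \operatorname{tr}(\mM)$ plus a term proportional to $\eta^2 L$ times the suboptimality, which is absorbed into the descent. This is exactly the mechanism of \cref{lemma:average_spectral_gap_non_convex_main}: independence replaces a worst-case eigenvalue by a trace. The trace gives the quantity in the statement, since $\tfrac{1}{n}\operatorname{tr}(\mM) = (1-\gamma)\tfrac1n\sum_i \tfrac{\lambda_i^2}{1-\gamma\lambda_i^2} = 1/n_\mW(\gamma)$.

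For the deterministic part, I write $\mathbf{Y} = (\mX - \mX^\star) - \eta(\nabla F(\mX) - \nabla F(\mX^\star))$ and need $\|\mathbf{Y}\mW\|_\mM^2 \le (1-\tfrac{\mu\eta}{2})\|\mX - \mX^\star\|_\mM^2$. Since $\mM$ is not diagonal, the usual per-node argument (strong convexity plus co-coercivity with $\eta \lesssim 1/L$) does not apply directly. My approach is to use the identity above to write
\[
\|\mathbf{Y}\mW\|_\mM^2 = \tfrac{1}{\gamma}\bigl(\|\mathbf{Y}\|_\mM^2 - (1-\gamma)\|\mathbf{Y}\mW\|_F^2\bigr),
\]
or equivalently to view $\mM$ as the stationary weighting $(1-\gamma)\sum_{k\ge1}\gamma^{k-1}\mW^{2k}$, which averages the Frobenius-norm contraction over geometric numbers of gossip steps. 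On each $\mW^k$-weighted piece I would expand the square and use a per-node inequality, $\langle \vx_j - \vx^\star, \nabla f(\vx_j)\rangle \ge \mu\|\vx_j-\vx^\star\|^2/2 + (f(\vx_j)-f^\star)$ with $\|\nabla f(\vx_j)\|^2 \le 2L(f(\vx_j)-f^\star)$. Off-diagonal cross terms would be bounded by Cauchy--Schwarz against the diagonal ones, which costs only constants because the entries of $\mW^{2k}$ are nonnegative and $\mW^{2k}$ is doubly stochastic.

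The main obstacle is this contraction of the gradient step in the non-diagonal $\mM$-norm. The cross terms $\langle \vx_j - \vx^\star, \nabla f(\vx_l)\rangle$ for $j \ne l$ have no sign. If the nonnegativity/double-stochasticity route fails, my fallback is to add a Frobenius consensus term to the potential and take $\eta \lesssim (1-\gamma)/L$ to absorb the mismatch. Once the one-step inequality
\[
\mathbb{E}\Phi^{(r+1)} \le (1-\tfrac{\mu\eta}{2})\Phi^{(r)} + c\,\eta^2\sigma_\star^2 \operatorname{tr}(\mM)
\]
holds, unrolling the recursion and summing the geometric series gives the stated bound, with the constant $8$ coming from the noise bookkeeping.
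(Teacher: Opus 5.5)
\paragraph{Assessment.}
The paper gives no proof of this proposition. It only quotes Theorem~1 of \citet{vogels2022beyond} for comparison, and in a loose form: no admissible range for $\gamma$, $\sigma$ in place of $\sigma_\star$, and no $1/\mu$ in the noise term. Judged on its own, your proposal has a genuine gap exactly at the step you flag, and the route you suggest there does not work.

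The off-diagonal terms $-2\eta M_{jl}\langle \vx_j-\vx^\star,\nabla f(\vx_l)\rangle$ are first order in $\eta$ and have no sign. Cauchy--Schwarz only bounds them by $2\eta L M_{jl}\|\vx_j-\vx^\star\|\|\vx_l-\vx^\star\|$. This exceeds the decrease $\tfrac{\mu\eta}{2}\|\mX-\mX^\star\|_\mM^2$ you need by a factor of order $L/\mu$, and it is measured in the Frobenius norm rather than the possibly much smaller $\mM$-norm. So the loss is not a matter of constants.

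In fact, the deterministic inequality $\|\mathbf{Y}\mW\|_\mM^2\le(1-\tfrac{\mu\eta}{2})\|\mX-\mX^\star\|_\mM^2$ that you need is false for non-quadratic $f$. Take $d=1$ and $f(x)=x^2/2$ for $x\le 0$, $f(x)=x^2$ for $x>0$, so $\mu=1$, $L=2$, $x^\star=0$, and set $\sigma_\star=0$. Take an eigenvector $\vv_i\perp\mathbf{1}$ with $\lambda_i=0$ (e.g.\ the complete graph) and $\mX=u\vv_i^\top$ with $u>0$.
Then $m_i=0$, so $\|\mX-\mX^\star\|_\mM=0$. But $\mathbf{Y}\mW=-\eta\nabla F(\mX)\mW$ has average column $-\tfrac{\eta u}{n}\sum_{l:(\vv_i)_l>0}(\vv_i)_l\neq 0$. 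Since $\mM\succeq\tfrac{1}{n}\mathbf{1}\mathbf{1}^\top$, the $\mM$-norm sees this average with weight one.
Your argument only looks plausible because, for quadratic $f$, the gradient acts from the left and commutes with $\mM$. In general, disagreement between nodes biases the averaged gradient.

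\paragraph{What is missing.}
You need a way to pay for this bias with a consensus quantity. Write $\mM=(1-\gamma)\sum_{k\ge 1}\gamma^{k-1}\mW^{2k}$ and compare each $\nabla f(\vx_l)$ with the random-walk average $\bar\vx_{\mathbf{a}}=\sum_j a_j\vx_j$, where $\mathbf{a}$ is a column of $\mW^{k+1}$. The inequality $\langle \bar\vx_{\mathbf{a}}-\vx^\star,\nabla f(\vx_l)\rangle\ge f(\bar\vx_{\mathbf{a}})-f^\star+\tfrac{\mu}{2}\|\vx_l-\vx^\star\|^2-\tfrac{L}{2}\|\bar\vx_{\mathbf{a}}-\vx_l\|^2$ then gives the desired contraction up to an error of order $\eta L\bigl(\|\mX-\mX^\star\|_F^2-\|\mX-\mX^\star\|_{\mW\mM\mW}^2\bigr)$.

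This error cannot be absorbed into $\|\cdot\|_\mM^2$ alone. In eigendirection $\vv_i$, its ratio to the gossip gain $m_i(1-\lambda_i^2)$ is $\eta L\bigl(1+\tfrac{1}{(1-\gamma)\lambda_i^2}\bigr)$. This blows up as $\lambda_i\to 0$ and is never below $\eta L/(1-\gamma)$.
So you need a second, consensus-type potential, or a trajectory bound showing that the disagreement is driven only by noise. The argument can then close only under a coupling of $\gamma$ to the stepsize, roughly $\eta L\lesssim 1-\gamma$.

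Some such coupling is unavoidable. For fixed $\eta$ and $\gamma\to 1$, one has $\mM\to\tfrac{1}{n}\mathbf{1}\mathbf{1}^\top$ and $n_\mW(\gamma)\to n$. The bound would then assert topology-independent linear speedup for $\bar\vx$ at any stepsize. That holds for quadratics, but fails for the kinked $f$ above with $\sigma_\star>0$ on a graph with tiny edge weights: each node then carries a stationary bias of the order of its standard deviation, and averaging does not remove it. Your plan never produces such a condition, so it would ``prove'' the bound for every $\gamma\in[0,1)$.

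\paragraph{Final constant.}
Unrolling your recursion gives the noise floor $\tfrac{2c\,\eta\sigma_\star^2\operatorname{tr}(\mM)}{\mu}=\tfrac{2c\,n\,\eta\sigma_\star^2}{\mu\, n_\mW(\gamma)}$, not $\tfrac{8\eta\sigma^2}{n_\mW(\gamma)}$. The displayed bound is dimensionally inconsistent as transcribed. You should state which normalized version you are proving, rather than attribute the difference to bookkeeping.
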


Although the above two propositions successfully showed that all eigenvalues of the mixing matrix affect the convergence rate, rather than the spectral gap, they failed to show that their novel rates are better than the rate shown in \cref{proposition:prior}.
The convergence rate shown in \citet{neglia2020decentralized} requires that the stochastic gradient is bounded, i.e., $\| \nabla F_i (\vx ; \xi_i) \| \leq G$ for any $\vx$, to ensure the term in the right-hand side diminishes as $R$ increases, and \citet{neglia2020decentralized} fails to improve the rate shown in \cref{proposition:prior}.
The analysis shown in \citet{vogels2022beyond} uses the novel quantity $n_{\mW} (\gamma)$ instead of the spectral gap.
However, as $n_{\mW} (\gamma)$ depends on the hyperparameter $\gamma$, it is unclear whether this rate is better than the rate shown in \cref{proposition:prior}.
Therefore, \cref{theorem:ours} is the first result that improves the well-known convergence rate shown in \cref{proposition:prior} without using the additional assumptions.

\section{Future Work}
In this section, we discuss the potential application of \cref{theorem:ours,proposition:ours_iid}.
One of the most impressive applications is to develop topologies that minimize $\frac{1}{n} \sum_{i=1}^n (\nicefrac{\lambda_i^2}{1 - \lambda_i^2})$. 
To design topologies that can improve the convergence rate, the topologies with large spectral gap have been extensively studied by \citet{chow2016expander,wang2019matcha,ying2021exponential,song2022communicationefficient,ding2023decentralized,takezawa2023beyond,you2024bary}.
However, our novel analysis discovers that in the near-homogeneous case, the dependence of the spectral gap can be reduced to $\frac{1}{n} \sum_{i=2}^n (\nicefrac{\lambda_i^2}{1 - \lambda_i^2})$. 
Minimizing $\nicefrac{(1-p)}{p}$ also makes $\frac{1}{n} \sum_{i=2}^n (\nicefrac{\lambda_i^2}{1 - \lambda_i^2})$ smaller, but if we can design a topology that directly minimizes $\frac{1}{n} \sum_{i=2}^n (\nicefrac{\lambda_i^2}{1 - \lambda_i^2})$, we may be able to further improve the convergence rate and make training more stable.
For instance, we propose to generalize the approach of \citet{wang2019matcha} to maximize our new quantity $\frac{1}{n} \sum_{i=2}^n \frac{\lambda_i^2}{1 - \lambda_i^2}$, instead of maximizing the spectral gap.
More precisely, \citet{wang2019matcha} proposed to decompose the original topology into matching in such a way that the spectral gap is maximized.  That allows at the same time to improve the spectral gap of the topology, and use less communications due to alternating between different matchings, providing improvement not only in the convergence speed, but also improving the runtime per iteration.
We therefore propose to replace the maximization metric from the spectral gap to our new quantity $\frac{1}{n} \sum_{i=2}^n \frac{\lambda_i^2}{1 - \lambda_i^2}$, capturing tighter the practical behavior of the decentralized learning algorithms.

Other promising further directions include exploring the extension of our novel theorems to other decentralized optimization algorithms.
Although this paper focuses on the analysis of Decentralized SGD, our new proof techniques described in \cref{sec:proof_sketch} are more general and may be applicable in the analysis of other decentralized optimization algorithms, such as Gradient Tracking \citep{lorenzo2016next,nedic2017achieving,koloskova2021an}. Furthermore, it would be interesting to extend \cref{theorem:ours,proposition:ours_iid} to the case of gossip averaging with time-varying topologies \citep{koloskova2020unified} and accelerated gossip averaging \citep{liu2011accelerated,di2024double}.

\newpage

\section{Proof of Theorem~\ref{theorem:ours}}
\label{sec:proof}

\subsection{Intuition Behind the Improved Analysis}
The spectral gap $1 - \max_{i\geq2} (\lambda_i^2)$ satisfies the following inequality for any matrix $\mathbf{X}$ (see \cref{lemma:spectral_gap}):
\[
\| \mathbf{X} \mathbf{W} - \bar{\mathbf{X}} \|^2_F \leq \max_{i\geq2} (\lambda_i^2) \| \mathbf{X} - \bar{\mathbf{X}} \|^2_F
\]

On the other hand, if we restrict the input $X := (X_1, X_2, \dots, X_n)$ so that $X_1, \dots, X_n$ are \textbf{independent} random variables, have the same mean, and their variance is bounded by $\sigma^2$, we have (see \cref{lemma:general_form_of_average_spectral_gap}):
\[
\mathbb{E} \| X \mathbf{W} - \bar{X} \|^2_F \leq \sigma^2 \sum_{i=2}^n \lambda_i^2
\]

\emph{The stochastic noise is independent among nodes.} Thus, the coefficient of the stochastic noise $\sigma^2$ can be reduced to $\frac{1}{n} \sum_{i=2}^n \frac{\lambda_i^2}{1 - \lambda_i^2}$ from $\tfrac{1-p}{p} \coloneqq \max_{i\geq2} (\frac{\lambda_i^2}{1 - \lambda_i^2})$.
This is an intuition of why the coefficient of $\sigma^2$ is improved in our theorem.

\subsection{Notation}
\label{sec:proof_notation}
Define $\mX, \mX^\star, \nabla f (\mX), \nabla F (\mX ; \xi) \in \mathbb{R}^{d \times n}$ as follows: 
\begin{gather*}
    \mX^{(r)} \coloneqq \left( \vx_1^{(r)}, \dots, \vx_n^{(r)} \right), \;\;
    \mX^\star \coloneqq \left( \vx^\star, \dots, \vx^\star \right), \;\;
    \nabla f (\mX^{(r)}) \coloneqq \left( \nabla f (\vx_1^{(r)}), \dots, \nabla f (\vx_n^{(r)}) \right), \\
    \nabla F (\mX^{(r)}) \coloneqq \left( \nabla f_1 (\vx_1^{(r)}), \dots, \nabla f_n (\vx_n^{(r)}) \right), \\
    \nabla F (\mX^{(r)} ; \xi^{(r)}) \coloneqq \left( \nabla F_1 (\vx_1^{(r)} ; \xi_1^{(r)}), \dots, \nabla F_n (\vx_n^{(r)} ; \xi_n^{(r)}) \right).
\end{gather*}
Using the above notations, we can rewrite the update rule shown in \cref{eq:decentralized_sgd} as follows:
\begin{align*}
    \mX^{(r+1)} = \left( \mX^{(r)} - \eta \nabla F(\mX^{(r)} ; \xi^{(r)}) \right) \mW.
\end{align*}
In the following sections, we define $\pm a \coloneqq a - a$ for all $a$.
$\mathbb{E}[\cdot]$ denotes the expectation over all randomness during the training, and $\mathbb{E}_r [\cdot]$ denotes the expectation over randomness that occurs at round $r$.

\subsection{Useful Lemmas}
\label{sec:useful_lemmas}
\begin{lemma}
\label{lemma:eigenvalues}
Suppose that \cref{assumption:graph} holds.
Then, the eigenvalues of $\mW - \tfrac{1}{n} \mathbf{1}\mathbf{1}^\top$ are $0, \lambda_2, \lambda_3, \dots, \lambda_n$.
\end{lemma}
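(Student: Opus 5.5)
Since $\mW$ is real symmetric, the spectral theorem gives an orthonormal eigenbasis $\vv_1,\dots,\vv_n$ of $\mathbb{R}^n$ with $\mW \vv_i = \lambda_i \vv_i$. The plan is to show that this same basis also diagonalizes $\mW - \tfrac{1}{n}\mathbf{1}\mathbf{1}^\top$, and to read off the eigenvalues directly.

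First, identify the top eigenvector. Because $\mW\mathbf{1} = \mathbf{1}$, the vector $\mathbf{1}$ is an eigenvector with eigenvalue $1$. By \cref{assumption:graph}, $\lambda_1 = 1 > \lambda_2$, so the eigenvalue $1$ is simple. Hence we may take $\vv_1 = \mathbf{1}/\sqrt{n}$. Orthonormality then gives $\mathbf{1}^\top \vv_i = 0$ for every $i \geq 2$.

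Second, compute the action of $\mW - \tfrac{1}{n}\mathbf{1}\mathbf{1}^\top$ on each basis vector:
\begin{align*}
\left(\mW - \tfrac{1}{n}\mathbf{1}\mathbf{1}^\top\right)\vv_1 &= \vv_1 - \tfrac{1}{n}\mathbf{1}\cdot\tfrac{n}{\sqrt{n}} = \vv_1 - \vv_1 = \mathbf{0}, \\
\left(\mW - \tfrac{1}{n}\mathbf{1}\mathbf{1}^\top\right)\vv_i &= \lambda_i \vv_i - \tfrac{1}{n}\mathbf{1}\,(\mathbf{1}^\top \vv_i) = \lambda_i \vv_i \quad (i \geq 2).
\end{align*}
So $\vv_1,\dots,\vv_n$ is an orthonormal eigenbasis of $\mW - \tfrac{1}{n}\mathbf{1}\mathbf{1}^\top$, with eigenvalues $0, \lambda_2, \dots, \lambda_n$ counted with multiplicity. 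Equivalently, $\mW - \tfrac{1}{n}\mathbf{1}\mathbf{1}^\top = \sum_{i\geq 2}\lambda_i \vv_i\vv_i^\top$.

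The only delicate point is justifying that $\mathbf{1}$ is orthogonal to the eigenvectors for $\lambda_2,\dots,\lambda_n$. This is handled by the simplicity of $\lambda_1 = 1$, which is where the strict inequality $1 > \lambda_2$ in \cref{assumption:graph} is used. Without it, one could still choose the eigenbasis to contain $\mathbf{1}/\sqrt{n}$ by Gram–Schmidt within the eigenvalue-$1$ eigenspace, and the same computation would go through.
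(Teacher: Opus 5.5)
Your proof is correct and follows essentially the same route as the paper. Both arguments show that $\mathbf{1}$ is annihilated by $\mW - \tfrac{1}{n}\mathbf{1}\mathbf{1}^\top$, and that the remaining eigenvectors of $\mW$ are orthogonal to $\mathbf{1}$ (because $\lambda_1 = 1 \neq \lambda_i$), so they stay eigenvectors with the same eigenvalues $\lambda_i$. Your use of a full orthonormal eigenbasis is a small improvement, since it makes the count of eigenvalues with multiplicity explicit, which the paper's argument leaves implicit.
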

\begin{proof}
Since we have $\mW \mathbf{1} = \mathbf{1}$, $\mathbf{1}$ is an eigenvector that corresponds to eigenvalue $\lambda_1 (=1)$.
Since we have 
\begin{align*}
    \left( \mW - \frac{1}{n} \mathbf{1}\mathbf{1}^\top \right) \mathbf{1} = \mathbf{0},
\end{align*}
$\mW - \frac{1}{n} \mathbf{1}\mathbf{1}^\top$ has an eigenvalue of $0$.

Let $\vv_2, \dots, \vv_n \in \mathbb{R}^n$ be the eigenvectors of $\mW$ corresponding to $\lambda_2, \dots, \lambda_n$, respectively.
Since $\lambda_1 \not = \lambda_i$ for all $i \geq 2$, it holds that $\vv_i^\top \mathbf{1} = 0$.
We have
\begin{align*}
    \left( \mW - \frac{1}{n} \mathbf{1}\mathbf{1}^\top \right) \vv_i = \mW \vv_i = \lambda_i \vv_i.
\end{align*}
Thus, $\lambda_2, \dots, \lambda_n$ are eigenvalues of $\mW - \tfrac{1}{n} \mathbf{1}\mathbf{1}^\top$.
\end{proof}

\spectralGapLemma*
\begin{proof}
We have
\begin{align*}
    \left\| \mX \mW - \bar{\mX} \right\|^2_F
    &= \left\| ( \mX - \bar{\mX}) (\mW - \frac{1}{n} \mathbf{1}\mathbf{1}^\top ) \right\|^2_F \\
    &\leq \left\| \mW - \frac{1}{n} \mathbf{1}\mathbf{1}^\top \right\|^2_\text{op} \left\| \mX - \bar{\mX} \right\|^2_F \\
    &\leq \max_{i \geq 2} (\lambda_i^2) \left\| \mX - \bar{\mX} \right\|^2_F,
\end{align*}
where we use \cref{lemma:eigenvalues} and the assumption that $\mW$ is symmetric in the last inequality.
Moreover, from \cref{lemma:eigenvalues}, it holds that $\max_{i \geq 2} (\lambda_i^2) < 1$, and we can conclude the statement.
\end{proof}

\begin{lemma}
\label{lemma:general_form_of_average_spectral_gap}
Let $X_1, X_2, \dots, X_n$ be $d$-dimensional independent random variables with the same mean, and assume that there exists $\sigma$ such that $\mathbb{E} \| X_i - \mathbb{E} [X_i] \|^2 \leq \sigma^2$ for all $i$. Define $X$ as follows:
\begin{align*}
    X \coloneqq \left( X_1, X_2, \dots, X_n \right),
\end{align*}
where $X$ is random variable of dimension $d \times n$.
Then, it holds that
\begin{align*}
    \frac{1}{n} \mathbb{E} \left\| X \left(  \mW - \frac{1}{n} \mathbf{1}\mathbf{1}^\top \right) 
    \right\|^2_F
    \leq \frac{\sigma^2}{n} \sum_{i=2}^n \lambda_i^{2}.
\end{align*}
\end{lemma}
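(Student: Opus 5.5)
The plan is to first remove the common mean and then expand the Frobenius norm using independence across columns.

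\textbf{Step 1: center the columns.} Let $\vm$ denote the common mean, $\vm = \mathbb{E}[X_i]$ for all $i$, and set $Y_i \coloneqq X_i - \vm$ and $Y \coloneqq (Y_1, \dots, Y_n)$. Then $X = Y + \vm \mathbf{1}^\top$. By \cref{assumption:graph}, $\mathbf{1}^\top \mW = \mathbf{1}^\top$, so
\begin{align*}
\mathbf{1}^\top \left( \mW - \tfrac{1}{n}\mathbf{1}\mathbf{1}^\top \right) = \mathbf{0}^\top ,
\end{align*}
and therefore $X(\mW - \tfrac{1}{n}\mathbf{1}\mathbf{1}^\top) = Y(\mW - \tfrac{1}{n}\mathbf{1}\mathbf{1}^\top)$. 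This is exactly where the ``same mean'' hypothesis is used: it makes the mean matrix have rank one along $\mathbf{1}^\top$, which $\mW - \tfrac{1}{n}\mathbf{1}\mathbf{1}^\top$ annihilates.

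\textbf{Step 2: expand using independence.} Write $\mA \coloneqq \mW - \tfrac{1}{n}\mathbf{1}\mathbf{1}^\top$, which is symmetric. Then
\begin{align*}
\mathbb{E}\|Y\mA\|_F^2 = \mathbb{E}\,\mathrm{tr}(\mA^\top Y^\top Y \mA) = \mathrm{tr}\bigl(\mA^\top \mathbb{E}[Y^\top Y]\, \mA\bigr).
\end{align*}
The $(i,j)$ entry of $\mathbb{E}[Y^\top Y]$ is $\mathbb{E}[Y_i^\top Y_j]$. For $i \neq j$ this vanishes, because $Y_i$ and $Y_j$ are independent and each has mean zero. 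The diagonal entries are $s_i \coloneqq \mathbb{E}\|Y_i\|^2 \le \sigma^2$. Writing $\mD = \mathrm{diag}(s_1, \dots, s_n)$, we get
\begin{align*}
\mathbb{E}\|Y\mA\|_F^2 = \mathrm{tr}(\mA^\top \mD \mA) = \sum_i s_i \|\mA^\top \ve_i\|^2 \le \sigma^2 \|\mA\|_F^2 .
\end{align*}

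\textbf{Step 3: evaluate the Frobenius norm.} Since $\mA$ is symmetric, $\|\mA\|_F^2$ is the sum of its squared eigenvalues. By \cref{lemma:eigenvalues}, these eigenvalues are $0, \lambda_2, \dots, \lambda_n$, so $\|\mA\|_F^2 = \sum_{i=2}^n \lambda_i^2$. Dividing by $n$ gives the claimed bound.

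The main point of care is Step 2. The cross terms need both independence and zero mean, and centering by the common mean in Step 1 is what supplies the zero mean. The unequal variances $s_i$ are handled by the diagonal bound $\mD \preceq \sigma^2 \mI$.
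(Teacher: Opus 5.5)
Your proposal is correct and follows essentially the same route as the paper's proof. Both arguments center by the common mean, then write the expected squared Frobenius norm as a trace so that independence makes $\mathbb{E}[(X-\mathbb{E}[X])^\top (X-\mathbb{E}[X])]$ diagonal with entries at most $\sigma^2$, then bound the result by $\sigma^2 \|\mW - \frac{1}{n}\mathbf{1}\mathbf{1}^\top\|_F^2$, and finally evaluate this via \cref{lemma:eigenvalues}. Your row-wise expansion of the trace is an unpacked form of the paper's operator-norm step, and your explicit check that $\mathbf{1}^\top(\mW - \frac{1}{n}\mathbf{1}\mathbf{1}^\top) = \mathbf{0}^\top$ spells out the centering justification that the paper leaves implicit.
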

\begin{proof}
We have
\begin{align*}
    \mathbb{E} \left\| X \left(  \mW - \frac{1}{n} \mathbf{1}\mathbf{1}^\top \right) \right\|^2_F 
    &= \mathbb{E} \left\| \left( X - \mathbb{E}[X] \right) \left(  \mW - \frac{1}{n} \mathbf{1}\mathbf{1}^\top \right) \right\|^2_F \\ 
    &= \mathbb{E} \Tr \left( \left(  \mW - \frac{1}{n} \mathbf{1}\mathbf{1}^\top \right) \left( X - \mathbb{E}[X] \right)^\top \left( X - \mathbb{E}[X] \right) \left(  \mW - \frac{1}{n} \mathbf{1}\mathbf{1}^\top \right) \right) \\
    &= \Tr \left( \left(  \mW - \frac{1}{n} \mathbf{1}\mathbf{1}^\top \right) \mathbb{E} \left[ \left( X - \mathbb{E}[X] \right)^\top \left( X - \mathbb{E}[X] \right) \right] \left(  \mW - \frac{1}{n} \mathbf{1}\mathbf{1}^\top \right) \right),
\end{align*}
where we use the assumption that $X_1, \dots, X_n$ has the same mean in the first equality.
Since $X_1 - \mathbb{E}[X_1], \dots, X_n - \mathbb{E}[X_n]$ are independent and have the mean of $\mathbf{0}$, we have
\begin{align*}
    \mathbb{E} \left[ \left( X - \mathbb{E}[X] \right)^\top \left( X - \mathbb{E}[X] \right) \right] 
    = \begin{pmatrix}
    \sigma_1^2   \\
    & \sigma_2^2 & \\
    & & \ddots   \\
    &        &     & \sigma_n^2
\end{pmatrix},
\end{align*}
where $\sigma_i^2 \coloneqq \mathbb{E} \| X_i - \mathbb{E} [X_i] \|^2$.
Thus, we obtain
\begin{align*}
    \mathbb{E} \left\| X \left(  \mW - \frac{1}{n} \mathbf{1}\mathbf{1}^\top \right) \right\|^2_F 
    &= \Tr \left( \left(  \mW - \frac{1}{n} \mathbf{1}\mathbf{1}^\top \right) \begin{pmatrix}
    \sigma_1^2   \\
    & \sigma_2^2 & \\
    & & \ddots   \\
    &        &     & \sigma_n^2
    \end{pmatrix} \left(  \mW - \frac{1}{n} \mathbf{1}\mathbf{1}^\top \right) \right) \\
    &= \left\| \begin{pmatrix}
    \sigma_1   \\
    & \sigma_2 & \\
    & & \ddots   \\
    &        &     & \sigma_n
    \end{pmatrix} \left(  \mW - \frac{1}{n} \mathbf{1}\mathbf{1}^\top \right) 
    \right\|^2_F \\
    &\leq \left\| \begin{pmatrix}
    \sigma_1   \\
    & \sigma_2 & \\
    & & \ddots   \\
    &        &     & \sigma_n
    \end{pmatrix} \right\|^2_\text{op} \left\| \mW - \frac{1}{n} \mathbf{1}\mathbf{1}^\top \right\|^2_F \\
    &\leq \sigma^2 \left\| \mW - \frac{1}{n} \mathbf{1}\mathbf{1}^\top \right\|^2_F,
\end{align*}
where we use $\sigma_i^2 \leq \sigma^2$ in the last inequality.
From \cref{lemma:eigenvalues}, $(\mW - \frac{1}{n} \mathbf{1}\mathbf{1}^\top)^2$ has eigenvalues $0, \lambda_2^2, \lambda_3^2, \dots, \lambda_n^2$.
Thus, we obtain
\begin{align*}
    \mathbb{E} \left\| X \left(  \mW - \frac{1}{n} \mathbf{1}\mathbf{1}^\top \right) \right\|^2_F 
    \leq \sigma^2 \sum_{i=2}^n \lambda_i^2.
\end{align*}
Dividing both sides by $n$, we obtain the desired statement.
\end{proof}

\newpage
\subsection{Proof in Convex Case}

\begin{lemma}
\label{lemma:descent_lemma_convex}
Suppose that \cref{assumption:graph,assumption:smoothness,assumption:stochastic_noise,assumption:heterogeneity_convex,assumption:convex} hold.
When $\eta \leq \tfrac{1}{12 L}$, it holds that
\begin{align*}
    \mathbb{E} \left\| \bar{\vx}^{(r+1)} - \vx^\star \right\|^2 
    \leq \left( 1 - \frac{\mu \eta}{2} \right) \mathbb{E} \left\| \bar{\vx}^{(r)} - \vx^\star \right\|^2 + \frac{\eta^2 \sigma^2_{\star}}{n} - \eta \left( \mathbb{E} f (\bar{\vx}^{(r)}) - f^\star \right) + 3 L \eta \Xi^{(r, 0)}.
\end{align*}
\end{lemma}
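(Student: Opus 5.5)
Since $\mW$ is doubly stochastic, averaging the update gives $\bar{\vx}^{(r+1)} = \bar{\vx}^{(r)} - \frac{\eta}{n}\sum_{i=1}^n \nabla F_i(\vx_i^{(r)};\xi_i^{(r)})$. This is exactly the setting of the descent lemma of \citet{koloskova2020unified} for the averaged iterate. I will reproduce that argument, with the stepsize condition $\eta\le\frac{1}{12L}$ controlling the constants.

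\textbf{Step 1: expand the square.} Write $\vg^{(r)} \coloneqq \frac{1}{n}\sum_i \nabla F_i(\vx_i^{(r)};\xi_i^{(r)})$ and $\bar{\vg}^{(r)} \coloneqq \frac1n\sum_i\nabla f_i(\vx_i^{(r)})$. Then
\[
\|\bar{\vx}^{(r+1)}-\vx^\star\|^2 = \|\bar{\vx}^{(r)}-\vx^\star\|^2 - 2\eta\langle \vg^{(r)},\bar{\vx}^{(r)}-\vx^\star\rangle + \eta^2\|\vg^{(r)}\|^2 .
\]
Taking $\mathbb{E}_r$, the cross term becomes $-2\eta\langle\bar{\vg}^{(r)},\bar{\vx}^{(r)}-\vx^\star\rangle$.

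\textbf{Step 2: the second moment.} Insert $\pm\nabla F_i(\vx^\star;\xi_i)$ and $\pm\nabla f_i(\vx^\star)$, using $\sum_i\nabla f_i(\vx^\star)=0$. Independence of the noise across nodes then gives a term $\frac{\sigma_\star^2}{n}$ (after splitting with constants). The remaining terms are bounded using $L$-smoothness and convexity of each $F_i(\cdot,\xi)$, via $\mathbb{E}\|\nabla F_i(\vx;\xi)-\nabla F_i(\vx^\star;\xi)\|^2\le 2L\,(f_i(\vx)-f_i(\vx^\star)-\langle\nabla f_i(\vx^\star),\vx-\vx^\star\rangle)$. Summing over $i$ yields
\[
\mathbb{E}_r\|\vg^{(r)}\|^2 \le \frac{2\sigma_\star^2}{n} + \frac{4L}{n}\sum_i D_{f_i}(\vx_i^{(r)},\vx^\star),
\]
where $D_{f_i}$ denotes the Bregman divergence. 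Each $D_{f_i}(\vx_i,\vx^\star)$ is in turn bounded by $2D_{f_i}(\bar{\vx},\vx^\star)+L\|\vx_i-\bar{\vx}\|^2$. Averaging, $\frac1n\sum_i D_{f_i}(\bar{\vx},\vx^\star) = f(\bar{\vx})-f^\star$ because $\sum_i \nabla f_i(\vx^\star)=0$.

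I expect this to be the main obstacle. The bound as stated has $\frac{\eta^2\sigma_\star^2}{n}$ with coefficient exactly $1$, not $2$. To get it, I will decompose $\vg^{(r)} = (\vg^{(r)}-\bar{\vg}^{(r)}) + \bar{\vg}^{(r)}$ and bound the noise part of the variance by $\frac{1}{n^2}\sum_i\mathbb{E}\|\nabla F_i(\vx_i;\xi_i)-\nabla f_i(\vx_i)\|^2$. Inserting $\pm$ the noise at $\vx^\star$ with Young's inequality weighted $(1+\epsilon)$ would still leave a constant above $1$. The more likely resolution is that the paper's $\sigma_\star^2$ convention absorbs constants, or that it splits the noise as in Koloskova et al. If needed, I will accept the constant $2$ and note that only $\mathcal{O}$-rates are used downstream.

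\textbf{Step 3: the inner product.} For each $i$, combine $\mu$-strong convexity and $L$-smoothness of $f_i$:
\[
\langle\nabla f_i(\vx_i),\vx_i-\vx^\star\rangle \ge f_i(\vx_i)-f_i(\vx^\star)+\tfrac{\mu}{2}\|\vx_i-\vx^\star\|^2,
\]
\[
\langle\nabla f_i(\vx_i),\bar{\vx}-\vx_i\rangle \ge f_i(\bar{\vx})-f_i(\vx_i)-\tfrac{L}{2}\|\bar{\vx}-\vx_i\|^2 .
\]
Summing these and using $\frac1n\sum_i\|\vx_i-\vx^\star\|^2\ge \frac12\|\bar{\vx}-\vx^\star\|^2 - \frac1n\sum_i\|\vx_i-\bar{\vx}\|^2$ gives
\[
-2\eta\langle\bar{\vg},\bar{\vx}-\vx^\star\rangle \le -2\eta(f(\bar{\vx})-f^\star) - \tfrac{\mu\eta}{2}\|\bar{\vx}-\vx^\star\|^2 + \eta(L+\mu)\,\Xi^{(r,0)} .
\]

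\textbf{Step 4: combine.} With $\eta\le\frac{1}{12L}$, the $\eta^2\cdot 8L(f(\bar{\vx})-f^\star)$ contribution from Step 2 is at most $\eta(f(\bar{\vx})-f^\star)$, so $-2\eta+\frac{2}{3}\eta\le-\eta$ on the function gap. The consensus contributions are $\eta(L+\mu)+4L^2\eta^2\le 3L\eta$, using $\mu\le L$. Taking full expectations then yields the claimed inequality.
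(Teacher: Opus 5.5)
The paper proves this lemma only by citing Lemma~8 of \citet{koloskova2020unified}, and your argument is that standard proof. Every step checks. With $\eta\le\frac{1}{12L}$ you get $-(2\eta-8L\eta^2)\le-\eta$ on the function gap and $\eta(L+\mu)+4L^2\eta^2\le 3L\eta$ on $\Xi^{(r,0)}$. So you prove the claimed inequality with $\frac{2\eta^2\sigma_\star^2}{n}$ in place of $\frac{\eta^2\sigma_\star^2}{n}$. Your Step~2 uses convexity of each $F_i(\cdot,\xi)$. That is genuinely needed, and the cited argument assumes it implicitly, although \cref{assumption:convex} only states convexity of $f_i$.

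Do not try to recover the factor $1$: under a noise bound only at $\vx^\star$ (\cref{assumption:stochastic_noise}) it is false. The variance at the current point, $V(\vx)=\mathbb{E}\|\nabla F(\vx;\xi)-\nabla f(\vx)\|^2$, can exceed $\sigma_\star^2$ to first order in $\vx-\vx^\star$. Near $\vx^\star$ at consensus, the slack on the right-hand side is only second order.

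For a concrete example, take $d=1$ and give every node $F(x;\xi)=\tfrac12(x-1)^2$ or $\tfrac14(x+2)^2$, each with probability $\tfrac12$. Then $L=1$, $x^\star=0$, $f(x)-f^\star=\tfrac38x^2$, $\sigma_\star^2=1$, $\zeta_\star=0$, and $V(x)=(1-\tfrac{x}{4})^2$. Take $\mu=0$; a larger $\mu$ only shrinks the right-hand side. Initialize all nodes at $x_0=-t$, so $\Xi^{(0,0)}=0$. At $r=0$, the left-hand side minus the right-hand side equals
\[
\frac{\eta^2}{n}\Bigl(\frac{t}{2}+\frac{t^2}{16}\Bigr)-\frac98\eta t^2+\frac{9}{16}\eta^2t^2 .
\]
This is positive for $0<t<\frac{4\eta}{9n}$ and any $\eta\le\frac{1}{12}$.

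Coefficient $1$ is obtainable only under the uniform bound of \cref{assumption:stochastic_noise_non_convex}, where the noise contributes at most $\eta^2\sigma^2/n$. So your version with constant $2$ is the correct statement. It changes only absolute constants in the convex and strongly convex rates built on this lemma.
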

\begin{proof}
See Lemma 8 in \citet{koloskova2020unified}.
\end{proof}

\begin{lemma}
\label{lemma:consensus_convex}
Suppose that \cref{assumption:graph,assumption:smoothness,assumption:stochastic_noise,assumption:heterogeneity_convex,assumption:convex} hold.
When $\eta \leq \tfrac{p}{5 L}$, it holds that
\begin{align}
    \Xi^{(r+1, k)} 
    &\leq \left( 1 + \frac{p}{2} \right) \Xi^{(r, k+1)}
    + \frac{p}{4} (1 - p)^{k+1} \Xi^{(r, 0)} \nonumber \\
    &+ \frac{58 L}{p} (1 - p)^{k+1} \eta^2 \left( f (\bar{\vx}^{(r)}) - f^\star \right)
    + \frac{5 \sigma^2_{\star} \eta^2}{n} \sum_{i=2}^{n} \lambda_i^{2 (k+1)}
    + \frac{9 \zeta^2_{\star} \eta^2}{p} (1 - p)^{k+1},
\end{align}
where 
\begin{align}
    \Xi^{(r+1, k)} 
    \coloneqq \frac{1}{n} \mathbb{E} \left\| \mX^{(r+1)} \mW^k - \bar{\mX}^{(r+1)} \right\|^2.
\end{align}
\end{lemma}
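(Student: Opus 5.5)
We unroll one step of the update and split it into a deterministic part and a noise part. Write $\mJ \coloneqq \tfrac{1}{n}\mathbf{1}\mathbf{1}^\top$. Since $\mW$ commutes with $\mJ$ and $\mW\mJ = \mJ$, we have $\mX^{(r+1)}\mW^k - \bar{\mX}^{(r+1)} = (\mX^{(r)} - \eta \nabla F(\mX^{(r)};\xi^{(r)}))(\mW^{k+1} - \mJ)$. Decompose the stochastic gradient as $\nabla F(\mX^{(r)}) + \mathbf{N}^{(r)}$, where $\mathbf{N}^{(r)}$ is the noise matrix. Conditioned on round $r$, $\mathbf{N}^{(r)}$ has mean zero, so the cross term vanishes. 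This gives
\[
\Xi^{(r+1,k)} = \tfrac1n \mathbb{E}\bigl\|(\mX^{(r)} - \eta\nabla F(\mX^{(r)}))(\mW^{k+1}-\mJ)\bigr\|_F^2 + \tfrac{\eta^2}{n}\mathbb{E}\bigl\|\mathbf{N}^{(r)}(\mW^{k+1}-\mJ)\bigr\|_F^2 .
\]
The columns of $\mathbf{N}^{(r)}$ are independent across nodes and mean zero. Applying \cref{lemma:general_form_of_average_spectral_gap} with $\mW^{k+1}$ in place of $\mW$ (its nontrivial eigenvalues are $\lambda_i^{k+1}$) bounds the noise term by $\tfrac{\eta^2}{n}\sum_i \mathbb{E}_r\|\nabla F_i(\vx_i^{(r)};\xi_i) - \nabla f_i(\vx_i^{(r)})\|^2 \cdot \lambda_i^{2(k+1)}$-type quantities. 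More precisely, the lemma bounds it by $\max_i \sigma_i^2 \cdot \tfrac{\eta^2}{n}\sum_{i\ge2}\lambda_i^{2(k+1)}$.

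In the convex setting the noise variance at $\vx_i$ is not uniformly bounded, so we bound it via smoothness of $F_i(\cdot,\xi)$. We have $\mathbb{E}\|\nabla F_i(\vx;\xi)-\nabla f_i(\vx)\|^2 \le 2\sigma_\star^2 + c L^2\|\vx-\vx^\star\|^2$, or, more usefully, a Bregman-type bound $\le 2\sigma_\star^2 + 4L D_{f_i}(\vx,\vx^\star)$-style. This yields the $5\sigma_\star^2$ coefficient. The residual state-dependent part must not carry the $\max_i$, so it is handled with the crude operator-norm bound $\|\mW^{k+1}-\mJ\|_{\mathrm{op}}^2 \le (1-p)^{k+1}$. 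That residual contributes terms of the form $L\eta^2(1-p)^{k+1}\bigl(f(\bar{\vx}^{(r)})-f^\star\bigr)$ and $L^2\eta^2(1-p)^{k+1}\Xi^{(r,0)}$, which are absorbed later.

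For the deterministic part, write $\mX^{(r)}(\mW^{k+1}-\mJ) = \mX^{(r)}\mW^{k+1} - \bar{\mX}^{(r)}$ and apply Young's inequality with parameter $p/2$:
\[
\le (1+\tfrac p2)\,\Xi^{(r,k+1)} + (1+\tfrac 2p)\tfrac{\eta^2}{n}\bigl\|\nabla F(\mX^{(r)})(\mW^{k+1}-\mJ)\bigr\|_F^2 .
\]
Bound the gradient term by $(1-p)^{k+1}\tfrac1n\|\nabla F(\mX^{(r)}) - \overline{\nabla F}\|_F^2$ using the operator-norm bound as in \cref{lemma:spectral_gap}. Then split $\nabla f_i(\vx_i) = [\nabla f_i(\vx_i) - \nabla f_i(\bar{\vx})] + [\nabla f_i(\bar{\vx}) - \nabla f_i(\vx^\star)] + \nabla f_i(\vx^\star)$. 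The first piece gives $3L^2\Xi^{(r,0)}$. The second gives $6L(f(\bar{\vx})-f^\star)$, via the standard smooth convex bound $\tfrac1n\sum_i\|\nabla f_i(\bar{\vx})-\nabla f_i(\vx^\star)\|^2 \le 2L(f(\bar{\vx})-f^\star)$. The third gives $3\zeta_\star^2$. Multiplying by $(1+2/p)\eta^2 \le 3\eta^2/p$ produces the $\zeta_\star^2/p$ and $L(f-f^\star)/p$ terms with constants close to $9$ and $58$ after also collecting the noise-residual terms. The $\Xi^{(r,0)}$ coefficient is $\tfrac{c L^2\eta^2}{p}(1-p)^{k+1}$, and using $\eta \le p/(5L)$ makes it at most $\tfrac p4(1-p)^{k+1}$.

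The main obstacle is the noise term. \Cref{lemma:general_form_of_average_spectral_gap} needs a uniform variance bound, but under \cref{assumption:stochastic_noise} the variance depends on $\vx_i^{(r)}$. We must split the noise into a part evaluated at $\vx^\star$, which is truly independent across nodes with variance $\le\sigma_\star^2$ and receives the refined $\sum_i\lambda_i^{2(k+1)}$ factor, and a state-dependent remainder. The remainder, $[\nabla F_i(\vx_i;\xi)-\nabla F_i(\vx^\star;\xi)]$ minus its mean, is still independent across nodes given $\mX^{(r)}$. Applying the lemma again with per-node variances, but bounding via the operator norm $(1-p)^{k+1}$ times the sum, expresses it through $L^2\|\vx_i-\bar{\vx}\|^2$ and $L(f(\bar{\vx})-f^\star)$. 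Tracking these constants so that they fit into $58$, $9$, $5$ and the $p/4$ coefficient is the delicate bookkeeping.
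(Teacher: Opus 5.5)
Your plan follows the paper's proof. Both split the error into the deterministic term plus $\eta^2$ times the noise term, with the cross term vanishing. On the deterministic part, both use Young's inequality with parameter $p/2$, then $\|\mW^{k+1}-\tfrac1n\mathbf{1}\mathbf{1}^\top\|_{\mathrm{op}}^2=(1-p)^{k+1}$, then the three-way split through $\bar{\vx}^{(r)}$ and $\vx^\star$; this gives exactly the paper's $\tfrac{9L^2}{p}$, $\tfrac{18L}{p}$ and $\tfrac{9\zeta_\star^2}{p}$. On the noise part, both isolate $\nabla F(\mX^\star;\xi^{(r)})-\nabla F(\mX^\star)$. That term is independent across nodes with variance at most $\sigma_\star^2$, so \cref{lemma:general_form_of_average_spectral_gap} (applied to $\mW^{k+1}$) is used only there, and everything state-dependent is bounded by the operator norm. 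The difference is minor. The paper splits the noise five ways, through $\nabla F(\bar{\mX}^{(r)};\xi^{(r)})$, $\nabla F(\mX^\star;\xi^{(r)})$, $\nabla F(\mX^\star)$ and $\nabla F(\bar{\mX}^{(r)})$, with a factor $5$ on each; that is the source of its $5\sigma_\star^2$. Your two-way split into the $\vx^\star$-noise and a centered remainder gives smaller constants. Drop your second paragraph's idea of inserting a per-node variance bound into the lemma, because the lemma pays $\max_i\sigma_i^2$; the split in your last paragraph is the right one.

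The step that fails as written is the final absorption. The deterministic part alone contributes $\tfrac{9L^2\eta^2}{p}(1-p)^{k+1}\Xi^{(r,0)}$. Your noise remainder adds roughly $4L^2\eta^2(1-p)^{k+1}\Xi^{(r,0)}$ under a factor-$2$ Young split. Under $\eta\le\tfrac{p}{5L}$, the deterministic part is only bounded by $\tfrac{9p}{25}(1-p)^{k+1}$, which exceeds $\tfrac p4(1-p)^{k+1}$. You would need the constant in front of $\tfrac{L^2\eta^2}{p}$ to be at most $\tfrac{25}{4}$.

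The paper's proof has the same slip in a worse form: it reaches $\tfrac{19L^2}{p}(1-p)^{k+1}\eta^2\Xi^{(r,0)}$, invokes $\eta\le\tfrac{p}{5L}$, and so only gets $\tfrac{19p}{25}$. The fault is therefore in the step-size constant, not in your strategy. With your bookkeeping the absorption is valid for $\eta\le\tfrac{p}{8L}$, since $\tfrac{13}{64}<\tfrac14$; with the paper's it is valid for $\eta\le\tfrac{p}{9L}$, since $\tfrac{19}{81}<\tfrac14$. This costs nothing downstream, because the final convex and strongly convex rates are derived with $\eta\le\tfrac{p}{48L}$ and $\eta\le\tfrac{p}{60L}$. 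The other constants ($5$, $9$, $58$) fit under either bookkeeping.
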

\begin{proof}
We have
\begin{align*}
    &\mathbb{E}_r \left\| \mX^{(r+1)} \mW^k - \bar{\mX}^{(r+1)} \right\|^2_F \\
    &= \mathbb{E}_r \left\| \mX^{(r)} \mW^{k+1} - \bar{\mX}^{(r)} - \eta \nabla F (\mX^{(r)} ; \xi^{(r)}) \mW^{k+1} + \eta \nabla F (\mX^{(r)} ; \xi^{(r)}) \frac{1}{n} \mathbf{1} \mathbf{1}^\top \right\|^2_F \\
    &= \underbrace{\left\| \mX^{(r)} \mW^{k+1} - \bar{\mX}^{(r)} - \eta \nabla F (\mX^{(r)}) \mW^{k+1} + \eta \nabla F (\mX^{(r)}) \frac{1}{n} \mathbf{1} \mathbf{1}^\top \right\|^2_F}_{T_1} \\
    &\quad + \eta^2 \underbrace{\mathbb{E}_r \left\| \left( \nabla F (\mX^{(r)} ; \xi^{(r)}) - \nabla F (\mX^{(r)}) \right) \left( \mW^{k+1} - \frac{1}{n} \mathbf{1}\mathbf{1}^\top \right) \right\|^2_F}_{T_2}.
\end{align*}
$T_1$ and $T_2$ are bounded as follows:
\begin{align*}
    T_1
    &\leq (1 + \frac{p}{2}) \left\| \mX^{(r)} \mW^{k+1} - \bar{\mX}^{(r)} \right\|^2_F + (1 + \frac{2}{p}) \eta^2 \left\| \nabla F (\mX^{(r)}) \left( \mW^{k+1} - \frac{1}{n} \mathbf{1}\mathbf{1}^\top \right) \right\|^2_F \\
    &\leq (1 + \frac{p}{2}) \left\| \mX^{(r)} \mW^{k+1} - \bar{\mX}^{(r)} \right\|^2_F \\
    &\quad + \frac{3}{p} \eta^2 \left\| \left( \nabla F (\mX^{(r)}) \pm \nabla F (\bar{\mX}^{(r)}) \pm \nabla F (\mX^\star) \right) \left( \mW^{k+1} - \frac{1}{n} \mathbf{1}\mathbf{1}^\top \right) \right\|^2_F \\
    &\leq (1 + \frac{p}{2}) \left\| \mX^{(r)} \mW^{k+1} - \bar{\mX}^{(r)} \right\|^2_F \\
    &\quad + \frac{3}{p} \eta^2 \left\| \nabla F (\mX^{(r)}) \pm \nabla F (\bar{\mX}^{(r)}) \pm \nabla F (\mX^\star) \right\|^2_F \left\| \mW^{k+1} - \frac{1}{n} \mathbf{1}\mathbf{1}^\top \right\|^2_\text{op} \\
    &\leq (1 + \frac{p}{2}) \left\| \mX^{(r)} \mW^{k+1} - \bar{\mX}^{(r)} \right\|^2_F
    + \frac{3}{p} (1 - p)^{k+1} \eta^2 \left\| \nabla F (\mX^{(r)}) \pm \nabla F (\bar{\mX}^{(r)}) \pm \nabla F (\mX^\star) \right\|^2_F \\
    &\leq (1 + \frac{p}{2}) \left\| \mX^{(r)} \mW^{k+1} - \bar{\mX}^{(r)} \right\|^2_F
    + \frac{9}{p} (1 - p)^{k+1} \eta^2 \left\| \nabla F (\mX^{(r)}) - \nabla F (\bar{\mX}^{(r)}) \right\|^2_F \\
    &\quad + \frac{9}{p} (1 - p)^{k+1} \eta^2 \left\| \nabla F (\bar{\mX}^{(r)}) - \nabla F (\mX^\star) \right\|^2_F
    + \frac{9}{p} (1 - p)^{k+1} \eta^2 \left\| \nabla F (\mX^\star) \right\|^2_F \\
    &\leq (1 + \frac{p}{2}) \left\| \mX^{(r)} \mW^{k+1} - \bar{\mX}^{(r)} \right\|^2_F
    + \frac{9 L^2}{p} (1 - p)^{k+1} \eta^2 \left\| \mX^{(r)} - \bar{\mX}^{(r)} \right\|^2_F \\
    &\quad + \frac{18 L n}{p} (1 - p)^{k+1} \eta^2 \left( f (\bar{\vx}^{(r)}) - f^\star \right)
    + \frac{9 n \zeta^2_{\star}}{p} (1 - p)^{k+1} \eta^2,
\end{align*}
where we use $\| \mW^{k+1} - \tfrac{1}{n}\mathbf{1}\mathbf{1}^\top \|^2_\text{op} = (1-p)^{k+1}$.
\begin{align*}
    T_2
    &\leq 5 \mathbb{E}_r \left\| \left( \nabla F (\mX^{(r)} ; \xi^{(r)}) - \nabla F (\bar{\mX}^{(r)} ; \xi^{(r)}) \right) \left( \mW^{k+1} - \frac{1}{n} \mathbf{1}\mathbf{1}^\top \right) \right\|^2_F \\
    &+ 5 \mathbb{E}_r \left\| \left( \nabla F (\bar{\mX}^{(r)} ; \xi^{(r)}) - \nabla F (\mX^\star ; \xi^{(r)}) \right) \left( \mW^{k+1} - \frac{1}{n} \mathbf{1}\mathbf{1}^\top \right) \right\|^2_F \\
    &+ 5 \mathbb{E}_r \left\| \left( \nabla F (\mX^\star ; \xi^{(r)}) - \nabla F (\mX^\star)  \right) \left( \mW^{k+1} - \frac{1}{n} \mathbf{1}\mathbf{1}^\top \right) \right\|^2_F \\
    &+ 5 \mathbb{E}_r \left\| \left( \nabla F (\mX^\star) - \nabla F (\bar{\mX}^{(r)}) \right) \left( \mW^{k+1} - \frac{1}{n} \mathbf{1}\mathbf{1}^\top \right) \right\|^2_F \\
    &+ 5 \mathbb{E}_r \left\| \left( \nabla F (\bar{\mX}^{(r)}) - \nabla F (\mX^{(r)}) \right) \left( \mW^{k+1} - \frac{1}{n} \mathbf{1}\mathbf{1}^\top \right) \right\|^2_F \\
    &\leq 10 (1 - p)^{k+1} L^2  \left\| \mX^{(r)} - \bar{\mX}^{(r)} \right\|^2_F
    + 40 (1 - p)^{k+1} n \left( f(\bar{\vx}^{(r)}) - f^\star \right) \\
    &+ 5 \mathbb{E}_r \left\| \left( \nabla F (\mX^\star ; \xi^{(r)}) - \nabla F (\mX^\star)  \right) \left( \mW^{k+1} - \frac{1}{n} \mathbf{1}\mathbf{1}^\top \right) \right\|^2_F.
\end{align*}
Using \cref{lemma:general_form_of_average_spectral_gap}, we have
\begin{align*}
    T_2
    &\leq 10 (1 - p)^{k+1} L^2  \left\| \mX^{(r)} - \bar{\mX}^{(r)} \right\|^2_F
    + 40 L (1 - p)^{k+1} n \left( f(\bar{\vx}^{(r)}) - f^\star \right)
    + 5 \sigma^2_{\star} \sum_{i=2}^{n} \lambda_i^{2 (k+1)}.
\end{align*}
Combining the above two inequalities, we have
\begin{align*}
    \Xi^{(r+1, k)} 
    &\leq \left( 1 + \frac{p}{2} \right) \Xi^{(r, k+1)}
    + \frac{19 L^2}{p} (1 - p)^{k+1} \eta^2 \Xi^{(r, 0)} \\
    &+ \frac{49}{p} (1 - p)^{k+1} \eta^2 \left( f (\bar{\vx}^{(r)}) - f^\star \right)
    + \frac{5 \sigma^2_{\star} \eta^2}{n} \sum_{i=2}^{n} \lambda_i^{2 (k+1)}
    + \frac{9 \zeta^2_{\star} \eta^2}{p} (1 - p)^{k+1}.
\end{align*}
Using $\eta \leq \tfrac{p}{5 L}$, we obtain the desired result.
\end{proof}

\begin{lemma}
\label{lemma:simple_bound_of_consensus_convex}
Suppose that \cref{assumption:graph,assumption:smoothness,assumption:stochastic_noise,assumption:heterogeneity_convex,assumption:convex} hold, and $\{ \vx_i^{(0)} \}_{i=1}^n$ are initialized to the same value.
When $\eta \leq \tfrac{p}{480 L}$, it holds that
\begin{align}
    \Xi^{(r, k)} 
    &\leq C (r, k)
    + (1 - p)^{k+1} \frac{p}{4} \sum_{r'=1}^{r-1} \left( (1 - p) (1 + \frac{3 p}{4}) \right)^{r - r' - 1} C (r', 0) \nonumber \\
    &\quad + \frac{58 L \eta^2}{p} (1 - p)^{k+1} \sum_{r'=0}^{r-1} \left( (1 - p) (1 + \frac{3 p}{4}) \right)^{r - r' - 1} \left( \mathbb{E} f (\bar{\vx}^{(r')}) - f^\star \right) \nonumber \\
    &\quad + \frac{9 \zeta^2_{\star} \eta^2}{p} (1 - p)^{k+1} \sum_{r'=0}^{r-1} \left( (1 - p) (1 + \frac{3p}{4}) \right)^{r - r' - 1}
\label{eq:simple_bound_of_consensus_convex}
\end{align}
where 
\begin{align*}
    C (r, k) \coloneqq \frac{5 \sigma^2_{\star} \eta^2}{n} \sum_{i=2}^n \lambda_i^{2 (k+1)} \sum_{r'=0}^{r-1} \left( \lambda_i^{2} (1 + \frac{p}{2}) \right)^{r'}.
\end{align*}
\end{lemma}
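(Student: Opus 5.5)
We prove \cref{eq:simple_bound_of_consensus_convex} by strong induction on $r$, simultaneously for all $k \geq 0$, unrolling the recursion of \cref{lemma:consensus_convex}. For brevity write $q \coloneqq (1-p)(1+\tfrac{3p}{4})$ and let $B(r,k)$ denote the right-hand side of \cref{eq:simple_bound_of_consensus_convex}. The base case $r=0$ is immediate: the nodes are initialized to the same value, so $\mX^{(0)} = \bar{\mX}^{(0)}$ and $\Xi^{(0,k)} = 0$ for every $k$, while $B(0,k)$ is a sum of empty sums. For the inductive step we assume $\Xi^{(r,k)} \leq B(r,k)$ for all $k$ and apply \cref{lemma:consensus_convex}:
\[
\Xi^{(r+1,k)} \leq (1+\tfrac{p}{2})\,B(r,k+1) + \tfrac{p}{4}(1-p)^{k+1} B(r,0) + (\text{fresh } f\text{-gap}, \sigma_\star \text{ and } \zeta_\star \text{ terms at round } r).
\]
We then check term by term that the result is at most $B(r+1,k)$.

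\emph{Noise term.} We have $(1+\tfrac{p}{2})\,C(r,k+1) + \tfrac{5\sigma_\star^2\eta^2}{n}\sum_i \lambda_i^{2(k+1)} = C(r+1,k)$ exactly. Indeed, multiplying by $\lambda_i^{2(k+2)}(1+\tfrac p2)$ shifts the geometric sum index by one, since $\lambda_i^{2(k+2)}(1+\tfrac p2)(\lambda_i^2(1+\tfrac p2))^{r'} = \lambda_i^{2(k+1)}(\lambda_i^2(1+\tfrac p2))^{r'+1}$, and the fresh noise term supplies the $r'=0$ summand. This is the key mechanism that keeps the per-eigenvalue structure; only $\lambda_i^2 \leq 1-p$ is needed for convergence of these sums.

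\emph{The $C(\cdot,0)$ term.} The fresh contribution $\tfrac p4(1-p)^{k+1} B(r,0)$ contributes $\tfrac p4 (1-p)^{k+1} C(r,0)$, which is the new $r'=r$ summand (exponent $q^0$) of the second term of $B(r+1,k)$.

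\emph{Tail terms.} Each remaining term of $B(r,k+1)$ carries a factor $(1-p)^{k+2}$. After multiplying by $(1+\tfrac p2)$ it becomes $(1-p)^{k+1}\cdot(1-p)(1+\tfrac p2)$ times the old sum. The tails of $\tfrac p4(1-p)^{k+1}B(r,0)$ carry $(1-p)^{k+1}\cdot \tfrac p4 (1-p)$. Adding the two gives $(1-p)^{k+1}(1-p)(1+\tfrac{3p}{4})$ times the old sums, which raises each $q$-exponent by one, exactly as $B(r+1,k)$ requires. The fresh $\tfrac{58L\eta^2}{p}(1-p)^{k+1}(f(\bar\vx^{(r)})-f^\star)$ and $\tfrac{9\zeta_\star^2\eta^2}{p}(1-p)^{k+1}$ terms supply the new $r'=r$ summands.

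\emph{Main obstacle: the $\tfrac p4(1-p)^{k+1}B(r,0)$ contribution.} In $B(r,0)$, every tail term has $(1-p)^{1}$ in front, and this is what produces the $\tfrac p4(1-p)$ factor above. However, the second term of $B(r,0)$, namely $\tfrac p4(1-p)\sum q^{\cdot}C(r',0)$, reinjects into the second term with coefficient $\tfrac p4(1-p)\cdot\tfrac p4$. Likewise, $C(r,0)$ must be absorbed as the new index, while the old index $r'=0$ disappears because $C(0,0)=0$. One must therefore verify that all coefficients combine to at most $(1-p)(1+\tfrac{3p}{4})$. The sum $(1-p)(1+\tfrac p2) + \tfrac p4(1-p) = q$ uses the whole budget, so the tail of $\tfrac p4 B(r,0)$ has to be counted only once. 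If this bookkeeping becomes loose, the first thing I would try is to loosen the constant from $\tfrac{3p}{4}$. Finally, the stepsize condition $\eta \leq \tfrac{p}{480L}$ enters only to guarantee $\eta \leq \tfrac{p}{5L}$, which is what \cref{lemma:consensus_convex} needs; the larger constant is reserved for later lemmas.
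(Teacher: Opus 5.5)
Your proposal is correct and follows the paper's proof. Both argue by induction on $r$, for all $k$ at once, via \cref{lemma:consensus_convex}, using the exact identity $(1+\frac{p}{2})C(r,k+1)+\frac{5\sigma_\star^2\eta^2}{n}\sum_{i=2}^n\lambda_i^{2(k+1)}=C(r+1,k)$ and the tail-coefficient identity $(1-p)(1+\frac{p}{2})+\frac{p}{4}(1-p)=(1-p)(1+\frac{3p}{4})$; your base case $r=0$ works just as well as the paper's $r=1$.

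The worry in your final paragraph is unnecessary. The reinjected $\frac{p}{4}(1-p)\cdot\frac{p}{4}$ contribution is exactly the tail you already counted in your tail-terms step. So the coefficients combine with equality, and there is no need to loosen the constant $\frac{3p}{4}$.
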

\begin{proof}
When $r=1$, \cref{eq:simple_bound_of_consensus_convex} holds since $\Xi^{(0, k)} = 0$ for any $k$.

Suppose that \cref{eq:simple_bound_of_consensus_convex} holds when $r = r''$.
We have
\begin{align*}
    \Xi^{(r''+1, k)} 
    &\leq \left( 1 + \frac{p}{2} \right) \Xi^{(r'', k+1)}
    + \frac{p}{4} (1 - p)^{k+1} \Xi^{(r'', 0)} \\
    &+ \frac{58 L}{p} (1 - p)^{k+1} \eta^2 \left( f (\bar{\vx}^{(r'')}) - f^\star \right)
    + \frac{5 \sigma^2_{\star} \eta^2}{n} \sum_{i=2}^{n} \lambda_i^{2 (k+1)}
    + \frac{9 \zeta^2_{\star} \eta^2}{p} (1 - p)^{k+1} \\
    &\leq \left( 1 + \frac{p}{2} \right) C (r'', k+1)
    + \frac{5 \sigma^2_{\star} \eta^2}{n} \sum_{i=2}^{n} \lambda_i^{2 (k+1)} \\
    &\quad + (1 - p)^{k+1} \frac{p}{4} \sum_{r'=1}^{r''} \left( (1 - p) (1 + \frac{3 p}{4}) \right)^{r'' - r'} C (r', 0) \\
    &\quad + \frac{58 L \eta^2}{p} (1 - p)^{k+1} \sum_{r'=0}^{r''} \left( (1 - p) (1 + \frac{3 p}{4}) \right)^{r'' - r'} \left( \mathbb{E} f (\bar{\vx}^{(r')}) - f^\star \right) \\
    &\quad + \frac{9 \zeta^2_{\star} \eta^2}{p} (1 - p)^{k+1} \sum_{r'=0}^{r''} \left( (1 - p) (1 + \frac{3p}{4}) \right)^{r'' - r'}.
\end{align*}
Using 
\begin{align*}
    \left( 1 + \frac{p}{2} \right) C (r'', k+1)
    + \frac{5 \sigma^2_{\star} \eta^2}{n} \sum_{i=2}^{n} \lambda_i^{2 (k+1)} 
    = C (r'' + 1, k),
\end{align*}
\cref{eq:simple_bound_of_consensus_convex} holds when $r = r''+1$.
Using mathematical induction, we can conclude the statement.
\end{proof}

\begin{lemma}
\label{lemma:simple_bound_of_consensus_convex_2}
Suppose that \cref{assumption:graph,assumption:smoothness,assumption:stochastic_noise,assumption:heterogeneity_convex,assumption:convex} hold, and $\{ \vx_i^{(0)} \}_{i=1}^n$ are initialized to the same value.
When $\eta \leq \tfrac{p}{5 L}$, it holds that
\begin{align}
    \Xi^{(r, 0)} 
    &\leq \frac{10 \sigma^2_{\star} \eta^2}{n} \sum_{i=2}^n \frac{\lambda_i^2}{1 - \lambda_i^2} 
    + \frac{36 (1 - p) \zeta^2_{\star} \eta^2}{p^2} \nonumber \\
    &\quad + \frac{58 L \eta^2}{p} (1 - p) \sum_{r'=0}^{r-1} \left( (1 - p) (1 + \frac{3 p}{4}) \right)^{r - r' - 1} \left( \mathbb{E} f (\bar{\vx}^{(r')}) - f^\star \right).
\end{align}
\end{lemma}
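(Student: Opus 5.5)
The plan is to start from \cref{lemma:simple_bound_of_consensus_convex}, set $k=0$, and bound each of its four terms separately.

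\textbf{The $C$ terms.} Using $\lambda_i^2 \leq 1-p$ for $i\ge 2$, we get $\lambda_i^2(1+\tfrac{p}{2}) \leq (1-p)(1+\tfrac p2) \le 1 - \tfrac p2$. The geometric sum in $C(r,k)$ is therefore finite, and we aim for the uniform bound
\begin{align*}
C(r,k) \leq \frac{5\sigma_\star^2\eta^2}{n}\sum_{i=2}^n \frac{\lambda_i^{2(k+1)}}{1-\lambda_i^2(1+\frac p2)}.
\end{align*}
The denominator must then be compared with $1-\lambda_i^2$. We have
\begin{align*}
1-\lambda_i^2(1+\tfrac p2) = (1-\lambda_i^2) - \tfrac p2\lambda_i^2 ,
\end{align*}
and since $p \le 1-\lambda_i^2$ for every $i\ge2$, this is at least $(1-\lambda_i^2)(1-\tfrac{\lambda_i^2}{2}) \ge \tfrac12(1-\lambda_i^2)$. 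Hence $C(r,0)\le \frac{10\sigma_\star^2\eta^2}{n}\sum_{i\ge2}\frac{\lambda_i^2}{1-\lambda_i^2}$. This is already the full $\sigma_\star^2$ coefficient in the claim, so the second term of \cref{eq:simple_bound_of_consensus_convex} has to be absorbed without increasing this constant.

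\textbf{Main obstacle: the second term.} With $k=0$ this term is $(1-p)\frac p4\sum_{r'}q^{r-r'-1}C(r',0)$, where $q\coloneqq (1-p)(1+\frac{3p}4)$. Since $1-q = \tfrac p4 + \tfrac{3p^2}{4} \ge \tfrac p4$, the geometric factor gives $\sum_{r'} q^{r-r'-1} \le \frac{1}{1-q}\le \frac4p$. Bounding $C(r',0)$ crudely by the uniform bound would then add another full copy of $\frac{10\sigma_\star^2\eta^2}{n}\sum_i\frac{\lambda_i^2}{1-\lambda_i^2}$, which would double the constant to $20$. I would therefore bound more carefully, per eigenvalue:
\begin{align*}
\lambda_i^2 + \tfrac p4(1-p)\lambda_i^2\cdot \tfrac{4}{p}\cdot\frac{1}{1-\lambda_i^2(1+\frac p2)} .
\end{align*}
In this sum, the first $C(r,0)$ term carries the factor $\lambda_i^2$, while the second carries the extra factor $(1-p)\lambda_i^2$. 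If the exact constants are too tight, the fallback is to use the per-eigenvalue sum together with $1-q\ge \tfrac p4(1+3p)$, or else to accept a constant like $20$. Since the later theorem is only stated in $\mathcal O$, the exact constant is inessential, but I would try first to make the sum match $10$. Resolving this constant is the step I expect to need the most care.

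\textbf{The $\zeta_\star$ and function-value terms.} For the $\zeta_\star^2$ term, $\frac{9\zeta_\star^2\eta^2}{p}(1-p)\sum q^{\cdot}\le \frac{9\zeta_\star^2\eta^2(1-p)}{p}\cdot\frac4p = \frac{36(1-p)\zeta_\star^2\eta^2}{p^2}$, which matches the claim exactly. The function-value term is copied verbatim with $k=0$, giving the factor $(1-p)^{1}$. Finally, the step-size condition in \cref{lemma:simple_bound_of_consensus_convex} is $\eta\le p/(480L)$, whereas the claim uses $p/(5L)$. Since \cref{lemma:simple_bound_of_consensus_convex}'s induction only uses \cref{lemma:consensus_convex}, which requires $\eta\le p/(5L)$, I would note that the weaker condition suffices.
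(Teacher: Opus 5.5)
Your route is the paper's route. You specialize \cref{lemma:simple_bound_of_consensus_convex} to $k=0$, bound $C(r,0)$ by a geometric series, bound the $\zeta_\star^2$ term and the double sum using $q\coloneqq(1-p)(1+\frac{3p}{4})\le 1-\frac p4$, and copy the function-value term. Your bound $C(r,0)\le\frac{10\sigma_\star^2\eta^2}{n}\sum_{i\ge2}\frac{\lambda_i^2}{1-\lambda_i^2}$ is correct. So is your remark that only $\eta\le\frac{p}{5L}$ is needed from \cref{lemma:simple_bound_of_consensus_convex}; the paper uses this without saying so.

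The step you leave open cannot be closed. When $p<\frac13$, the constant $10$ does not follow from \cref{lemma:simple_bound_of_consensus_convex}. Take $n=2$, so the only eigenvalue satisfies $\lambda_2^2=1-p$, set $\zeta_\star=0$ (the function-value terms on both sides are identical), and let $r\to\infty$. Then $1-\lambda_2^2(1+\frac p2)=\frac{p(1+p)}{2}$ and $\frac{(1-p)p/4}{1-q}=\frac{1-p}{1+3p}$. Hence the $\sigma_\star^2$-part of the right-hand side of \cref{lemma:simple_bound_of_consensus_convex} at $k=0$ tends to
\begin{align*}
\frac{5\sigma_\star^2\eta^2}{n}\cdot\frac{2(1-p)}{p(1+p)}\cdot\frac{2(1+p)}{1+3p}
=\frac{20}{1+3p}\cdot\frac{\sigma_\star^2\eta^2}{n}\cdot\frac{\lambda_2^2}{1-\lambda_2^2},
\end{align*}
which is strictly larger than the claimed $10\cdot\frac{\sigma_\star^2\eta^2}{n}\frac{\lambda_2^2}{1-\lambda_2^2}$. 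In general, $\frac{1-a}{1-a(1+p/2)}$ is increasing in $a$, so it is at most $\frac{2}{1+p}$ on $a\in[0,1-p]$. This shows the sharpest constant this route can give is $\frac{20}{1+3p}\le 20$, i.e.\ your fallback.

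The paper reaches $10$ only through a slip. It evaluates $\sum_{r'}\bigl(1-\frac{1-\lambda_i^2}{2}\bigr)^{r'}$ as $\frac{1}{1-\lambda_i^2}$ instead of $\frac{2}{1-\lambda_i^2}$, both in $C(r,0)$ and in the double-sum term, and then adds $5+5(1-p)\le 10$. The lemma with $10$ may still be true, but neither argument proves it. You should therefore state and prove it with $20$ (or $\frac{20}{1+3p}$). This changes only absolute constants in \cref{theorem:ours}, because the $\sigma_\star^2$ term is never absorbed into the left-hand side there.
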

\begin{proof}
From \cref{lemma:simple_bound_of_consensus_convex}, we have
\begin{align*}
    \Xi^{(r, 0)} 
    &\leq C (r, 0)
    + \underbrace{(1 - p) \frac{p}{4} \sum_{r'=1}^{r-1} \left( (1 - p) (1 + \frac{3 p}{4}) \right)^{r - r' - 1} C (r', 0)}_{T_1}  \\
    &\quad + \frac{58 L \eta^2}{p} (1 - p) \sum_{r'=0}^{r-1} \left( (1 - p) (1 + \frac{3 p}{4}) \right)^{r - r' - 1} \left( \mathbb{E} f (\bar{\vx}^{(r')}) - f^\star \right) \\
    &\quad + \underbrace{\frac{9 \zeta^2_{\star} \eta^2}{p} (1 - p) \sum_{r'=0}^{r-1} \left( (1 - p) (1 + \frac{3p}{4}) \right)^{r - r' - 1}}_{T_2}. 
\end{align*}
We have
\begin{align*}
     C (r, 0)
     &= \frac{5 \sigma^2_{\star} \eta^2}{n} \sum_{i=2}^n \lambda_i^{2} \sum_{r'=0}^{r-1} \left( \lambda_i^{2} (1 + \frac{p}{2}) \right)^{r'} \\
     &\leq \frac{5 \sigma^2_{\star} \eta^2}{n} \sum_{i=2}^n \lambda_i^{2} \sum_{r'=0}^{r-1} \left( 1 - \frac{1 - \lambda_i^{2}}{2} \right)^{r'} \\
     &\leq \frac{5 \sigma^2_{\star} \eta^2}{n} \sum_{i=2}^n \frac{\lambda_i^{2}}{1 - \lambda_i^2},
\end{align*}
where we use $p = 1 - \max_{i \geq 2} (\lambda_i^2) \leq 1 - \lambda_i^2$ for all $i \geq 2$ in the first inequality.

$T_1$ is bounded as follows:
\begin{align*}
    T_1 
    &= (1 - p) \frac{5 \sigma^2_{\star} \eta^2 p}{4 n} \sum_{i=2}^n \lambda_i^{2} \sum_{r'=1}^{r-1} \sum_{r''=0}^{r'-1} \left( \lambda_i^{2} (1 + \frac{p}{2}) \right)^{r''} \left( (1 - p) (1 + \frac{3 p}{4}) \right)^{r - r' - 1}  \\
    &= (1 - p) \frac{5 \sigma^2_{\star} \eta^2 p}{4 n} \sum_{i=2}^n \lambda_i^{2} \sum_{r''=0}^{r-2} \left( \lambda_i^{2} (1 + \frac{p}{2}) \right)^{r''} \sum_{r'=r''+1}^{r-1} \left( (1 - p) (1 + \frac{3 p}{4}) \right)^{r - r' - 1} \\
    &\leq (1 - p) \frac{5 \sigma^2_{\star} \eta^2 p}{4 n} \sum_{i=2}^n \lambda_i^{2} \sum_{r''=0}^{r-2} \left( \lambda_i^{2} (1 + \frac{p}{2}) \right)^{r''} \sum_{r'=r''+1}^{r-1} \left( 1 - \frac{p}{4} \right)^{r - r' - 1} \\
    &\leq (1 - p) \frac{5 \sigma^2_{\star} \eta^2}{n} \sum_{i=2}^n \lambda_i^{2} \sum_{r''=0}^{r-2} \left( \lambda_i^{2} (1 + \frac{p}{2}) \right)^{r''} \\
    &\leq (1 - p) \frac{5 \sigma^2_{\star} \eta^2}{n} \sum_{i=2}^n \lambda_i^{2} \sum_{r''=0}^{r-2} \left( 1 - \frac{1 - \lambda_i^{2}}{2} \right)^{r''} \\
    &\leq (1 - p) \frac{5 \sigma^2_{\star} \eta^2}{n} \sum_{i=2}^n \frac{\lambda_i^{2}}{1 - \lambda_i^2}.
\end{align*}

$T_2$ is bounded as follows:
\begin{align*}
    T_2 
    \leq \frac{9 \zeta^2_{\star} \eta^2}{p} (1 - p) \sum_{r'=0}^{r-1} \left( 1 - \frac{p}{4} \right)^{r - r' - 1}
    \leq \frac{36 (1 - p) \zeta^2_{\star} \eta^2}{p^2}.
\end{align*}
Combining the above two inequalities, we obtain the desired result.
\end{proof}

\begin{lemma}
\label{lemma:simple_bound_of_consensus_convex_3}
Suppose that \cref{assumption:graph,assumption:smoothness,assumption:stochastic_noise,assumption:heterogeneity_convex,assumption:convex} hold, and $\{ \vx_i^{(0)} \}_{i=1}^n$ are initialized to the same value.
When $\eta \leq \tfrac{p}{5 L}$, it holds that
\begin{align*}
    \frac{1}{R+1} \sum_{r=0}^{R} \Xi^{(r, 0)} 
    \leq \frac{10 \sigma^2_{\star} \eta^2}{n} \sum_{i=2}^n \frac{\lambda_i^2}{1 - \lambda_i^2} 
    + \frac{36 (1 - p) \zeta^2_{\star} \eta^2}{p^2} 
    + \frac{240 L \eta^2}{p^2} \frac{(1 - p)}{R+1} \sum_{r'=0}^{R-1} \left( \mathbb{E} f (\bar{\vx}^{(r')}) - f^\star \right).
\end{align*}
\end{lemma}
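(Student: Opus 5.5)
The plan is to average the pointwise bound of \cref{lemma:simple_bound_of_consensus_convex_2} over $r = 0, \dots, R$. Since all $\vx_i^{(0)}$ are initialized to the same value, $\Xi^{(0,0)} = 0$, so the bound is needed only for $r \geq 1$.

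For each such $r$, \cref{lemma:simple_bound_of_consensus_convex_2} gives three pieces. The first two, $\tfrac{10 \sigma^2_\star \eta^2}{n} \sum_{i=2}^n \tfrac{\lambda_i^2}{1-\lambda_i^2}$ and $\tfrac{36 (1-p) \zeta_\star^2 \eta^2}{p^2}$, do not depend on $r$. Averaging them over $R+1$ rounds therefore reproduces them unchanged, which accounts for the first two terms of the claim.

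The only real work is the third piece,
\begin{align*}
    \frac{58 L \eta^2 (1-p)}{p} \sum_{r'=0}^{r-1} q^{\,r-r'-1} \left( \mathbb{E} f (\bar{\vx}^{(r')}) - f^\star \right),
    \qquad q \coloneqq (1-p)\left(1 + \tfrac{3p}{4}\right).
\end{align*}
First bound the contraction factor: $q = 1 - \tfrac{p}{4} - \tfrac{3p^2}{4} \leq 1 - \tfrac{p}{4}$. Then sum over $r = 1, \dots, R$ and exchange the order of summation. Each term $\mathbb{E} f(\bar{\vx}^{(r')}) - f^\star$ with $0 \leq r' \leq R-1$ picks up the weight
\begin{align*}
    \sum_{r=r'+1}^{R} q^{\,r-r'-1} \leq \sum_{j \geq 0} \left(1 - \tfrac{p}{4}\right)^j = \frac{4}{p}.
\end{align*}
This uses that each summand is nonnegative, since $\bar{\vx}^{(r')}$ is a point and $f^\star$ is the minimum. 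Multiplying gives the coefficient $\tfrac{58 \cdot 4 \, L \eta^2 (1-p)}{p^2} = \tfrac{232 L \eta^2 (1-p)}{p^2} \leq \tfrac{240 L \eta^2 (1-p)}{p^2}$. Dividing by $R+1$ then yields exactly the third term of the claim, with the sum running over $r' = 0, \dots, R-1$.

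The only step needing care is this geometric double-sum exchange; everything else is bookkeeping. The stepsize condition $\eta \leq \tfrac{p}{5L}$ is inherited directly from \cref{lemma:simple_bound_of_consensus_convex_2}.
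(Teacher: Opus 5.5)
Your proposal is correct and follows the paper's proof essentially step for step. Like the paper, you use $\Xi^{(0,0)}=0$, average the bound of \cref{lemma:simple_bound_of_consensus_convex_2}, exchange the order of the double sum, bound $(1-p)(1+\tfrac{3p}{4}) \le 1-\tfrac{p}{4}$, and sum the geometric series to $\tfrac{4}{p}$, so that $58\cdot 4 = 232 \le 240$. You also state explicitly that $\mathbb{E} f(\bar{\vx}^{(r')}) - f^\star \ge 0$ is needed for the coefficient bound, which the paper uses only implicitly.
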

\begin{proof}
Using $\Xi^{(0, 0)} = 0$ and \cref{lemma:simple_bound_of_consensus_convex_2}, we have
\begin{align*}
    \frac{1}{R+1} \sum_{r=0}^{R} \Xi^{(r, 0)} 
    &\leq \frac{10 \sigma^2_{\star} \eta^2}{n} \sum_{i=2}^n \frac{\lambda_i^2}{1 - \lambda_i^2} 
    + \frac{36 (1 - p) \zeta^2_{\star} \eta^2}{p^2} \\
    &\quad + \frac{58 L \eta^2}{p} \frac{(1 - p)}{R+1} \sum_{r=1}^{R} \sum_{r'=0}^{r-1} \left( (1 - p) (1 + \frac{3 p}{4}) \right)^{r - r' - 1} \left( \mathbb{E} f (\bar{\vx}^{(r')}) - f^\star \right) \\
    &= \frac{10 \sigma^2_{\star} \eta^2}{n} \sum_{i=2}^n \frac{\lambda_i^2}{1 - \lambda_i^2} 
    + \frac{36 (1 - p) \zeta^2_{\star} \eta^2}{p^2} \\
    &\quad + \frac{58 \eta^2}{p} \frac{(1 - p)}{R+1} \sum_{r'=0}^{R-1} \left( \mathbb{E} f (\bar{\vx}^{(r')}) - f^\star \right) \sum_{r=r'+1}^{R} \left( (1 - p) (1 + \frac{3 p}{4}) \right)^{r - r' - 1} \\
    &\leq \frac{10 \sigma^2_{\star} \eta^2}{n} \sum_{i=2}^n \frac{\lambda_i^2}{1 - \lambda_i^2} 
    + \frac{36 (1 - p) \zeta^2_{\star} \eta^2}{p^2} \\
    &\quad + \frac{58 L \eta^2}{p} \frac{(1 - p)}{R+1} \sum_{r'=0}^{R-1} \left( \mathbb{E} f (\bar{\vx}^{(r')}) - f^\star \right) \sum_{r=r'+1}^{R} \left( 1 - \frac{p}{4} \right)^{r - r' - 1} \\
    &\leq \frac{10 \sigma^2_{\star} \eta^2}{n} \sum_{i=2}^n \frac{\lambda_i^2}{1 - \lambda_i^2} 
    + \frac{36 (1 - p) \zeta^2_{\star} \eta^2}{p^2} 
    + \frac{240 L \eta^2}{p^2} \frac{(1 - p)}{R+1} \sum_{r'=0}^{R-1} \left( \mathbb{E} f (\bar{\vx}^{(r')}) - f^\star \right).
\end{align*}
\end{proof}

\begin{lemma}
\label{lemma:simple_bound_of_consensus_strongly_convex_3}
Suppose that \cref{assumption:graph,assumption:smoothness,assumption:stochastic_noise,assumption:heterogeneity_convex,assumption:convex} hold, and $\{ \vx_i^{(0)} \}_{i=1}^n$ are initialized to the same value.
When $\eta \leq \tfrac{p}{5 L}$, it holds that
\begin{align*}
    \frac{1}{W_R} \sum_{r=0}^{R} w_r \Xi^{(r, 0)} 
    &\leq \frac{10 \sigma^2_{\star} \eta^2}{n} \sum_{i=2}^n \frac{\lambda_i^2}{1 - \lambda_i^2} 
    + \frac{36 (1 - p) \zeta^2_{\star} \eta^2}{p^2} \\
    &\quad + \frac{3712 L \eta^2}{7 p^2} \frac{(1 - p)}{R+1} \sum_{r'=0}^{R-1} w_{r'} \left( \mathbb{E} f (\bar{\vx}^{(r')}) - f^\star \right),
\end{align*}
where $w_r$ satisfies $0 < w_{r} \leq w_{r+1} \leq (1 + \tfrac{p}{32}) w_{r}$ and $W_R \coloneqq \sum_{r=0}^R w_r$.
\end{lemma}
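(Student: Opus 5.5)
The plan mirrors the proof of \cref{lemma:simple_bound_of_consensus_convex_3}, with the uniform average replaced by the weighted average $\tfrac{1}{W_R}\sum_r w_r(\cdot)$. I start from \cref{lemma:simple_bound_of_consensus_convex_2}, which bounds each $\Xi^{(r,0)}$ by three pieces. The first two are $\tfrac{10\sigma_\star^2\eta^2}{n}\sum_{i\ge2}\tfrac{\lambda_i^2}{1-\lambda_i^2}$ and $\tfrac{36(1-p)\zeta_\star^2\eta^2}{p^2}$, both independent of $r$. The third is the convolution term
\begin{align*}
\frac{58L\eta^2}{p}(1-p)\sum_{r'=0}^{r-1}q^{\,r-r'-1}\bigl(\mathbb{E} f(\bar{\vx}^{(r')})-f^\star\bigr), \qquad q\coloneqq(1-p)\bigl(1+\tfrac{3p}{4}\bigr)\le 1-\tfrac{p}{4}.
\end{align*}
I multiply by $w_r$, sum over $r=0,\dots,R$ (using $\Xi^{(0,0)}=0$), and divide by $W_R$. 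The constant pieces pass through unchanged because $\sum_r w_r/W_R=1$, which gives the first two terms of the claim directly.

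For the convolution term I exchange the order of summation, as in \cref{lemma:simple_bound_of_consensus_convex_3}:
\begin{align*}
\sum_{r=1}^{R} w_r\sum_{r'=0}^{r-1}q^{\,r-r'-1}a_{r'}=\sum_{r'=0}^{R-1}a_{r'}\sum_{r=r'+1}^{R}w_r\,q^{\,r-r'-1},
\end{align*}
where $a_{r'}\coloneqq\mathbb{E} f(\bar{\vx}^{(r')})-f^\star\ge0$.

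The new ingredient is the growth condition $w_r\le(1+\tfrac{p}{32})^{r-r'}w_{r'}$. With it, the inner sum is bounded by $w_{r'}(1+\tfrac{p}{32})\sum_{j\ge0}\bigl((1-\tfrac p4)(1+\tfrac p{32})\bigr)^j$. Since $(1-\tfrac p4)(1+\tfrac p{32})\le 1-\tfrac{7p}{32}$, the geometric series is at most $\tfrac{32}{7p}$, and $1+\tfrac p{32}\le 2$. Multiplying by the prefactor $58$ gives a constant of order $\tfrac{3712}{7}=58\cdot 64/7$, matching the claimed coefficient $\tfrac{3712L\eta^2}{7p^2}(1-p)$ times $\sum_{r'}w_{r'}a_{r'}$.

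The one point needing care is the normalization: the claim displays $\tfrac{1}{R+1}$ in front of $\sum_{r'} w_{r'}a_{r'}$, while the natural outcome of the argument is $\tfrac{1}{W_R}$. Since $w_r$ is nondecreasing and positive, $W_R\ge (R+1)w_0$, so the two differ only by normalization. I will keep $\tfrac{1}{W_R}$, which is what the downstream strongly-convex argument uses when combining with \cref{lemma:descent_lemma_convex}, and read the displayed $\tfrac{1}{R+1}$ as that normalized weighted average. Apart from this and the bookkeeping of the constant $(1+\tfrac p{32})\cdot\tfrac{32}{7p}$, I expect no real obstacle: the main step is the exchange of summation combined with absorbing the weight growth into the geometric decay $1-\tfrac p4$.
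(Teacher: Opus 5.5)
Your proposal is correct and follows the paper's proof essentially line for line. You start from \cref{lemma:simple_bound_of_consensus_convex_2}, bound $w_r \le 2(1+\tfrac{p}{32})^{r-r'-1}w_{r'}$, and merge this with $(1-p)(1+\tfrac{3p}{4})\le 1-\tfrac p4$ into the ratio $1-\tfrac{7p}{32}$; you then exchange the sums and sum the geometric series to get $116\cdot\tfrac{32}{7}=\tfrac{3712}{7}$. You are also right that the argument yields the normalization $\tfrac{1}{W_R}$, as do the paper's own proof and its downstream use, so the displayed $\tfrac{1}{R+1}$ is a typo. However, converting to $\tfrac{1}{R+1}$ would need $W_R\ge R+1$, which holds for $w_r=(1-\tfrac{\mu\eta}{2})^{-(r+1)}\ge 1$; $W_R\ge (R+1)w_0$ alone is not enough, because the $\tfrac{1}{R+1}$ form is not invariant under rescaling the weights.
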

\begin{proof}
Using $\Xi^{(0, 0)} = 0$ and \cref{lemma:simple_bound_of_consensus_convex_2}, we have
\begin{align*}
    \frac{1}{W_R} \sum_{r=0}^{R} \Xi^{(r, 0)} 
    &\leq \frac{10 \sigma^2_{\star} \eta^2}{n} \sum_{i=2}^n \frac{\lambda_i^2}{1 - \lambda_i^2} 
    + \frac{36 (1 - p) \zeta^2_{\star} \eta^2}{p^2} \\
    &\quad + \frac{58 L \eta^2}{p} \frac{(1 - p)}{W_R} \sum_{r=1}^{R} w_r \sum_{r'=0}^{r-1} \left( (1 - p) (1 + \frac{3 p}{4}) \right)^{r - r' - 1} \left( \mathbb{E} f (\bar{\vx}^{(r')}) - f^\star \right) \\
&\leq \frac{10 \sigma^2_{\star} \eta^2}{n} \sum_{i=2}^n \frac{\lambda_i^2}{1 - \lambda_i^2} 
    + \frac{36 (1 - p) \zeta^2_{\star} \eta^2}{p^2} \\
    &\quad + \frac{116 L \eta^2}{p} \frac{(1 - p)}{W_R} \sum_{r=1}^{R} \sum_{r'=0}^{r-1} \left( 1 + \frac{p}{32} \right)^{ r - r' - 1} \left( 1 - \frac{p}{4} \right)^{r - r' - 1} w_{r'} \left( \mathbb{E} f (\bar{\vx}^{(r')}) - f^\star \right) \\
    &\leq \frac{10 \sigma^2_{\star} \eta^2}{n} \sum_{i=2}^n \frac{\lambda_i^2}{1 - \lambda_i^2} 
    + \frac{36 (1 - p) \zeta^2_{\star} \eta^2}{p^2} \\
    &\quad + \frac{116 \eta^2}{p} \frac{(1 - p)}{W_R} \sum_{r'=0}^{R-1} w_{r'} \left( \mathbb{E} f (\bar{\vx}^{(r')}) - f^\star \right) \sum_{r=r'+1}^{R} \left( 1 - \frac{7 p}{32} \right)^{r - r' - 1} \\
    &\leq \frac{10 \sigma^2_{\star} \eta^2}{n} \sum_{i=2}^n \frac{\lambda_i^2}{1 - \lambda_i^2} 
    + \frac{36 (1 - p) \zeta^2_{\star} \eta^2}{p^2} \\
    &\quad + \frac{3712 L \eta^2}{7 p^2} \frac{(1 - p)}{W_R} \sum_{r'=0}^{R-1} w_{r'} \left( \mathbb{E} f (\bar{\vx}^{(r')}) - f^\star \right).
\end{align*}
\end{proof}

\begin{lemma}
Suppose that \cref{assumption:graph,assumption:smoothness,assumption:stochastic_noise,assumption:heterogeneity_convex,assumption:convex} hold with $\mu = 0$, and $\{ \vx_i^{(0)} \}_{i=1}^n$ are initialized to the same value.
There exists a stepsize $\eta \leq \tfrac{p}{48 L}$ such that 
\begin{align*}
    \frac{1}{R+1} \sum_{r=0}^{R} ( \mathbb{E} f(\bar{\vx}^{(r)}) - f^\star)
    \leq \mathcal{O}\left( 
    \sqrt{\frac{r_0 \sigma^2_\star}{n R}} 
    + \left( \left( \frac{\sigma^2_\star}{n} \sum_{i=2}^n \frac{\lambda_i^2}{1 - \lambda_i^2} + \frac{(1 - p) \zeta^2_\star}{p^2} \right) \frac{L r_0^2}{R^2} \right)^\frac{1}{3} 
    + \frac{L r_0}{R p}\right),
\end{align*}
where $r_0 \coloneqq \| \bar{\vx}^{(0)} - \vx^\star \|^2$.
\end{lemma}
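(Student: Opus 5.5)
I will combine the descent inequality of \cref{lemma:descent_lemma_convex} (with $\mu=0$) with the averaged consensus bound of \cref{lemma:simple_bound_of_consensus_convex_3}, and then tune the stepsize.

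\textbf{Step 1: average the descent inequality.} Setting $\mu = 0$ in \cref{lemma:descent_lemma_convex}, rearranging, and summing over $r = 0, \dots, R$ telescopes the distance terms. Dividing by $\eta(R+1)$ gives
\begin{align*}
    \frac{1}{R+1}\sum_{r=0}^{R} \bigl(\mathbb{E} f(\bar{\vx}^{(r)}) - f^\star\bigr)
    \leq \frac{r_0}{\eta (R+1)} + \frac{\eta \sigma_\star^2}{n} + \frac{3L}{R+1}\sum_{r=0}^{R}\Xi^{(r,0)}.
\end{align*}

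\textbf{Step 2: bound the consensus term and absorb the feedback.} I insert \cref{lemma:simple_bound_of_consensus_convex_3}. Its last contribution is $3L \cdot \frac{240 L\eta^2(1-p)}{p^2}$ times the average suboptimality over $r' \le R-1$, and that average is at most the left-hand side. With $\eta \leq \tfrac{p}{48L}$ the coefficient is at most $720/48^2 < 1/2$, so this term can be moved to the left-hand side. This leaves
\begin{align*}
    \frac{1}{R+1}\sum_{r=0}^{R} \bigl(\mathbb{E} f(\bar{\vx}^{(r)}) - f^\star\bigr)
    \leq \frac{2 r_0}{\eta (R+1)} + \frac{2\eta \sigma_\star^2}{n} + 6L\eta^2 \Bigl(\frac{10\sigma_\star^2}{n}\sum_{i=2}^n \frac{\lambda_i^2}{1-\lambda_i^2} + \frac{36(1-p)\zeta_\star^2}{p^2}\Bigr).
\end{align*}
This is the step I would check most carefully. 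The condition $\eta \le \tfrac{p}{48L}$ must also satisfy the hypotheses of the earlier lemmas: $\eta \le \tfrac{1}{12L}$ in \cref{lemma:descent_lemma_convex} and $\eta \le \tfrac{p}{5L}$ in \cref{lemma:simple_bound_of_consensus_convex_3}. It does, since $p \le 1$. I would also confirm that the extra factor $L$ in the feedback coefficient is consistent with the lemma as stated.

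\textbf{Step 3: tune the stepsize.} The bound now has the form $\tfrac{a}{\eta} + b\eta + c\eta^2$ with $\eta \le \eta_{\max} = \tfrac{p}{48L}$, where
\[
a \asymp \tfrac{r_0}{R}, \qquad b \asymp \tfrac{\sigma_\star^2}{n}, \qquad c \asymp L\Bigl(\tfrac{\sigma_\star^2}{n}\textstyle\sum_{i\ge 2} \tfrac{\lambda_i^2}{1-\lambda_i^2} + \tfrac{(1-p)\zeta_\star^2}{p^2}\Bigr).
\]
I would apply the standard stepsize-tuning lemma of \citet{koloskova2020unified}, choosing
\[
\eta = \min\Bigl\{\eta_{\max}, \sqrt{a/b}, (a/c)^{1/3}\Bigr\}.
\]
This yields the bound $\mathcal{O}\bigl(\sqrt{ab} + a^{2/3}c^{1/3} + a/\eta_{\max}\bigr)$, which equals
\[
\sqrt{\tfrac{r_0\sigma_\star^2}{nR}} + \Bigl(\bigl(\tfrac{\sigma_\star^2}{n}\textstyle\sum_{i\ge 2} \tfrac{\lambda_i^2}{1-\lambda_i^2} + \tfrac{(1-p)\zeta_\star^2}{p^2}\bigr)\tfrac{L r_0^2}{R^2}\Bigr)^{1/3} + \tfrac{L r_0}{Rp},
\]
which is the claimed rate.
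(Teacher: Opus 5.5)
Your proposal is correct and follows the same route as the paper. The paper also averages the descent inequality of \cref{lemma:descent_lemma_convex} and inserts the averaged consensus bound: its feedback coefficient $720 L^2\eta^2/p^2$ is exactly your $3L \cdot 240 L\eta^2/p^2$ from \cref{lemma:simple_bound_of_consensus_convex_3}. It then absorbs that term using $\eta \le \tfrac{p}{48L}$ and tunes the stepsize with Lemma 17 of \citet{koloskova2020unified}. Your explicit checks of stepsize compatibility and of the nonnegativity used to bound the $r' \le R-1$ average by the left-hand side are details the paper leaves implicit.
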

\begin{proof}
From \cref{lemma:descent_lemma_convex}, we have
\begin{align*}
    \frac{1}{R+1} \sum_{r=0}^{R} \left( \mathbb{E} f (\bar{\vx}^{(r)}) - f^\star \right)
    \leq \frac{\left\| \bar{\vx}^{(0)} - \vx^\star \right\|^2}{\eta R} + \frac{\eta \sigma^2_\star}{n} + \frac{3 L}{R+1} \sum_{r=0}^{R} \Xi^{(r, 0)}.
\end{align*}
Using \cref{lemma:simple_bound_of_consensus_convex_2}, we obtain
\begin{align*}
    &\frac{1}{R+1} \sum_{r=0}^{R} \left( \mathbb{E} f (\bar{\vx}^{(r)}) - f^\star \right) \\
    &\leq \frac{\left\| \bar{\vx}^{(0)} - \vx^\star \right\|^2}{\eta R} + \frac{\eta \sigma^2_\star}{n}
    + \frac{30 L \sigma^2_{\star} \eta^2}{n} \sum_{i=2}^n \frac{\lambda_i^2}{1 - \lambda_i^2} 
    + \frac{108 L (1 - p) \zeta^2_{\star} \eta^2}{p^2} 
    + \frac{720 L^2 \eta^2}{p^2} \frac{(1 - p)}{R+1} \sum_{r'=0}^{R-1} \left( \mathbb{E} f (\bar{\vx}^{(r')}) - f^\star \right).
\end{align*}
Using $\eta \leq \tfrac{p}{48 L}$, we have
\begin{align*}
    &\frac{1}{2 (R+1)} \sum_{r=0}^{R} \left( \mathbb{E} f (\bar{\vx}^{(r)}) - f^\star \right) \\
    &\leq \frac{\left\| \bar{\vx}^{(0)} - \vx^\star \right\|^2}{\eta R} + \frac{\eta \sigma^2_\star}{n}
    + \frac{30 L \sigma^2_{\star} \eta^2}{n} \sum_{i=2}^n \frac{\lambda_i^2}{1 - \lambda_i^2} 
    + \frac{108 L (1 - p) \zeta^2_{\star} \eta^2}{p^2}.
\end{align*}
Using Lemma 17 in \citet{koloskova2020unified} and $\eta \leq \tfrac{p}{48 L}$, we can conclude the statement.
\end{proof}

\begin{lemma}
Suppose that \cref{assumption:graph,assumption:smoothness,assumption:stochastic_noise,assumption:heterogeneity_convex,assumption:convex} hold with $\mu > 0$, and $\{ \vx_i^{(0)} \}_{i=1}^n$ are initialized to the same value.
There exists a stepsize $\eta \leq \tfrac{p}{60 L}$ such that $\frac{1}{ W_R} \sum_{r=0}^r ( \mathbb{E} f (\bar{\vx}^{(r)}) - f^\star)$ is bounded from above by
\begin{align*}
    \tilde{\mathcal{O}} \left(  
    \frac{\sigma^2_\star}{n \mu R} 
    + \left( \frac{\sigma_\star^2}{n} \sum_{i=2}^n \frac{\lambda_i^2}{1 - \lambda_i^2} + \frac{(1 - p) \zeta_\star^2}{p^2} \right) \frac{L}{\mu^2 R^2} 
    + \frac{L r_0}{p} \exp \left[ - \frac{\mu p R}{L} \right] \right),
\end{align*}
where $w_r \coloneqq (1 - \frac{\mu \eta}{2})^{- (r+1)}$, $W_R \coloneqq \sum_{r=0}^R w_r$, $\bar{\vx}^{(r)} \coloneqq \tfrac{1}{n} \sum_{i=1}^n \vx_i^{(r)}$ and $r_0 \coloneqq \| \bar{\vx}^{(0)} - \vx^\star \|^2$.
\end{lemma}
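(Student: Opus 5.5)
We mirror the convex-case argument, replacing uniform averaging with the weights $w_r = (1-\tfrac{\mu\eta}{2})^{-(r+1)}$.

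\textbf{Step 1: weighted telescoping.} Start from \cref{lemma:descent_lemma_convex}, which requires $\eta \le \tfrac{1}{12L}$. Rearrange it as
\[
\eta\bigl(\mathbb{E} f(\bar{\vx}^{(r)}) - f^\star\bigr) \le \bigl(1-\tfrac{\mu\eta}{2}\bigr) e_r - e_{r+1} + \tfrac{\eta^2\sigma_\star^2}{n} + 3L\eta\,\Xi^{(r,0)},
\]
where $e_r \coloneqq \mathbb{E}\|\bar{\vx}^{(r)}-\vx^\star\|^2$. Multiply by $w_r/\eta$ and sum over $r$. Since $w_r(1-\tfrac{\mu\eta}{2}) = w_{r-1}$, the distance terms telescope and leave at most $w_{-1}e_0/\eta = r_0/\eta$. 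Dividing by $W_R$ gives
\[
\frac{1}{W_R}\sum_r w_r\bigl(\mathbb{E} f(\bar{\vx}^{(r)})-f^\star\bigr) \le \frac{r_0}{\eta W_R} + \frac{\eta\sigma_\star^2}{n} + \frac{3L}{W_R}\sum_r w_r \Xi^{(r,0)}.
\]

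\textbf{Step 2: bound the weighted consensus error.} Apply \cref{lemma:simple_bound_of_consensus_strongly_convex_3}. Its hypothesis $w_{r+1}\le(1+\tfrac{p}{32})w_r$ reads $(1-\tfrac{\mu\eta}{2})^{-1}\le 1+\tfrac{p}{32}$. This holds when $\mu\eta \le \tfrac{p}{32}$, which follows from $\eta\le \tfrac{p}{60L}$ and $\mu\le L$. The lemma yields the $\sigma_\star^2$ term with coefficient $\tfrac{10\eta^2}{n}\sum_{i\ge2}\tfrac{\lambda_i^2}{1-\lambda_i^2}$, the $\zeta_\star^2$ term $\tfrac{36(1-p)\zeta_\star^2\eta^2}{p^2}$, and a feedback term. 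After multiplying by $3L$, the feedback term is of order
\[
\frac{L^2\eta^2}{p^2}\cdot\frac{1}{W_R}\sum_r w_r\bigl(\mathbb{E}f(\bar{\vx}^{(r)})-f^\star\bigr).
\]
The lemma's normalization in that term appears to be a typo, and I would redo it with $1/W_R$. With $\eta\le \tfrac{p}{60L}$ the coefficient is $3\cdot\tfrac{3712}{7}\cdot\tfrac{1}{3600} < \tfrac12$, so this term can be absorbed into the left-hand side at the cost of a factor $2$.

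\textbf{Step 3: tune the stepsize.} The result is
\[
\frac{1}{W_R}\sum_r w_r(\cdots) \lesssim \frac{r_0}{\eta W_R} + \frac{\eta\sigma_\star^2}{n} + L\eta^2\Bigl(\frac{\sigma_\star^2}{n}\sum_{i\ge2}\frac{\lambda_i^2}{1-\lambda_i^2} + \frac{(1-p)\zeta_\star^2}{p^2}\Bigr).
\]
Using $W_R \ge (1-\tfrac{\mu\eta}{2})^{-R}$, we have $\tfrac{r_0}{\eta W_R}\le \tfrac{r_0}{\eta}\exp(-\tfrac{\mu\eta R}{2})$. This is exactly the form handled by the standard stepsize-tuning lemma for strongly convex rates (Lemma 17 of \citet{koloskova2020unified}) with $\eta_{\max} = \tfrac{p}{60L}$. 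It gives
\[
\tilde{\mathcal{O}}\Bigl(\frac{\sigma_\star^2}{n\mu R} + \frac{L(\cdots)}{\mu^2R^2} + \frac{L r_0}{p}\exp\bigl[-\tfrac{\mu p R}{L}\bigr]\Bigr),
\]
up to absolute constants in the exponent.

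The main obstacle is Step 2: checking that the weight-growth condition and the absorption of the feedback term are compatible with the single constraint $\eta\le p/(60L)$, and correcting the weighted sum's normalization. The rest follows the established template.
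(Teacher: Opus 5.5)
Your proposal is correct and follows the paper's proof essentially step for step: a $w_r$-weighted telescoping of \cref{lemma:descent_lemma_convex}, then \cref{lemma:simple_bound_of_consensus_strongly_convex_3} with the $1/W_R$ normalization (which is what the paper's own proof of that lemma ends with, and what the final proof uses), with the feedback term absorbed via $\eta \le \tfrac{p}{60L}$, and finally the strongly convex stepsize-tuning lemma. The paper cites that tuning lemma as Lemma~15 of \citet{koloskova2020unified}, not Lemma~17, and your explicit check of the weight-growth condition $w_{r+1}\le(1+\tfrac{p}{32})w_r$ via $\mu\le L$ is a detail the paper leaves implicit.
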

\begin{proof}
From \cref{lemma:descent_lemma_convex}, we have
\begin{align*}
    \frac{1}{W_R} \sum_{r=0}^{R} w_r \left( \mathbb{E} f (\bar{\vx}^{(r)}) - f^\star \right)
    &\leq \frac{1}{W_R} \sum_{r=0}^R \left( \left( 1 - \frac{\mu \eta}{2} \right) w_r \frac{\left\| \bar{\vx}^{(r)} - \vx^\star \right\|^2}{\eta} - w_r \frac{\left\| \bar{\vx}^{(r+1)} - \vx^\star \right\|^2}{\eta} \right) \\
    &\quad + \frac{\eta \sigma^2_\star}{n} + \frac{3 L}{W_R} \sum_{r=0}^{R} w_r \Xi^{(r, 0)}.
\end{align*}
Using Lemma \cref{lemma:simple_bound_of_consensus_strongly_convex_3}, we obtain
\begin{align*}
    \frac{1}{W_R} \sum_{r=0}^{R} w_r \left( \mathbb{E} f (\bar{\vx}^{(r)}) - f^\star \right)
    &\leq \frac{1}{W_R} \sum_{r=0}^R \left( \left( 1 - \frac{\mu \eta}{2} \right) w_r \frac{\left\| \bar{\vx}^{(r)} - \vx^\star \right\|^2}{\eta} - w_r \frac{\left\| \bar{\vx}^{(r+1)} - \vx^\star \right\|^2}{\eta} \right) \\
    &\quad + \frac{32 L \sigma_\star^2 \eta^2}{n} \sum_{i=2} \frac{\lambda_i^2}{1 - \lambda_i^2}
    + \frac{108 L (1-p)}{p^2} \zeta_\star^2 \eta^2 \\
    &\quad + \frac{11136 L^2 (1 - p)}{7 p^2 W_R} \eta^2 \sum_{r=0}^{R-1} w_r  \left( \mathbb{E} f (\bar{\vx}^{(r)}) - f^\star \right).
\end{align*}
Using $\eta \leq \frac{p}{60 L}$, we have
\begin{align*}
    \frac{1}{2 W_R} \sum_{r=0}^{R} w_r \left( \mathbb{E} f (\bar{\vx}^{(r)}) - f^\star \right)
    &\leq \frac{1}{W_R} \sum_{r=0}^R \left( \left( 1 - \frac{\mu \eta}{2} \right) w_r \frac{\left\| \bar{\vx}^{(r)} - \vx^\star \right\|^2}{\eta} - w_r \frac{\left\| \bar{\vx}^{(r+1)} - \vx^\star \right\|^2}{\eta} \right) \\
    &\quad + \frac{32 L \sigma_\star^2 \eta^2}{n} \sum_{i=2} \frac{\lambda_i^2}{1 - \lambda_i^2}
    + \frac{108 L (1-p)}{p^2} \zeta_\star^2 \eta^2.
\end{align*}
Using the definition of $w_r$, we get
\begin{align*}
    \frac{1}{2 W_R} \sum_{r=0}^{R} w_r \left( \mathbb{E} f (\bar{\vx}^{(r)}) - f^\star \right)
    &\leq \frac{1}{W_R} \left( \left( 1 - \frac{\mu \eta}{2} \right) w_0 \frac{\left\| \bar{\vx}^{(0)} - \vx^\star \right\|^2}{\eta} - w_{R} \frac{\left\| \bar{\vx}^{(R+1)} - \vx^\star \right\|^2}{\eta} \right) \\
    &\quad + \frac{32 L \sigma_\star^2 \eta^2}{n} \sum_{i=2} \frac{\lambda_i^2}{1 - \lambda_i^2}
    + \frac{108 L (1-p)}{p^2} \zeta_\star^2 \eta^2.
\end{align*}
Using Lemma 15 in \citet{koloskova2020unified} and $\eta \leq \frac{p}{60 L}$, we can conclude the statement.
\end{proof}

\newpage
\subsection{Proof in Non-convex Case}

\averageSpectralGapLemma*
\begin{proof}
The statement follows from \cref{assumption:graph,assumption:stochastic_noise_non_convex,lemma:general_form_of_average_spectral_gap}.
\end{proof}

\begin{lemma}
\label{lemma:descent_lemma_non_convex}
Suppose that \cref{assumption:graph,assumption:smoothness_of_full_grad,assumption:stochastic_noise_non_convex,assumption:heterogeneity_non_convex} hold.
When $\eta \leq \tfrac{1}{4 L}$, it holds that
\begin{align}
    \mathbb{E} f (\bar{\vx}^{(r+1)}) 
    \leq \mathbb{E} f (\bar{\vx}^{(r)}) - \frac{\eta}{4} \left\| \nabla f(\bar{\vx}^{(r)}) \right\|^2 + \eta L^2 \Xi^{(r,0)}
    + \frac{L \sigma^2 \eta^2}{n}.
\end{align}
\end{lemma}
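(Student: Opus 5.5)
The plan is the standard descent-lemma argument applied to the averaged iterate. Because $\mW$ is doubly stochastic, the average evolves as $\bar{\vx}^{(r+1)} = \bar{\vx}^{(r)} - \frac{\eta}{n}\sum_{i=1}^n \nabla F_i(\vx_i^{(r)};\xi_i^{(r)})$: multiplying the update $\mX^{(r+1)} = (\mX^{(r)} - \eta \nabla F(\mX^{(r)};\xi^{(r)}))\mW$ on the right by $\frac{1}{n}\mathbf{1}$ uses $\mW\mathbf{1}=\mathbf{1}$. Write $\vg^{(r)} \coloneqq \frac{1}{n}\sum_i \nabla F_i(\vx_i^{(r)};\xi_i^{(r)})$ and $\bar{\vg}^{(r)} \coloneqq \mathbb{E}_r \vg^{(r)} = \frac{1}{n}\sum_i \nabla f_i(\vx_i^{(r)})$. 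Since $f$ is $L$-smooth (as an average of $L$-smooth $f_i$, \cref{assumption:smoothness_of_full_grad}), we have
\begin{align*}
    f(\bar{\vx}^{(r+1)}) \leq f(\bar{\vx}^{(r)}) - \eta \langle \nabla f(\bar{\vx}^{(r)}), \vg^{(r)} \rangle + \frac{L\eta^2}{2} \| \vg^{(r)} \|^2 .
\end{align*}

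Next I take $\mathbb{E}_r$. The inner product becomes $\langle \nabla f(\bar{\vx}^{(r)}), \bar{\vg}^{(r)}\rangle$. For the quadratic term, the noises are independent across nodes with zero mean, so $\mathbb{E}_r\|\vg^{(r)}\|^2 = \|\bar{\vg}^{(r)}\|^2 + \mathbb{E}_r\|\vg^{(r)}-\bar{\vg}^{(r)}\|^2 \leq \|\bar{\vg}^{(r)}\|^2 + \frac{\sigma^2}{n}$ by \cref{assumption:stochastic_noise_non_convex}. This independence step is what produces the $\frac{L\sigma^2\eta^2}{n}$ term; the constant $\tfrac12$ is absorbed, since $\tfrac{L\sigma^2\eta^2}{2n}\le\tfrac{L\sigma^2\eta^2}{n}$.

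For the inner product I use the polarization identity $-\langle a,b\rangle = \frac12\|a-b\|^2 - \frac12\|a\|^2 - \frac12\|b\|^2$ with $a=\nabla f(\bar{\vx}^{(r)})$ and $b=\bar{\vg}^{(r)}$. This gives $-\frac{\eta}{2}\|\nabla f(\bar{\vx}^{(r)})\|^2 - \frac{\eta}{2}\|\bar{\vg}^{(r)}\|^2 + \frac{\eta}{2}\|\nabla f(\bar{\vx}^{(r)}) - \bar{\vg}^{(r)}\|^2$. The last term is controlled by Jensen and smoothness:
\begin{align*}
    \Bigl\|\frac1n\sum_i (\nabla f_i(\bar{\vx}^{(r)}) - \nabla f_i(\vx_i^{(r)}))\Bigr\|^2 \leq \frac{L^2}{n}\sum_i\|\bar{\vx}^{(r)}-\vx_i^{(r)}\|^2 = L^2\,\frac1n\|\mX^{(r)}-\bar{\mX}^{(r)}\|_F^2 .
\end{align*}
Its expectation is $L^2\Xi^{(r,0)}$, so this piece contributes $\frac{\eta L^2}{2}\Xi^{(r,0)} \le \eta L^2 \Xi^{(r,0)}$.

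It remains to check that the leftover $-\frac{\eta}{2}\|\bar{\vg}^{(r)}\|^2 + \frac{L\eta^2}{2}\|\bar{\vg}^{(r)}\|^2$ is nonpositive. This holds because $\eta\le\frac{1}{4L}$ gives $L\eta \le 1$. The bound therefore holds with $-\frac{\eta}{2}\|\nabla f\|^2$, which is stronger than the stated $-\frac{\eta}{4}\|\nabla f\|^2$. Taking full expectation finishes the proof. I see no real obstacle: the only points needing care are the variance decomposition, which uses independence of the noise across nodes, and the fact that $\mathbb{E}_r$ leaves $\bar{\vx}^{(r)}$ and $\mX^{(r)}$ fixed. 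The heterogeneity assumption is not needed for this step.
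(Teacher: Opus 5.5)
Your proof is correct and follows the same smoothness-based descent argument on the averaged iterate as the result the paper cites in place of a proof (Lemma~11 of \citet{koloskova2020unified}). The one difference is bookkeeping. You handle the cross term with the exact polarization identity and cancel the $\|\frac{1}{n}\sum_i \nabla f_i(\vx_i^{(r)})\|^2$ terms directly, which needs only $\eta \le 1/L$ and yields the stronger coefficient $-\frac{\eta}{2}\|\nabla f(\bar{\vx}^{(r)})\|^2$. The cited proof instead uses Young's inequality and splits that squared norm into $2\|\nabla f(\bar{\vx}^{(r)})\|^2$ plus a consensus term, which is where the weaker factor $\frac{\eta}{4}$ and the condition $\eta \le \frac{1}{4L}$ come from.
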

\begin{proof}
See Lemma 11 in \citet{koloskova2020unified}.
\end{proof}

\ConsensusErrorNonConvex*
\begin{proof}
We have
\begin{align*}
    &\mathbb{E}_r \left\| \mX^{(r+1)} \mW^k - \bar{\mX}^{(r+1)} \right\|^2_F \\
    &= \mathbb{E}_r \left\| \mX^{(r)} \mW^{k+1} - \bar{\mX}^{(r)} - \eta \nabla F (\mX^{(r)} ; \xi^{(r)}) \mW^{k+1} + \eta \nabla F (\mX^{(r)} ; \xi^{(r)}) \frac{1}{n} \mathbf{1} \mathbf{1}^\top \right\|^2_F \\
    &= \underbrace{\left\| \mX^{(r)} \mW^{k+1} - \bar{\mX}^{(r)} - \eta \nabla F (\mX^{(r)}) \mW^{k+1} + \eta \nabla F (\mX^{(r)}) \frac{1}{n} \mathbf{1} \mathbf{1}^\top \right\|^2_F}_{T_1} \\
    &\quad + \eta^2 \underbrace{\mathbb{E}_r \left\| \left( \nabla F (\mX^{(r)} ; \xi^{(r)}) - \nabla F (\mX^{(r)}) \right) \left( \mW^{k+1} - \frac{1}{n} \mathbf{1}\mathbf{1}^\top \right) \right\|^2_F}_{T_2}.
\end{align*}
$T_1$ and $T_2$ are bounded as follows.

\begin{align*}
    T_1
    &\leq (1 + \frac{p}{2}) \left\| \mX^{(r)} \mW^{k+1} - \bar{\mX}^{(r)} \right\|^2_F + (1 + \frac{2}{p}) \eta^2 \left\| \nabla F (\mX^{(r)}) \left( \mW^{k+1} - \frac{1}{n} \mathbf{1}\mathbf{1}^\top \right) \right\|^2_F \\
    &\leq (1 + \frac{p}{2}) \left\| \mX^{(r)} \mW^{k+1} - \bar{\mX}^{(r)} \right\|^2_F + \frac{3}{p} \eta^2 \left\| \left( \nabla F (\mX^{(r)}) \pm \nabla F (\bar{\mX}^{(r)}) - \nabla f (\bar{\mX}^{(r)}) \right) \left( \mW^{k+1} - \frac{1}{n} \mathbf{1}\mathbf{1}^\top \right) \right\|^2_F \\
    &\leq (1 + \frac{p}{2}) \left\| \mX^{(r)} \mW^{k+1} - \bar{\mX}^{(r)} \right\|^2_F + \frac{3}{p} \eta^2 \left\| \mW^{k+1} - \frac{1}{n}\mathbf{1}\mathbf{1}^\top \right\|^2_\text{op} \left\| \nabla F (\mX^{(r)}) \pm \nabla F (\bar{\mX}^{(r)}) - \nabla f (\bar{\mX}^{(r)}) \right\|^2_F  \\
    &\leq (1 + \frac{p}{2}) \left\| \mX^{(r)} \mW^{k+1} - \bar{\mX}^{(r)} \right\|^2_F + \frac{3}{p} \eta^2 (1 - p)^{k+1} \left\| \nabla F (\mX^{(r)}) \pm \nabla F (\bar{\mX}^{(r)}) - \nabla f (\bar{\mX}^{(r)}) \right\|^2_F  \\
    &\leq (1 + \frac{p}{2}) \left\| \mX^{(r)} \mW^{k+1} - \bar{\mX}^{(r)} \right\|^2_F + \frac{6}{p} \eta^2 (1 - p)^{k+1} \left\| \nabla F (\mX^{(r)}) - \nabla F (\bar{\mX}^{(r)}) \right\|^2_F \\
    &\quad + \frac{6}{p} \eta^2 (1 - p)^{k+1} \left\| \nabla F (\bar{\mX}^{(r)}) - \nabla f (\bar{\mX}^{(r)}) \right\|^2_F  \\
    &\leq (1 + \frac{p}{2}) \left\| \mX^{(r)} \mW^{k+1} - \bar{\mX}^{(r)} \right\|^2_F + \frac{6 L^2}{p} \eta^2 (1 - p)^{k+1} \left\| \mX^{(r)} - \bar{\mX}^{(r)} \right\|^2_F
    + \frac{6 n \zeta^2}{p} \eta^2 (1 - p)^{k+1},
\end{align*}
where we use $\| \mW^{k+1} - \tfrac{1}{n}\mathbf{1}\mathbf{1}^\top \|^2_\text{op} = (1-p)^{k+1}$ in the last inequality.
Using \cref{lemma:general_form_of_average_spectral_gap} and \cref{assumption:stochastic_noise_non_convex}, we have
\begin{align*}
    T_2
    &\leq \sigma^2 \sum_{i=2}^n \lambda_i^{2 (k+1)}.
\end{align*}
By combining the upper bounds of $T_1$ and $T_2$, we obtain
\begin{align*}
    \Xi^{(r+1, k)} \leq (1 + \frac{p}{2}) \Xi^{(r, k+1)} + \frac{6 L^2}{p} \eta^2 (1 - p)^{k+1} \Xi^{(r, 0)} + \frac{6 \zeta^2}{p} \eta^2 (1 - p)^{k+1} + \frac{\sigma^2 \eta^2}{n} \sum_{i=2}^n \lambda_i^{2 (k+1)}.
\end{align*}
Using $\eta \leq \tfrac{p}{5 L}$, we can conclude the statement.
\end{proof}

\begin{lemma}
\label{lemma:simple_bound_of_consensus_non_convex}
Suppose that \cref{assumption:graph,assumption:smoothness_of_full_grad,assumption:stochastic_noise_non_convex,assumption:heterogeneity_non_convex} hold.
When $\eta \leq \tfrac{p}{5 L}$, it holds that
\begin{align}
    \Xi^{(r, k)} 
    &\leq C (r, k)
    + \frac{6 \zeta^2 \eta^2}{p} (1 - p)^{k+1} \sum_{r'=0}^{r-1} \left( (1 - p) (1 + \frac{3p}{4}) \right)^{r - r' - 1} \nonumber \\
    &\quad + \frac{p}{4} (1 - p)^{k+1} \sum_{r'=1}^{r-1} \left( (1 - p) (1 + \frac{3 p}{4}) \right)^{r - r' - 1} C (r', 0),
\label{eq:simple_bound_of_consensus_non_convex}
\end{align}
where 
\begin{align*}
    C (r, k) \coloneqq \frac{\sigma^2 \eta^2}{n} \sum_{i=2}^n \lambda_i^{2 (k+1)} \sum_{r'=0}^{r-1} \left( \lambda_i^{2} (1 + \frac{p}{2}) \right)^{r'}.
\end{align*}
\end{lemma}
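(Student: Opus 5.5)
We prove \cref{eq:simple_bound_of_consensus_non_convex} by induction on $r$, simultaneously for all $k \geq 0$, mirroring the argument of \cref{lemma:simple_bound_of_consensus_convex}. The one-step input is \cref{lemma:consensus_non_convex}:
\begin{align*}
\Xi^{(r+1,k)} \leq \left(1+\tfrac{p}{2}\right)\Xi^{(r,k+1)} + \tfrac{p}{4}(1-p)^{k+1}\Xi^{(r,0)} + \tfrac{6\zeta^2\eta^2}{p}(1-p)^{k+1} + \tfrac{\sigma^2\eta^2}{n}\sum_{i=2}^n \lambda_i^{2(k+1)}.
\end{align*}
Throughout we assume, as in the companion lemmas, that all nodes start from the same point, so $\Xi^{(0,k)}=0$ for all $k$.

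\textbf{Base case.} At $r=0$ the right-hand side is zero, since every sum is empty and $C(0,k)=0$, so the bound holds because $\Xi^{(0,k)}=0$. Alternatively, one can start at $r=1$: applying \cref{lemma:consensus_non_convex} with $r=0$ gives exactly $C(1,k)$ plus the $r'=0$ term of the $\zeta$-sum.

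\textbf{Inductive step.} Assume the bound holds at round $r$ for every $k$. We bound $\Xi^{(r+1,k)}$ with \cref{lemma:consensus_non_convex}, then substitute the hypothesis at index $k+1$ for $\Xi^{(r,k+1)}$ and at index $0$ for $\Xi^{(r,0)}$. The terms are matched as follows.

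\begin{itemize}
\item \emph{$C$-terms.} By definition of $C$,
\begin{align*}
\left(1+\tfrac{p}{2}\right)C(r,k+1)+\tfrac{\sigma^2\eta^2}{n}\sum_{i\ge2}\lambda_i^{2(k+1)} = C(r+1,k).
\end{align*}
To see this, write $\lambda_i^{2(k+2)}(1+\tfrac p2)(\lambda_i^2(1+\tfrac p2))^{r'} = \lambda_i^{2(k+1)}(\lambda_i^2(1+\tfrac p2))^{r'+1}$ and shift the index.

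\item \emph{Geometric prefactors.} The other pieces of $(1+\tfrac p2)\Xi^{(r,k+1)}$ carry the prefactor $(1+\tfrac p2)(1-p)^{k+2}$. The $\tfrac p4(1-p)^{k+1}\Xi^{(r,0)}$ term contributes pieces with prefactor $\tfrac p4(1-p)^{k+1}\cdot(1-p)$. Adding the two gives $(1-p)^{k+1}(1-p)(1+\tfrac{3p}{4})$ times the old sums. This is exactly the ratio needed to raise each exponent $r-r'-1$ to $r-r'$.

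\item \emph{Fresh terms.} The new heterogeneity term $\tfrac{6\zeta^2\eta^2}{p}(1-p)^{k+1}$ becomes the $r'=r$ summand of the $\zeta$-sum. The leading $C(r,0)$ in the hypothesis for $\Xi^{(r,0)}$, multiplied by $\tfrac p4(1-p)^{k+1}$, becomes the $r'=r$ summand of the $C(r',0)$-sum.
\end{itemize}

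\textbf{Main obstacle.} The delicate point is the cross term where $\Xi^{(r,0)}$ is itself expanded: its own $\tfrac p4(1-p)\sum\cdots$ and $\zeta$ pieces get multiplied by $\tfrac p4(1-p)^{k+1}$. We must check that these combine with the $(1+\tfrac p2)(1-p)^{k+2}$ contributions without overshooting the coefficient $(1-p)(1+\tfrac{3p}{4})$.

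The bookkeeping uses
\begin{align*}
(1+\tfrac p2)(1-p)^{k+2} + \tfrac p4(1-p)^{k+2} = (1-p)^{k+1}(1-p)(1+\tfrac{3p}{4}),
\end{align*}
applied uniformly to both the $\zeta$-sum and the $C(r',0)$-sum. This inequality holds with equality, so the induction closes.
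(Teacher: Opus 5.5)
Your proposal is correct and is essentially the paper's proof: induction on $r$ (for all $k$ at once) via \cref{lemma:consensus_non_convex}, the identity $(1+\frac{p}{2})C(r,k+1)+\frac{\sigma^2\eta^2}{n}\sum_{i=2}^n\lambda_i^{2(k+1)}=C(r+1,k)$, and the prefactor combination $(1+\frac{p}{2})(1-p)+\frac{p}{4}(1-p)=(1-p)(1+\frac{3p}{4})$; the only difference is that the paper starts the induction at $r=1$ rather than $r=0$. You are also right to state explicitly the identical-initialization assumption $\Xi^{(0,k)}=0$, which the paper's proof uses even though the lemma statement omits it.
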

\begin{proof}
When $r=1$, \cref{eq:simple_bound_of_consensus_non_convex} holds from \cref{lemma:consensus_non_convex} and $\Xi^{(0,k)} = 0$ for all $k$.

Suppose that \cref{eq:simple_bound_of_consensus_non_convex} holds when $r = r''$.
We have
\begin{align*}
    \Xi^{(r''+1, k)} 
    &\leq (1 + \frac{p}{2}) \Xi^{(r'', k+1)} + \frac{p}{4} (1 - p)^{k+1} \Xi^{(r'', 0)} + \frac{6 \zeta^2}{p} \eta^2 (1 - p)^{k+1} + \frac{\sigma^2 \eta^2}{n} \sum_{i=1}^n \lambda_i^{2 (k+1)} \\
    &\leq (1 + \frac{p}{2}) C (r'', k+1)
    + \frac{\sigma^2 \eta^2}{n} \sum_{i=1}^n \lambda_i^{2 (k+1)}
    + \frac{6 \zeta^2 \eta^2}{p} (1 - p)^{k+1} \sum_{r'=0}^{r''} \left( (1 - p) (1 + \frac{3p}{4}) \right)^{r'' - r'} \nonumber \\
    &\quad + \frac{p}{4} (1 - p)^{k+1} \sum_{r'=1}^{r''} \left( (1 - p) (1 + \frac{3 p}{4}) \right)^{r'' - r'} C (r', 0),
\end{align*}
where we use \cref{lemma:consensus_non_convex} in the first inequality and the assumption that \cref{eq:simple_bound_of_consensus_non_convex} holds when $r = r''$ in the second inequality.
Using
\begin{align*}
    \left( 1 + \frac{p}{2} \right) C (r'', k+1) + \frac{\sigma^2 \eta^2}{n} \sum_{i=2}^n \lambda_i^{2(k+1)} 
    = C (r''+1, k),
\end{align*}
we have
\begin{align*}
    \Xi^{(r''+1, k)} 
    &\leq C (r''+1, k)
    + \frac{6 \zeta^2 \eta^2}{p} (1 - p)^{k+1} \sum_{r'=0}^{r''} \left( (1 - p) (1 + \frac{3p}{4}) \right)^{r'' - r'} \nonumber \\
    &\quad + \frac{p}{4} (1 - p)^{k+1} \sum_{r'=1}^{r''} \left( (1 - p) (1 + \frac{3 p}{4}) \right)^{r'' - r'} C (r', 0)
\end{align*}
Thus, \cref{eq:simple_bound_of_consensus_non_convex} holds when $r = r''+1$. From mathematical induction, we can conclude the statement.
\end{proof}

\SimpleBoundOfConsensusErrorNonConvexTwo*
\begin{proof}
From \cref{lemma:simple_bound_of_consensus_non_convex}, we have
\begin{align*}
    \Xi^{(r, 0)} 
    &\leq C (r, 0)
    + \underbrace{\frac{6 \zeta^2 \eta^2}{p} (1 - p) \sum_{r'=0}^{r-1} \left( (1 - p) (1 + \frac{3p}{4}) \right)^{r - r' - 1}}_{T_1} \\
    &\quad + \underbrace{\frac{p}{4} (1 - p) \sum_{r'=1}^{r-1} \left( (1 - p) (1 + \frac{3 p}{4}) \right)^{r - r' - 1} C (r', 0)}_{T_2},
\end{align*}
\begin{align*}
    C (r, 0) 
    &= \frac{\sigma^2 \eta^2}{n} \sum_{i=2}^n \lambda_i^{2} \sum_{r'=0}^{r-1} \left( \lambda_i^{2} (1 + \frac{p}{2}) \right)^{r'} \\
    &\leq \frac{\sigma^2 \eta^2}{n} \sum_{i=2}^n \lambda_i^{2} \sum_{r'=0}^{r-1} \left( 1 - \frac{1 - \lambda_i^2}{2} \right)^{r'} \\
    &\leq \frac{2 \sigma^2 \eta^2}{n} \sum_{i=2}^n \frac{\lambda_i^{2}}{1 - \lambda_i^2},
\end{align*}
where we use $p = 1 - \max_{i \geq 2} (\lambda_i^2) \leq 1 - \lambda_i^2$ in the first inequality.

\begin{align*}
    T_1
    \leq \frac{6 \zeta^2 \eta^2}{p} (1 - p) \sum_{r'=0}^{r-1} \left( 1 - \frac{p}{4} \right)^{r - r' - 1} 
    \leq \frac{24 \zeta^2 \eta^2}{p^2} (1 - p).
\end{align*}

\begin{align*}
    T_2
    &= \frac{p \sigma^2 \eta^2}{4 n} (1 - p) \sum_{i=2}^n \lambda_i^{2} \sum_{r'=1}^{r-1} \sum_{r''=0}^{r'-1} \left( (1 - p) (1 + \frac{3 p}{4}) \right)^{r - r' - 1}  \left( \lambda_i^{2} (1 + \frac{p}{2}) \right)^{r''} \\
    &= \frac{p \sigma^2 \eta^2}{4 n} (1 - p) \sum_{i=2}^n \lambda_i^{2} \sum_{r''=0}^{r-2} \left( \lambda_i^{2} (1 + \frac{p}{2}) \right)^{r''} \sum_{r'=r''+1}^{r-1} \left( (1 - p) (1 + \frac{3 p}{4}) \right)^{r - r' - 1} \\
    &\leq \frac{p \sigma^2 \eta^2}{4 n} (1 - p) \sum_{i=2}^n \lambda_i^{2} \sum_{r''=0}^{r-2}  \left( \lambda_i^{2} (1 + \frac{p}{2}) \right)^{r''} \sum_{r'=r''+1}^{r-1} \left( 1 - \frac{p}{4} \right)^{r - r' - 1} \\
    &\leq \frac{\sigma^2 \eta^2}{n} (1 - p) \sum_{i=2}^n \lambda_i^{2} \sum_{r''=0}^{r-2}  \left( \lambda_i^{2} (1 + \frac{p}{2}) \right)^{r''} \\
    &\leq \frac{\sigma^2 \eta^2}{n} (1 - p) \sum_{i=2}^n \lambda_i^{2} \sum_{r''=0}^{r-2}  \left( \lambda_i^{2} (1 + \frac{p}{2}) \right)^{r''} \\
    &\leq \frac{\sigma^2 \eta^2}{n} (1 - p) \sum_{i=2}^n \lambda_i^{2} \sum_{r''=0}^{r-2}  \left( 1 -  \frac{1 - \lambda_i^{2}}{2} \right)^{r''} \\
    &\leq \frac{\sigma^2 \eta^2}{n} (1 - p) \sum_{i=2}^n \frac{\lambda_i^{2}}{1 - \lambda_i^2}.
\end{align*}
Combining the above three inequalities, we can obtain the desired results.
\end{proof}

\begin{lemma}
\label{lemma:final_rate_non_convex}
Suppose that \cref{assumption:graph,assumption:smoothness_of_full_grad,assumption:stochastic_noise_non_convex,assumption:heterogeneity_non_convex} hold, and $\{ \vx_i^{(0)} \}_{i=1}^n$ are initialized to the same value. Then, there exists $\eta \leq \tfrac{p}{5 L}$ such that
\begin{align*}
    &\frac{1}{R+1} \sum_{r=0}^R \mathbb{E} \left\| \nabla f(\bar{\vx}^{(r)}) \right\|^2 \\
    &\leq \mathcal{O} \left( \sqrt{\frac{L \sigma^2 F_0}{n R}}
    + \left( \left( \frac{\sigma^2}{n} \sum_{i=2}^n \frac{\lambda_i^2}{1 - \lambda_i^2} + \frac{(1-p) \zeta^2}{p^2} \right) \frac{L^2 F_0^2}{R^2} \right)^\frac{1}{3}
    + \frac{L F_0}{Rp}\right),
\end{align*}
where $F_0 \coloneqq f (\bar{\vx}^{(0)}) - f^\star$.
\end{lemma}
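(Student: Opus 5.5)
I will combine the descent inequality of \cref{lemma:descent_lemma_non_convex} with the uniform consensus bound of \cref{lemma:simple_bound_of_consensus_non_convex_2}, and then tune the stepsize.

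\emph{Telescoping the descent inequality.} Assume $\eta \leq \tfrac{p}{5L}$; since $p \leq 1$, this also gives $\eta \leq \tfrac{1}{4L}$. Sum the inequality of \cref{lemma:descent_lemma_non_convex} over $r = 0, \dots, R$. Use $\mathbb{E} f(\bar{\vx}^{(R+1)}) \geq f^\star$, divide by $\eta (R+1)/4$, and recall $\Xi^{(r)} = \Xi^{(r,0)}$. This yields \cref{eq:descent_lemma_non_convex}, up to constants:
\begin{align*}
    \frac{1}{R+1}\sum_{r=0}^R \mathbb{E}\|\nabla f(\bar{\vx}^{(r)})\|^2
    \leq \frac{4F_0}{\eta(R+1)} + \frac{4L\sigma^2\eta}{n} + \frac{4L^2}{R+1}\sum_{r=0}^R \Xi^{(r,0)}.
\end{align*}

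\emph{Bounding the consensus error.} \cref{lemma:simple_bound_of_consensus_non_convex_2} bounds every $\Xi^{(r,0)}$ uniformly in $r$. Define
\begin{align*}
    B \coloneqq \frac{\sigma^2}{n}\sum_{i=2}^n \frac{\lambda_i^2}{1-\lambda_i^2} + \frac{(1-p)\zeta^2}{p^2}.
\end{align*}
Then the average of $\Xi^{(r,0)}$ is at most $24 B\eta^2$. Substituting gives
\begin{align*}
    \frac{1}{R+1}\sum_{r=0}^R \mathbb{E}\|\nabla f(\bar{\vx}^{(r)})\|^2 \leq \frac{4F_0}{\eta(R+1)} + \frac{4L\sigma^2}{n}\eta + 96 L^2 B\eta^2 .
\end{align*}
This holds for every $\eta \leq \tfrac{p}{5L}$. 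Unlike the convex case, no term containing function values is fed back, so nothing has to be absorbed into the left-hand side.

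\emph{Stepsize tuning.} This is the standard step; I expect it to be the only place needing care, and it is routine (Lemma 17 in \citet{koloskova2020unified}). Choose
\begin{align*}
    \eta = \min\left\{ \frac{p}{5L},\ \left(\frac{nF_0}{L\sigma^2 R}\right)^{1/2},\ \left(\frac{F_0}{L^2 B R}\right)^{1/3} \right\}.
\end{align*}
Consider which entry attains the minimum.
\begin{itemize}
    \item If the first entry is active, the term $\tfrac{F_0}{\eta R}$ equals $\tfrac{5LF_0}{pR}$. The other two terms are then dominated by their values at the larger candidate stepsizes.
    \item If the second entry is active, $\tfrac{F_0}{\eta R}$ and $\tfrac{L\sigma^2\eta}{n}$ both become $\sqrt{L\sigma^2F_0/(nR)}$.
    \item If the third entry is active, $\tfrac{F_0}{\eta R}$ and $L^2B\eta^2$ both become $(BL^2F_0^2/R^2)^{1/3}$.
\end{itemize}
Summing the three possible contributions gives the claimed $\mathcal{O}(\cdot)$ bound. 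Replacing $R+1$ by $R$ costs only constants, and the chosen $\eta$ satisfies $\eta \leq \tfrac{p}{5L}$ as required.
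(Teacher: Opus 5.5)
Your proposal is correct and follows the paper's proof step for step: telescope \cref{lemma:descent_lemma_non_convex}, then insert the uniform bound of \cref{lemma:simple_bound_of_consensus_non_convex_2} (there is no function-value feedback to absorb), then tune $\eta$. The only difference is that the paper invokes Lemma~17 of \citet{koloskova2020unified} for the tuning, whereas you carry it out explicitly with the standard three-way minimum $\eta=\min\{\tfrac{p}{5L},\ldots\}$, and that is fine.
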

\begin{proof}
From \cref{lemma:descent_lemma_non_convex}, we have
\begin{align*}
    \frac{1}{4 (R+1)} \sum_{r=0}^R \mathbb{E} \left\| \nabla f(\bar{\vx}^{(r)}) \right\|^2
    \leq \frac{f (\bar{\vx}^{(0)}) - f^\star}{\eta (R+1)} + \frac{L^2}{R+1}  \sum_{r=0}^R \Xi^{(r,0)}
    + \frac{L \sigma^2 \eta}{n}.
\end{align*}
Using \cref{lemma:simple_bound_of_consensus_non_convex_2}, we obtain
\begin{align*}
    \frac{1}{4 (R+1)} \sum_{r=0}^R \mathbb{E} \left\| \nabla f(\bar{\vx}^{(r)}) \right\|^2
    \leq \frac{f (\bar{\vx}^{(0)}) - f^\star}{\eta (R+1)} + L^2 \left( \frac{3 \sigma^2}{n} \sum_{i=2}^n \frac{\lambda_i^{2}}{1 - \lambda_i^2}
    + \frac{24 (1 - p) \zeta^2 }{p^2} \right) \eta^2
    + \frac{L \sigma^2}{n} \eta.
\end{align*}
Using Lemma 17 in \citet{koloskova2020unified} and $\eta \leq \tfrac{p}{5 L}$, we can conclude the statement.

\end{proof}

\newpage
\section{Proof of \cref{proposition:ours_iid}}
\label{sec:proof_iid}

In this section, we use the same notation introduced in \cref{sec:proof_notation}.

\begin{lemma}
\label{lemma:no_linear_speedup_rate}
Suppose that \cref{assumption:graph,assumption:smoothness,assumption:stochastic_noise} hold, $f_i = f_j$, and $\{ \vx_i^{(0)} \}_{i=1}^n$ are initialized to the same value.
There exists $\eta \leq \tfrac{1}{24L}$ such that
\begin{align*}
    \frac{1}{R+1} \sum_{r=0}^R \left( \mathbb{E} f (\bar{\vx}^{(r)}) - f^\star \right)
    &\leq \mathcal{O} \left( \sqrt{\frac{\sigma_\star^2 r_0}{R}}
    + \frac{L r_0}{R} \right),
\end{align*}
where $r_0 \coloneqq \left\| \vx^{(0)} - \vx^\star \right\|^2$.
\end{lemma}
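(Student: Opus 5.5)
The plan is to avoid the consensus error altogether. Instead of tracking $\bar{\vx}^{(r)}$, I would track the average over nodes of the individual distances $\frac{1}{n}\sum_i \|\vx_i^{(r)}-\vx^\star\|^2$, and then use convexity of $f$ to pass to $\bar{\vx}^{(r)}$. Since $f_i=f$ for all $i$, every node is simply running SGD on the same convex $L$-smooth objective, followed by a gossip step.

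First, the gossip step does not increase the potential. Because $\mW$ is doubly stochastic with nonnegative entries and $\mX^\star \mW = \mX^\star$, Jensen's inequality (equivalently, convexity of $\|\cdot\|^2$ applied row-wise with the stochastic weights $W_{ij}$) gives
\begin{align*}
\|\mY\mW - \mX^\star\|_F^2 = \|(\mY-\mX^\star)\mW\|_F^2 \le \|\mY - \mX^\star\|_F^2 .
\end{align*}
Summing $\sum_j W_{ij}=1$ over columns uses $\mW^\top\mathbf 1=\mathbf 1$. Applying this with $\mY = \mX^{(r)} - \eta\nabla F(\mX^{(r)};\xi^{(r)})$ reduces everything to a per-node SGD step.

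Second, for each node I would run the standard SGD one-step analysis under \cref{assumption:smoothness,assumption:stochastic_noise} with $\mu=0$:
\begin{align*}
\mathbb{E}_r\|\vx_i - \eta\nabla F_i(\vx_i;\xi_i) - \vx^\star\|^2 \le \|\vx_i-\vx^\star\|^2 - 2\eta(f(\vx_i)-f^\star) + \eta^2\,\mathbb{E}\|\nabla F_i(\vx_i;\xi_i)\|^2 .
\end{align*}
The last term is bounded by splitting $\nabla F_i(\vx_i;\xi_i) = (\nabla F_i(\vx_i;\xi_i)-\nabla F_i(\vx^\star;\xi_i)) + \nabla F_i(\vx^\star;\xi_i)$. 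Smoothness and convexity of $F_i(\cdot,\xi)$ give $\mathbb{E}\|\nabla F_i(\vx;\xi)-\nabla F_i(\vx^\star;\xi)\|^2\le 2L(f(\vx)-f^\star)$, using $\nabla f(\vx^\star)=0$. The second piece is at most $\sigma_\star^2$, because $\nabla f_i(\vx^\star)=\nabla f(\vx^\star)=\mathbf 0$ in the homogeneous case. Hence the last term is at most $4L\eta^2(f(\vx_i)-f^\star)+2\eta^2\sigma_\star^2$. With $\eta\le \frac{1}{24L}$ this leaves a descent term of at least $\eta(f(\vx_i)-f^\star)$.

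Third, average over $i$, take full expectation, telescope over $r=0,\dots,R$, and use that all nodes start at $\vx^{(0)}$:
\begin{align*}
\frac{1}{R+1}\sum_{r=0}^R \frac1n\sum_i\big(\mathbb{E}f(\vx_i^{(r)})-f^\star\big) \le \frac{r_0}{\eta(R+1)} + 2\eta\sigma_\star^2 .
\end{align*}
By convexity, $f(\bar{\vx}^{(r)})\le \frac1n\sum_i f(\vx_i^{(r)})$, which gives the same bound for the quantity in the statement. Finally, choose $\eta=\min\{\frac{1}{24L},\sqrt{r_0/(\sigma_\star^2 R)}\}$, as in Lemma 17 of \citet{koloskova2020unified}. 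This yields $\mathcal{O}(\sqrt{\sigma_\star^2 r_0/R}+Lr_0/R)$.

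The only delicate point is the gossip contraction. Here I need the Frobenius-norm nonexpansiveness of $\mW$ applied to $\mY-\mX^\star$, not to $\mY-\bar{\mY}$; this holds because $\|\mW\|_\text{op}=1$ for a symmetric doubly stochastic $\mW$. The second point to check is that $\sigma_\star$ controls the variance at $\vx^\star$ with $\nabla f_i(\vx^\star)=0$, which relies on homogeneity. Neither step involves $p$, which is why the bound survives even for a disconnected topology.
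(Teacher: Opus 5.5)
Your proposal is correct and follows essentially the same route as the paper. Both arguments use nonexpansiveness of $\mW$ on $\mX-\mX^\star$ (via $\|\mW\|_\text{op}=1$ and $\mX^\star\mW=\mX^\star$), a per-node SGD descent step with the noise measured at the common optimum $\vx^\star$, telescoping, the stepsize tuning of Lemma 17 in \citet{koloskova2020unified}, and finally Jensen's inequality $f(\bar{\vx})\le\frac{1}{n}\sum_i f(\vx_i)$. The only difference is cosmetic: you bound the raw second moment $\mathbb{E}\|\nabla F_i(\vx_i;\xi_i)\|^2$ with a two-term split, whereas the paper separates the deterministic step (handled by co-coercivity) from the noise term and then uses a three-term split. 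Both versions rely, as you note, on convexity of $f$ and of $F_i(\cdot,\xi)$, which the paper also uses implicitly even though the lemma statement does not list it.
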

\begin{proof}
We have
\begin{align*}
    \mathbb{E}_r \left\| \mX^{(r+1)} - \mX^\star \right\|^2_F
    &= \mathbb{E}_r \left\| \mX^{(r)} \mW - \eta \nabla F (\mX^{(r)} ; \xi^{(r)}) \mW - \mX^\star \right\|^2_F \\
    &\leq \left\| \mW \right\|^2_\text{op} \mathbb{E}_r \left\| \mX^{(r)} - \eta \nabla F (\mX^{(r)} ; \xi^{(r)}) - \mX^\star \right\|^2_F \\
    &= \mathbb{E}_r \left\| \mX^{(r)} - \eta \nabla F (\mX^{(r)} ; \xi^{(r)}) - \mX^\star \right\|^2_F \\
    &= \underbrace{\left\| \mX^{(r)} - \eta \nabla f (\mX^{(r)}) - \mX^\star \right\|^2_F}_{T_1} + \eta^2 \underbrace{\mathbb{E}_r \left\| \nabla F(\mX^{(r)} ; \xi^{(r)}) - \nabla f (\mX^{(r)}) \right\|^2_F}_{T_2},
\end{align*}
where we use $\| \mW \|_\text{op} = 1$ in the second equality.
$T_1$ is bounded from above as follows:
\begin{align*}
    T_1 
    &= \left\| \mX^{(r)} - \mX^\star \right\|^2_F
    - 2 \eta \left\langle \mX^{(r)} - \mX^\star, \nabla f(\mX^{(r)}) \right\rangle
    + \eta^2 \left\| \nabla f(\mX^{(r)}) \right\|^2_F \\
    &\leq \left\| \mX^{(r)} - \mX^\star \right\|^2_F
    + (2 \eta - L \eta^2) \left\langle \mX^\star - \mX^{(r)}, \nabla f(\mX^{(r)}) \right\rangle.
\end{align*}
Furthermore, using $\langle \vx^\star - \vx, \nabla f (\vx) \rangle \leq 0$ and $\eta \leq \tfrac{1}{L}$, we have
\begin{align*}
    T_1 
    &\leq \left\| \mX^{(r)} - \mX^\star \right\|^2_F
    - \eta \sum_{i=1}^n \left( f (\vx_i^{(r)}) - f^\star \right).
\end{align*}
$T_2$ is bounded from above as follows:
\begin{align*}
    T_2 
    &\leq 3 \mathbb{E}_r \left\| \nabla F(\mX^{(r)} ; \xi^{(r)}) - \nabla F (\mX^\star ; \xi^{(r)}) \right\|^2_F
    + 3 \left\| \nabla f (\mX^{(r)}) \right\|^2_F
    + 3 \mathbb{E}_r \left\| \nabla F (\mX^\star ; \xi^{(r)}) \right\|^2_F \\
    &\leq 12 L \sum_{i=1}^n \left( f (\vx_i^{(r)}) - f^\star \right) + 3 n \sigma_\star^2.
\end{align*}
Combining the above two inequalities, we obtain
\begin{align*}
    \mathbb{E}_r \left\| \mX^{(r+1)} - \mX^\star \right\|^2_F
    &\leq \left\| \mX^{(r)} - \mX^\star \right\|^2_F
    + 3 n \sigma_\star^2 \eta^2
    - ( \eta - 12 L \eta^2) \sum_{i=1}^n \left( f (\vx_i^{(r)}) - f^\star \right).
\end{align*}
Using $\eta \leq \tfrac{1}{24L}$, we can obtain
\begin{align*}
    \mathbb{E}_r \left\| \mX^{(r+1)} - \mX^\star \right\|^2_F
    &\leq \left\| \mX^{(r)} - \mX^\star \right\|^2_F
    + 3 n \sigma_\star^2 \eta^2
    - \frac{\eta}{2} \sum_{i=1}^n \left( f (\vx_i^{(r)}) - f^\star \right).
\end{align*}
Thus, we obtain
\begin{align*}
    \frac{2}{n} \sum_{i=1}^n \left( \mathbb{E} f (\vx_i^{(r)}) - f^\star \right)
    &\leq \frac{\mathbb{E} \left\| \mX^{(r)} - \mX^\star \right\|^2_F - \mathbb{E} \left\| \mX^{(r+1)} - \mX^\star \right\|^2_F}{n \eta}
    + 3 \sigma_\star^2 \eta.
\end{align*}
\begin{align*}
    \frac{2}{n (R+1)} \sum_{r=0}^R \sum_{i=1}^n \left( \mathbb{E} f (\vx_i^{(r)}) - f^\star \right)
    &\leq \frac{\left\| \vx^{(0)} - \vx^\star \right\|^2}{\eta (R+1)}
    + 3 \sigma_\star^2 \eta.
\end{align*}
Tuning the stepsize as in Lemma 17 in \citep{koloskova2020unified}, we obtain
\begin{align*}
    \frac{2}{n (R+1)} \sum_{r=0}^R \sum_{i=1}^n \left( \mathbb{E} f (\vx_i^{(r)}) - f^\star \right)
    &\leq \mathcal{O} \left( \sqrt{\frac{\sigma_\star^2 \left\| \vx^{(0)} - \vx^\star \right\|^2}{R}}
    + \frac{L \left\| \vx^{(0)} - \vx^\star \right\|^2}{R} \right)
\end{align*}
Finally, using $f (\bar{\vx}) \leq \tfrac{1}{n} \sum_{i=1}^n f (\vx_i)$, we can conclude the statement.
\end{proof}

\begin{lemma}
Suppose that \cref{assumption:convex,assumption:stochastic_noise,assumption:smoothness,,assumption:graph} hold, $f_i = f_j$ for all $i$ and $j$, and $\{ \vx_i^{(0)} \}_{i=1}^n$ are initialized to the same value. Then, there exists $\eta$ such that $\tfrac{1}{R+1} \sum_{r=0}^{R} ( \mathbb{E} f(\bar{\vx}^{(r)}) - f (\vx^\star))$ is bounded from above by
\begin{align*}
    \mathcal{O}\left( 
    \min\left\{ \sqrt{\frac{r_0 \sigma^2_\star}{n R}} 
    + \left( \left( \frac{1}{n} \sum_{i=2}^n \frac{\lambda_i^2}{1 - \lambda_i^2} \right) \frac{L r_0^2 \sigma^2_\star}{R^2} \right)^\frac{1}{3} 
    \!\!\! + \frac{L r_0}{R p}, 
    \;\;\; \sqrt{\frac{r_0 \sigma^2_\star}{R}}
    + \frac{L r_0}{R} \right\} \right),
\end{align*}
where $\bar{\vx}^{(r)} \coloneqq \tfrac{1}{n} \sum_{i=1}^n \vx_i^{(r)}$ and $r_0 \coloneqq \| \bar{\vx}^{(0)} - \vx^\star \|^2$.
\end{lemma}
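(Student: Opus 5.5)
The bound is a minimum of two expressions, so the plan is to establish each one separately and then take the minimum. This works because the existence claim lets us pick, for given problem constants, whichever stepsize gives the smaller right-hand side.

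\emph{The second expression.} This is exactly \cref{lemma:no_linear_speedup_rate}, which needs only $f_i=f_j$, \cref{assumption:graph,assumption:smoothness,assumption:stochastic_noise}, and a common initialization. Since all nodes start at $\bar{\vx}^{(0)}$, its $r_0=\|\vx^{(0)}-\vx^\star\|^2$ coincides with $r_0=\|\bar{\vx}^{(0)}-\vx^\star\|^2$ in the statement. It therefore gives $\mathcal{O}(\sqrt{r_0\sigma_\star^2/R}+Lr_0/R)$ for some $\eta\le\tfrac{1}{24L}$.

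\emph{The first expression.} This comes from the convex case of \cref{theorem:ours}, that is, the convex-case lemma proved above with $\mu=0$. We specialize it to the homogeneous setting.
\begin{itemize}
\item If $f_i=f$ for all $i$, then $\nabla f_i(\vx^\star)=\nabla f(\vx^\star)=\mathbf{0}$. Hence \cref{assumption:heterogeneity_convex} holds with $\zeta_\star=0$.
\item With $\zeta_\star=0$, the term $\tfrac{(1-p)\zeta_\star^2}{p^2}$ vanishes.
\item What remains is $\sqrt{r_0\sigma_\star^2/(nR)}+\bigl(\tfrac1n\sum_{i\ge2}\tfrac{\lambda_i^2}{1-\lambda_i^2}\cdot\tfrac{L r_0^2\sigma_\star^2}{R^2}\bigr)^{1/3}+\tfrac{Lr_0}{Rp}$, which is exactly the first branch.
\end{itemize}
One check is needed: in the convex case, \cref{assumption:heterogeneity_convex} was used only through the bound $\|\nabla F(\mX^\star)\|_F^2\le n\zeta_\star^2$ in \cref{lemma:consensus_convex}. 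That bound is trivially zero here, so nothing else in the chain breaks.

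\emph{Combining.} Let $A$ be the first expression and $B$ the second. If $A\le B$, choose the stepsize from the convex-case lemma; otherwise choose the one from \cref{lemma:no_linear_speedup_rate}. In either case the averaged suboptimality is $\mathcal{O}(\min\{A,B\})$.

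The only delicate point is the case $p=0$ (disconnected graph), where $A$ is infinite or undefined and the convex-case lemma does not apply. The second branch still holds there, since \cref{lemma:no_linear_speedup_rate} never uses $p$, so the minimum should be read as $B$.

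Beyond that, the argument is bookkeeping. The main obstacle is verifying that the constant in \cref{lemma:no_linear_speedup_rate} is compatible, in particular that its use of $f(\bar{\vx})\le\tfrac1n\sum_i f(\vx_i)$ (Jensen's inequality, which needs convexity from \cref{assumption:convex}) matches the quantity in the statement. It does, since both bound $\tfrac{1}{R+1}\sum_r(\mathbb{E}f(\bar{\vx}^{(r)})-f^\star)$.
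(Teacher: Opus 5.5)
Your proposal is correct and matches the paper's proof. The paper's argument is the one line that the statement follows from the convex case of \cref{theorem:ours} (where $f_i=f$ gives $\zeta_\star=0$) together with \cref{lemma:no_linear_speedup_rate}, using whichever stepsize gives the smaller bound; your $p=0$ caveat is not needed, since \cref{assumption:graph} is among the hypotheses and forces $p\in(0,1]$ by \cref{lemma:spectral_gap}.
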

\begin{proof}
The statement follows from \cref{theorem:ours,lemma:no_linear_speedup_rate}.
\end{proof}

\newpage
\section{Derivation of Transient Iterations}
\label{sec:derivation_of_transient_iterations}

\begin{table}[h]
    \caption{Comparison between $\tfrac{1}{p}$ and $\tfrac{1}{n} \sum_{i=2}^n \frac{\lambda_i^2}{1 - \lambda_i^2}$. We numerically estimate $\tfrac{1}{n} \sum_{i=2}^n \frac{\lambda_i^2}{1 - \lambda_i^2}$ from \cref{fig:comparision_between_spectral_gap_and_average_spectral_gap}. $\frac{1}{p}$ has been derived by \citet{nedic2018network}.}
    \centering
    \begin{tabular}{lccc}
    \toprule
    & Ring & Torus & Hypercube \\
    \midrule
    $\frac{1}{p}$ & $\mathcal{O} (n^2)$ & $\mathcal{O} (n)$ & $\mathcal{O} (\log (n))$ \\
    $\frac{1}{n} \sum_{i=2}^n \frac{\lambda_i^2}{1 - \lambda_i^2}$ & $\mathcal{O} (n)$ & $\mathcal{O} (\log (n))$ & $\mathcal{O} (1)$ \\
    \bottomrule
    \end{tabular}
    \label{tab:comparision_between_spectral_gap_and_average_spectral_gap}
\end{table}

\paragraph{Transient Iterations of \cref{theorem:ours}:}
From \cref{theorem:ours}, when $\zeta=0$, Decentralized SGD can achieve the following convergence rate:
\begin{align*}
     \frac{1}{R+1} \sum_{r=0}^R \left\| \nabla f(\bar{\vx}^{(r)}) \right\|^2
    \leq \mathcal{O} \left( \sqrt{\frac{L \sigma^2 F_0}{n R}}
    + \left( \left( \frac{\sigma^2}{n} \sum_{i=2}^n \frac{\lambda_i^2}{1 - \lambda_i^2}\right) \frac{L^2 F_0^2}{R^2} \right)^\frac{1}{3}
    + \frac{L F_0}{Rp}\right).
\end{align*}
Thus, to satisfy $\frac{1}{R+1} \sum_{r=0}^R \left\| \nabla f(\bar{\vx}^{(r)}) \right\|^2 \leq \mathcal{O} ( \sqrt{\frac{L \sigma^2 F_0}{n R}} )$, it requires
\begin{align*}
    R \geq \max \left\{ n^3 \left( \frac{1}{n} \sum_{i=2}^n \frac{\lambda_i^2}{1 - \lambda_i^2} \right)^2, \frac{n}{p^2} \right\} \frac{L F_0}{\sigma^2}.
\end{align*}
Using \cref{tab:comparision_between_spectral_gap_and_average_spectral_gap}, we can derive the results shown in \cref{table:transient_iteration}.

\paragraph{Transient Iterations of \cref{proposition:prior}:}
From \cref{proposition:prior}, when $\zeta=0$, Decentralized SGD can achieve the following convergence rate:
\begin{align*}
     \frac{1}{R+1} \sum_{r=0}^R \left\| \nabla f(\bar{\vx}^{(r)}) \right\|^2
    \leq \mathcal{O} \left( \sqrt{\frac{L \sigma^2 F_0}{n R}}
    + \left( \left( \frac{(1 - p) \sigma^2}{p} \right) \frac{L^2 F_0^2}{R^2} \right)^\frac{1}{3}
    + \frac{L F_0}{Rp}\right).
\end{align*}
Thus, to satisfy $\frac{1}{R+1} \sum_{r=0}^R \left\| \nabla f(\bar{\vx}^{(r)}) \right\|^2 \leq \mathcal{O} ( \sqrt{\frac{L \sigma^2 F_0}{n R}} )$, it requires
\begin{align*}
    R \geq \max \left\{ \frac{(1 - p)^2  n^3}{p^2}, \frac{n}{p^2} \right\} \frac{L F_0}{\sigma^2}.
\end{align*}
Using \cref{tab:comparision_between_spectral_gap_and_average_spectral_gap}, we can derive the results shown in \cref{table:transient_iteration}.

\newpage
\section{Experimental Setting}
\label{sec:detailed_setting}
In this section, we explain in detail how to create the graphs used in \cref{sec:experiments_effect_of_average_spectral_gap}.
In \cref{sec:experiments_effect_of_average_spectral_gap}, we generated the topologies as follows:

\begin{enumerate}
    \item Let $\mW$ be the mixing matrix of a base graph, such as the ring and torus.
    \item We factor $\mW$ as $\mW = \mV \Lambda \mV^{-1}$ where $\mV$ is the $n \times n$ matrix whose $i$-th column is the eigenvector $\vv_i$ of $\mW$, and $\Lambda$ is the $n \times n$ diagonal matrix whose diagonal elements are the corresponding eigenvalues, $\Lambda_{ii} = \lambda_i$. Without the loss of generality, we assume that $| \lambda_2 | \geq |\lambda_j|$ for all $j \geq 2$.
    \item We define $\Tilde{\Lambda}$ as follows: \begin{align*}
        \Tilde{\Lambda} = \begin{pmatrix}
            \lambda_1   \\
                      & \lambda_2 & \\
                      & & \lambda   \\
                      & & & \ddots   \\
                      &    &   &     & \lambda
        \end{pmatrix},
    \end{align*}
    where $\lambda \in [-|\lambda_2|, |\lambda_2|]$ is the hyperparameter. 
    \item We generate a graph that has the mixing matrix of $\mV \Tilde{\Lambda} \mV^{-1}$.
\end{enumerate}

$\mV \Tilde{\Lambda} \mV^{-1}$ has the same spectral gap $p$ as $\mW$, while we can change $\tfrac{1}{n} \sum_{i=2}^n \frac{\lambda_i^2}{1 - \lambda_i^2}$ by varying $\lambda$.
Note that the inverse matrix $\mV^{-1}$ does not always exist, but we numerically confirmed that the inverse matrix exists for the line, ring, and torus with $n=200$.

\section{Addition Numerical Results}

In \cref{sec:experiments_effect_of_average_spectral_gap}, we used MNIST as the training dataset.
The following figure shows the results with CIFAR-10, and we can observe the same trend as in MNIST.

\begin{figure}[h]
\begin{subfigure}{\linewidth}
    \vskip - 0.1 in
    \centering
    \includegraphics[height=3.75cm]{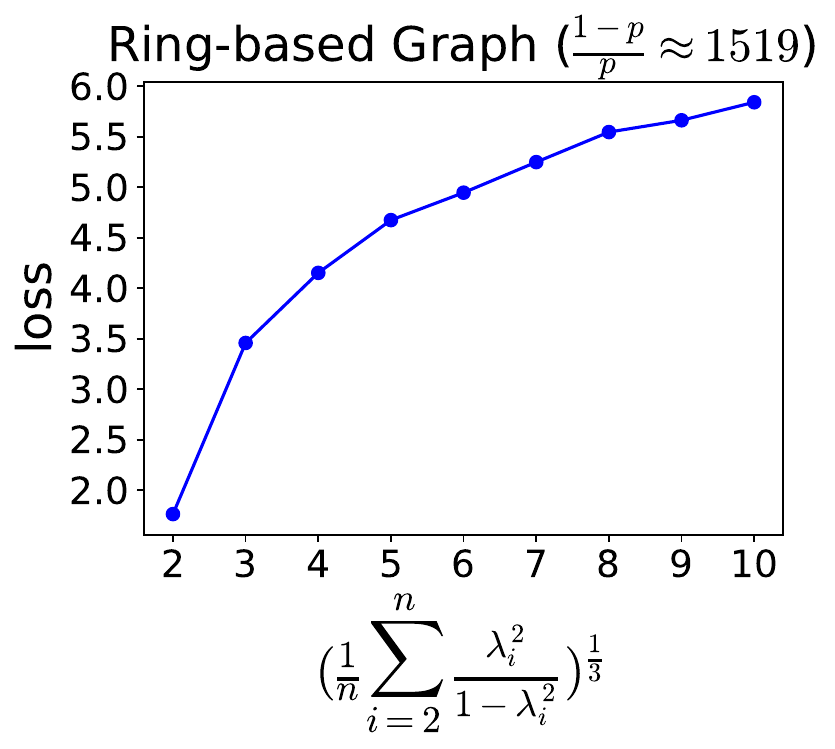}
    \includegraphics[height=3.75cm]{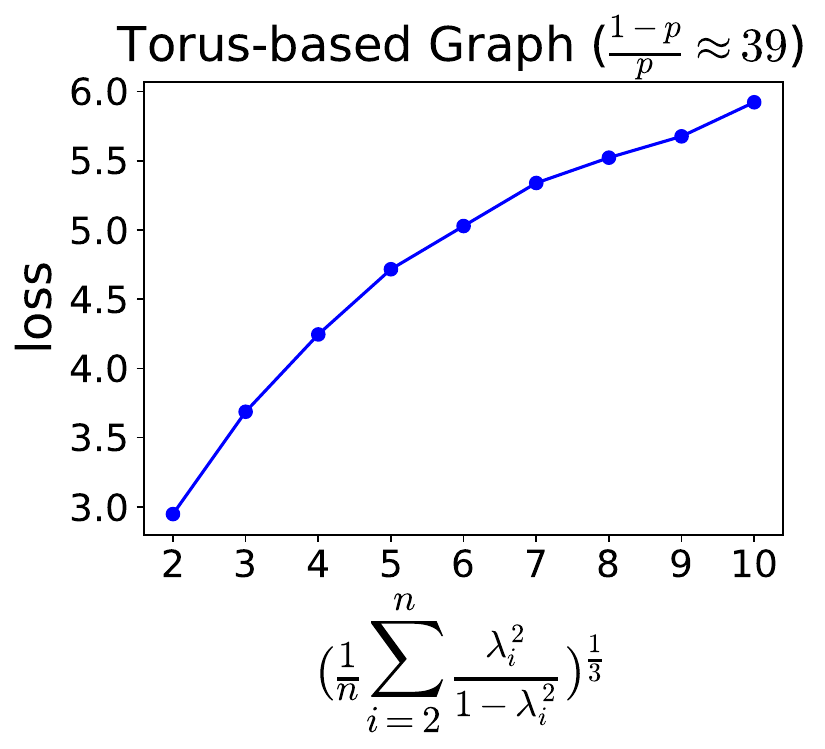}
    \caption{Logistic Regression}
\end{subfigure}
\caption{The impact of $\tfrac{1}{n} \sum_{i=2}^n (\nicefrac{\lambda_i^2}{1 - \lambda_i^2})$ on the loss value at the final parameter with CIFAR-10. In each figure, we vary the eigenvalues to construct different topologies while keeping $p$ constant. We observe that the loss value increases as $\tfrac{1}{n} \sum_{i=2}^n (\nicefrac{\lambda_i^2}{1 - \lambda_i^2})$ increases, which is consistent with \cref{theorem:ours,proposition:ours_iid}. Error bars are omitted since all standard errors were smaller than $1.0 \times 10^{-5}$ and visually indistinguishable.}
\vskip -0.1 in
\end{figure}

%%%%%%%%%%%%%%%%%%%%%%%%%%%%%%%%%%%%%%%%%%%%%%%%%%%%%%%%%%%%%%%%%%%%%%%%%%%%%%%
%%%%%%%%%%%%%%%%%%%%%%%%%%%%%%%%%%%%%%%%%%%%%%%%%%%%%%%%%%%%%%%%%%%%%%%%%%%%%%%

\end{document}

%% file: math_commands.tex
\usepackage{amsmath,amsfonts,bm}

\def\eqref#1{equation~\ref{#1}}
\def\1{\bm{1}}

\def\vv{{\bm{v}}}

\def\vx{{\bm{x}}}

\def\mI{{\bm{I}}}

\def\mM{{\bm{M}}}

\def\mP{{\bm{P}}}

\def\mV{{\bm{V}}}
\def\mW{{\bm{W}}}
\def\mX{{\bm{X}}}

\DeclareMathAlphabet{\mathsfit}{\encodingdefault}{\sfdefault}{m}{sl}
\SetMathAlphabet{\mathsfit}{bold}{\encodingdefault}{\sfdefault}{bx}{n}

\DeclareMathOperator*{\argmin}{arg\,min}

\DeclareMathOperator{\Tr}{Tr}